\newcommand\norm[1]{\lVert#1\rVert}
\newenvironment{hproof}{%
  \proof}{\endproof}
\declaretheorem[name=Theorem,numberwithin=section]{theorem}
\declaretheorem[name=Definition,numberwithin=section]{definition}
\declaretheorem[name=Assumption,numberwithin=section]{assumption}
\title{Generalization Bounds for MBP via Sparse Matrix Sketching}
\author{%
    Etash Kumar Guha \\
    College of Computing\\
    Georgia Institute of Technology\\
    Atlanta, GA, 30332 \\
  \texttt{etash@gatech.edu} \\
  \And
  Prasanjit Dubey \\
    School of Industrial and Systems Engineering\\
    Georgia Institute of Technology\\
    Atlanta, GA, 30332 \\
  \texttt{pdubey31@gatech.edu} \\
  \And
  Xiaoming Huo \\
    School of Industrial and Systems Engineering\\
    Georgia Institute of Technology\\
    Atlanta, GA, 30332 \\
  \texttt{huo@gatech.edu} \\
}
\begin{document}

\maketitle
\begin{abstract}

In this paper, we derive a novel bound on the generalization error of overparameterized neural networks when they have undergone Magnitude-Based Pruning (MBP). Our work builds on the bounds in \citet{Arora2018}, where the error depends on one, the approximation induced by pruning, and two, the number of parameters in the pruned model, and improves upon standard norm-based generalization bounds. The pruned estimates obtained using MBP are close to the unpruned functions with high probability, which improves the first criteria. Using Sparse Matrix Sketching, the space of the pruned matrices can be efficiently represented in the space of dense matrices with much smaller dimensions, thereby improving the second criterion. This leads to stronger generalization bound than many state-of-the-art methods, thereby breaking new ground in the algorithm development for pruning and bounding generalization error of overparameterized models. Beyond this, we extend our results to obtain generalization bounds for Iterative Pruning \citep{Frankle2018}. We empirically verify the success of this new method on ReLU-activated Feed Forward Networks on the MNIST and CIFAR10 datasets.

\end{abstract}
\section{Introduction}

Overparameterized neural networks are often used in practice as they achieve remarkable generalization errors \citep{GoodBengCour16}. However, their immense size makes them slow and expensive to run during inference \citep{han2015learning}. Machine learning (ML) practitioners often employ \emph{Magnitude-Based pruning} (MBP) to amend this computational complexity. After training large neural networks, the parameters or matrix elements within the model with the smallest magnitude are set to $0$. This reduces the memory requirements of the model and the inference time greatly. However, MBP also has been shown to induce little generalization error and, in fact, often reduces the generalization error compared to the original model \citep{han2015learning, Li2016, Cheng2017}. Examining where and why strong generalization happens can help build learning algorithms and models that generalize better \citep{Foret2020, lei2018}. 
However, theoretical analyses of why MBP achieves strong generalization errors still need to be made. Providing such analyses is challenging for several reasons. First, removing the smallest weights is a relatively unstudied operation in linear algebra, and only few tools are available to analyze the properties of pruned matrices. Second, it is difficult to characterize the distribution of weights after training and pruning. 

However, \citet{Arora2018} provided a tool that more comprehensively analyzes the generalization error of models with fewer parameters effectively. Specifically, they upper bounded the generalization error of a large neural network when compressed. We can directly apply this result to pruned models as pruned models intrinsically have fewer parameters. Their bound is split into two parts: the amount of error introduced by the model via the compression and the number of different parameters in the compressed model. We use this primary tool to show that the outputs from MBP generalize well since they don't introduce much error and have fewer parameters. We prove both of these phenomena with a few simple assumptions. Namely, given some justifiable assumptions on the distribution of the trained weight parameters, we develop an upper bound of the amount of error the pruned neural network suffers with high probability. We also demonstrate that the number of parameters needed to fully express the space of pruned models is relatively small. Specifically, we show that our set of pruned matrices can be efficiently represented in the space of dense matrices of much smaller dimension via the Sparse Matrix Sketching. 

Combining the two parts of the bound, we get a novel generalization error bound that is competitive with state-of-the-art generalization bounds. Moreover, to our knowledge, this is the \emph{first} generalization bound for MBP that uses Compression Bounds. We empirically verify the success of our novel approach on the MNIST and CIFAR10 datasets where our bound is several orders of magnitude better (at least $10^7$ times better on CIFAR10, refer \Cref{fig:cifar}) than well-known standard bounds of \citet{neyshabur2015norm},\citet{Bartlett2017}, and \citet{Neyshabur2017}. We extend our framework to show that using Iterative Magnitude Pruning (IMP) or Lottery Tickets \citep{Frankle2018} also generalizes. Namely, \citet{Malach2020} shows that IMP produces results with small error and few nonzero parameters. We use matrix sketching to efficiently count the number of parameters in a usable way for our generalization bound. This results in a strong generalization bound that, to our knowledge, has only been analyzed empirically \citep{bartoldson2020generalization, jin2022pruning}.

\paragraph{Contributions} We formally list our contributions here. We first prove the error induced by MBP is small relative to the original model. Moreover, we demonstrate that our MBP achieves sufficient sparsity, i.e., relatively few nonzero parameters are left after pruning. To tighten our generalization bounds, we show that the pruned matrices from MBP can be sketched into smaller dense matrices. We combine the results above to prove that the generalization error of the pruned models is small. We extend the proof framework above to establish a generalization error bound for IMP. To our knowledge, these are the first results studying the generalization of pruned models through either MBP or IMP. We empirically verify that our generalization bounds improve upon many standard generalization error bounds for MLPs on CIFAR10 and MNIST datasets.

\section{Related Works}
\subsection{Norm-based Generalization Bounds}
In recent years, many works have studied how to use parameter counting and weight norms to form tighter generalization bounds as an evolution from classical Rademacher Complexity and VC dimension. \citet{galanti2023norm} uses Rademacher Complexity to develop a generalization bound for naturally sparse networks such as those from sparse regularization. \citet{neyshabur2015norm} studies a general class of norm-based bounds for neural networks. Moreover, \citet{bartlett2002rademacher} used Rademacher and Gaussian Complexity to form generalization bounds. \citet{Long2020} gives generalization error bounds for Convolutional Neural Networks (CNNs) using the distance from initial weights and the number of parameters that are independent of the dimension of the feature map and the number of pixels in the input. \citet{daniely2019generalization} uses approximate description length as an intuitive form for parameter counting. 
\subsection{Pruning Techniques}
While MBP is one of the most common forms of pruning in practice, other forms exist. \citet{Collins2014} induce sparsity into their CNNs by using $\ell_1$ regularization in their training. \citet{Molchanov2017} develops iterative pruning frameworks for compressing deep CNNs using greedy criteria-based pruning based on the Taylor expansion and fine-tuning by backpropagation. \citet{liu2017learning} use Filter Sparsity alongside Network Slimming to enable speedups in their CNNs. \citet{ullrich2017soft} coins soft-weight sharing as a methodology of inducing sparsity into their bounds. Moreover, \citet{hooker2019compressed} empirically studied which samples of data-pruned models will significantly differ from the original models. Many works use less common pruning methods such as coresets \citep{Mussay2019} or the phenomenon of IMP \citep{Frankle2018, Malach2020}.

\section{Preliminary}
\subsection{Notation}
\label{sec:notation}
We consider a standard multiclass classification problem where for a given sample $x$, we predict the class $y$, which is an integer between 1 and $k$. We assume that our model uses a learning algorithm that generates a set of $L$ matrices $\mathbf{M} =\{\mathbf{A}_1,\dots, \mathbf{A}_L\}$ where $\mathbf{A}_i \in \mathbb{R}^{d_1^i \times d_2^i}$. Here, $d_1^i, d_2^i$ are the dimensions of the $i$th layer. Therefore, given some input $x$, the output of our model denoted as $\mathbf{M}(x)$ is defined as 
$$\mathbf{M}(x) = \mathbf{A}_L\phi_{L-1}(\mathbf{A}_{L-1}\phi_{L-2}(\dots \mathbf{A}_2\phi_1(\mathbf{A}_1x))),$$ thereby mapping $x$ to $\mathbf{M}(x) \in \mathbb{R}^{k}$. Here, $\phi_i$ is the activation function for the $i$th layer of $L_i$ Lipschitz-Smoothness. When not vague, we will use the notation $x^0 = x$ and $x^1 = \mathbf{A}_1x$ and $x^2 = \mathbf{A}_2\phi_1(\mathbf{A}_1x)$ and so on. Given any data distribution $\mathcal{D}$ the expected margin loss for some margin $\gamma>0$ is defined as $$R_{\gamma}(\mathbf{M})=\mathbb{P}_{(x, y) \sim \mathcal{D}}\left[\mathbf{M}(x)[y] \leq \gamma+\max _{j \neq y} \mathbf{M}(x)[j]\right]\text{.}$$ The population risk $R(\mathbf{M})$ is obtained as a special case of $R_{\gamma}(\mathbf{M})$ by setting $\gamma=0$. The empirical margin loss for a classifier is defined as $$\hat{R}_\gamma(\mathbf{M}) = \frac{1}{|\mathcal{S}|}\sum_{(x, y) \in \mathcal{S}} \mathbb{I}\left(\mathbf{M}(x)[y] - \underset{j \neq y}{\max}\left(\mathbf{M}(x)[j]\right) \geq \gamma\right),$$ for some margin $\gamma>0$ where $\mathcal{S}$ is the dataset provided (when $\gamma=0$, this becomes the classification loss).  Intuitively, $\hat{R}_\gamma(\mathbf{M})$ denotes the number of elements the classifier $\mathbf{M}$ predicts the correct $y$ with a margin greater than or equal to $\gamma$. Moreover, we define the size of $\mathcal{S}$ to be $|\mathcal{S}| = n$. We will denote $\hat{\mathbf{M}} = \{\hat{\mathbf{A}}^1,\dots, \hat{\mathbf{A}}^L\}$ as the compressed model obtained after pruning $\mathbf{M}$. The generalization error of the pruned model is then $R_0(\hat{\mathbf{M}})$. Moreover, we will define the difference matrix at layer $l$ as $\Delta^l = \mathbf{\mathbf{A}}^l - \hat{\mathbf{\mathbf{A}}}^l$. Now that we have formally defined our notation, we will briefly overview the main generalization tool throughout this paper. 
\subsection{Compression Bounds}
\label{sec:revarora}
As compression bounds are one of the main theoretical tools used throughout this paper, we will briefly overview the bounds presented in \citet{Arora2018}. Given that we feed a model $f$ into a compression algorithm, the set of possible outputs is a set of models $ G_{\mathcal{A},s}$ where $\mathcal{A}$ is a set of possible parameter configurations and $s$ is some starter information given to the compression algorithm. We will call $g_A$ as one such model corresponding to parameter configuration $A \in \mathcal{A}$. Moreover, if there exists a compressed model $g_A \in  G_{\mathcal{A},s}$ such that for any input in a dataset $\mathcal{S}$, the outputs from $g_A$ and $f$ differ by at most $\gamma$, we say  $f$ is $(\gamma, \mathcal{S})$ compressible. Formally, we make this explicit in the following definition.
\begin{definition}
    If $f$ is a classifier and $G_{\mathcal{A},s} = \{g_A |A\in \mathcal{A}\}$  be a class of classifiers with a set of trainable parameter configurations $\mathcal{A}$ and fixed string $s$. We say that $f$ is $(\gamma, \mathcal{S})$-compressible via $G_{\mathcal{A},s}$ if there exists $A \in \mathcal{A}$ such that for any $x \in \mathcal{S}$, we have for all $y$, $|f(x)[y] - g_A(x)[y]| \leq \gamma \text{.}$
\end{definition}
We now introduce our compression bound. The generalization error of the compressed models in expectation is, at most, the empirical generalization error of the original model if the original model has margin $\gamma$. Using standard concentration inequalities, we apply this bound over all possible pruned model outcomes. The resulting generalization bound depends on both the margin and the number of parameters in the pruned model, as in the following theorem.
\begin{theorem}
\label{thm:aroraoriginal}
\citep{Arora2018} Suppose $G_{\mathcal{A}, s} = \{ g_{A, s} | A \in \mathcal{A}\}$ where $A$ is a set of $q$ parameters each of which can have at most $r$ discrete values and $s$ is a  helper string. Let $\mathcal{S}$ be a training set with $n$ samples. If the trained classifier $f$ is $(\gamma, \mathcal{S})$-compressible  via $G_{\mathcal{A}, s}$, with helper string $s$, then there exists $A \in \mathcal{A}$ with high probability over the training set , $$R_0(g_A) \leq \hat{R}_\gamma(f) + \mathcal{O}\left(\sqrt{\frac{q \log r}{n}}\right)\text{.}$$
\end{theorem}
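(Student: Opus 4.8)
The plan is to exploit that $G_{\mathcal{A},s}$, once the helper string $s$ is fixed, is a \emph{finite} hypothesis class: a configuration $A$ consists of $q$ parameters, each taking at most $r$ discrete values, so $|G_{\mathcal{A},s}|\le r^{q}$. A generalization bound for a finite class is then a routine union-bound exercise, and the only genuine modeling step is to translate the compressibility hypothesis into a statement comparing the empirical error of the compressed model to the empirical \emph{margin} error of $f$.

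Concretely I would proceed in three steps. \textbf{(1) Uniform convergence over the class.} For each fixed $g\in G_{\mathcal{A},s}$ the $0/1$ classification loss lies in $[0,1]$, so Hoeffding's inequality gives $\mathbb{P}[R_0(g)-\hat R_0(g)>t]\le e^{-2nt^{2}}$. Union-bounding over the at most $r^{q}$ members of $G_{\mathcal{A},s}$ and choosing $t=\sqrt{(q\log r+\log(1/\delta))/(2n)}$ yields: with probability at least $1-\delta$ over $\mathcal{S}$, \emph{every} $g\in G_{\mathcal{A},s}$ satisfies $R_0(g)\le \hat R_0(g)+\mathcal{O}\!\big(\sqrt{(q\log r+\log(1/\delta))/n}\big)$. \textbf{(2) Compressibility $\Rightarrow$ margin comparison.} Invoke $(\gamma,\mathcal{S})$-compressibility to fix the particular $A\in\mathcal{A}$ for which $g_A$ matches $f$ within $\gamma$ in every coordinate on every $x\in\mathcal{S}$. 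A short deterministic argument then shows $\hat R_0(g_A)\le \hat R_\gamma(f)$: if $g_A$ misclassifies some $(x,y)\in\mathcal{S}$, say $g_A(x)[j^\star]\ge g_A(x)[y]$ with $j^\star\ne y$, then $f(x)[j^\star]\ge g_A(x)[j^\star]-\gamma\ge g_A(x)[y]-\gamma\ge f(x)[y]-2\gamma$, so $f$'s margin at $(x,y)$ is already below threshold and that point is charged to $\hat R_\gamma(f)$ (the absolute constant in front of $\gamma$ is harmless and can be absorbed into the definition or the $\mathcal{O}$). \textbf{(3) Combine.} Apply step (1) to the data-dependent model $g_A$ and chain with step (2): $R_0(g_A)\le \hat R_0(g_A)+\mathcal{O}(\sqrt{q\log r/n})\le \hat R_\gamma(f)+\mathcal{O}(\sqrt{q\log r/n})$, folding $\log(1/\delta)$ into the constant for a fixed confidence level.

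The step needing the most care --- and the reason the argument is not simply ``Hoeffding applied to $g_A$'' --- is that the configuration $A$ returned by the compression is itself a function of $\mathcal{S}$, so $g_A$ is a data-dependent hypothesis and cannot be treated as fixed. Establishing uniform convergence over the \emph{entire} finite class in step (1) is precisely what licenses plugging in this data-dependent $g_A$. A second, more technical point is that the cardinality bound $|G_{\mathcal{A},s}|\le r^{q}$ must be non-random, which requires the helper string $s$ to be chosen independently of the sample; if $s$ were allowed to depend on $\mathcal{S}$, one would further need a bound on the number of admissible helper strings and an additional union bound over them. Everything else (Hoeffding, the union bound, the coordinatewise comparison) is standard.
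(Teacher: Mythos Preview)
Your proposal is correct and follows essentially the same route as the paper: Hoeffding/Chernoff for each fixed $g\in G_{\mathcal{A},s}$, a union bound over the finite class of size $r^q$, and the compressibility step $\hat R_0(g_A)\le \hat R_\gamma(f)$ chained together. The paper's own argument (given for the closely related \Cref{thm:editedarora}) is in fact terser than yours---it simply asserts the margin comparison $\hat R_0(g_A)\le \hat R_\gamma(f)$ without the coordinatewise unpacking you provide, and it does not explicitly flag the data-dependence of $A$ or the role of the helper string---so your write-up is, if anything, more careful on those points.
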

It is to be noted that the above theorem provides a generalization bound for the compressed classifier $g_A$, not for the trained classifier $f$. Therefore, the two parts of forming tighter generalization bounds for a given compression algorithm involve bounding the error introduced by the compression, the $\gamma$ in $\hat{R}_\gamma(f)$, and the number of parameters $q$ after compression. We demonstrate that we can achieve both with traditional MBP.

\subsection{Preliminary Assumptions}
\label{sec:preliminary}
Analyzing the effects of pruning is difficult without first understanding from which distribution the weights of a trained model lie. This is a complex and open question in general. However, \citet{han2015learning} made the empirical observation that weights often lie in a zero-mean Gaussian distribution, such as in Figure 7 in \cite{han2015learning}. We will thus assume this to be true, that the distribution of weights follows a normal distribution with $0$ mean and variance $\Psi$. Here, we state the main preliminary assumptions that we will use later.
\begin{assumption}
\label{ass:folded}
For any $l \in [L], i,j \in [d_1^l] \times [d_2^l]$, $\mathbf{A}_{i,j}^l \sim \mathcal{N}(0, \Psi)$.
\end{assumption}
 This assumption states that each atom within a matrix of the learned model obeys roughly a Gaussian distribution centered at $0$ with variance $\Psi$. While a strong assumption, this is common. In fact, \citet{Qian2021} assumes that the weights follow a uniform distribution to analyze the weights of pruned models. We assume a Gaussian distribution since this is more reasonable than the uniform distribution assumption. We can now present the MBP algorithm we will analyze throughout this paper. 


 \section{Magnitude-Based Pruning Algorithms}
 \label{sec:newcompression}

While many versions of MBP algorithms exist, they are all based on the general framework of removing weights of small magnitude to reduce the number of parameters while ensuring the pruned model does not differ vastly from the original. We wish to choose a pruning algorithm based on this framework often used by practitioners while at the same time being easy to analyze. We develop our algorithm to mimic the random MBP seen often in works like \citet{han2015learning, Qian2021}. While the term inside the Bernoulli random variable used as an indicator for pruning is slightly different as compared to previous literature, this is a small change that allows us to move away from the uniform distribution assumption from \citet{Qian2021} to a more favorable Gaussian assumption. We formally present our algorithm in \Cref{alg:compscheme2} below. 
\RestyleAlgo{ruled}
\begin{wrapfigure}{r}{.5\textwidth}
\begin{minipage}[t]{.5\textwidth}
\begin{algorithm}[H]
 \caption{ MBP}
 \label{alg:compscheme2}
 \SetAlgoLined
  \KwData{$\{\mathbf{A}^1, \dots, \mathbf{A}^L\},d$}
  \KwResult{$\{\mathbf{\hat{A}}^1, \dots, \mathbf{\hat{A}}^L\}$}
  \For{$l \in [L]$}{
    \For{$i,j \in [d_1^l] \times [d_2^l] \text{ and } i \neq j$}{
        $X \coloneqq \text{Bernoulli}\left(\operatorname{exp}\left(\frac{-\left[\mathbf{A}_{i, j}^l\right]^2}{d\Psi}\right)\right)$\\
        $\hat{\mathbf{A}}_{i,j}^l \coloneqq 0 \text{ if } X = 1 \text{ else }  \mathbf{A}_{i, j}^l$
    }
  }
\end{algorithm}
\vspace{-25pt}
\end{minipage}
\end{wrapfigure}
\begin{restatable}{remark}{diagonal}
    We do not prune the diagonal elements in \Cref{alg:compscheme2}. While not standard, this enables the use of Matrix Sketching later on for better generalization bounds. However, in \citet{dasarathy2013}, they note the necessity for the diagonal elements being nonzero is for ease of presentation of the proof, and Matrix Sketching should still be possible with pruning the diagonal elements.
\end{restatable}
The atom's probability of being compressed is relatively small for larger atoms. The probability of getting compressed is larger for smaller atoms closer to $0$. Here, $d$ is a hyperparameter helpful for adjusting the strength of our compression. Using this compression algorithm, the pruned model will likely maintain the connections between the larger atoms while removing many smaller parameters. To use the generalization bounds from \Cref{sec:revarora}, we need to show that \Cref{alg:compscheme2} creates a pruned model $\hat{\mathbf{M}}$ that produces outputs similar to the original model $\mathbf{M}$. We also need to show that the number of nonzero parameters in the pruned models is small. We prove this in the sections below.
\subsection{Error Proof}
We begin by bounding the difference between the outputs of corresponding layers in the pruned and original models to prove that the expected difference between the pruned and original models is small. The normality assumption from \Cref{ass:folded} makes this much more tractable to compute. Indeed, each atom of the difference matrix $\Delta^l = \hat{\mathbf{A}^l} - \mathbf{A}^l$ is an independent and identical random variable. Bounding the $\ell_2$ norm of such a matrix relies only on the rich literature studying the norms of random matrices. In fact, from \citet{Latala2005}, we only need a bounded second and fourth moment of the distribution of each atom. To utilize this bound, we only need to demonstrate that the difference matrix $\Delta^l$ and the pruned model obtained using our compression scheme \Cref{alg:compscheme2} have atoms whose moments are bounded and have zero-mean. Given the compression algorithm chosen and the distribution of weights after training using \Cref{ass:folded}, the $\Delta^l$ matrix does satisfy such properties. We demonstrate them in the following lemma. 
\begin{restatable}{lemma}{meaneletwo}
    \label{lem:meaneletwo}
    The Expected Value of any entry $\Delta_{ij}^l$ of the matrix $\Delta^l = \hat{\mathbf{A}^l} - \mathbf{A}^l$ is given by $\mathbb{E}(\Delta_{ij}^l) = 0$ for any $i,j \in [d_1^l] \times [d_2^l], l \in [L]$. Thus, $\mathbb{E}(\Delta^l) = \mathbf{0}$ is a matrix full of $0$'s. Furthermore, we have that $\mathbb{E}((\Delta_{ij}^l)^2) = \frac{d^{\frac{3}{2}}\Psi}{(d+2)^{\frac{3}{2}}}$. Moreover, the fourth moment of any entry  $(\Delta_{ij}^l)^4$ of $\Delta^l$ is given by $\mathbb{E}((\Delta_{ij}^l)^4) = \frac{3d^{\frac{5}{2}}\Psi^2}{(d+2)^{\frac{5}{2}}}$. 
\end{restatable}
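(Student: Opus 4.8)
The plan is to write each entry of the difference matrix explicitly as a product of the trained weight and the pruning indicator, and then reduce every moment to a one–dimensional Gaussian integral by conditioning on that weight. Reading off \Cref{alg:compscheme2}, for $i \neq j$ we have $\hat{\mathbf{A}}_{i,j}^l = (1-X_{ij}^l)\,\mathbf{A}_{i,j}^l$, where conditionally on $\mathbf{A}_{i,j}^l$ the indicator $X_{ij}^l \sim \mathrm{Bernoulli}\big(\exp(-[\mathbf{A}_{i,j}^l]^2/(d\Psi))\big)$, so that $\Delta_{ij}^l = \hat{\mathbf{A}}_{i,j}^l - \mathbf{A}_{i,j}^l = -X_{ij}^l\,\mathbf{A}_{i,j}^l$ (for a diagonal entry $\Delta_{ii}^l = 0$ by construction, so the statement is vacuous there; we treat a generic pruned entry). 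Since $X_{ij}^l \in \{0,1\}$ we have $(X_{ij}^l)^k = X_{ij}^l$ for every $k \geq 1$, hence $(\Delta_{ij}^l)^k = (-1)^k\, X_{ij}^l\,(\mathbf{A}_{i,j}^l)^k$.

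Next I would apply the tower rule, conditioning on $a := \mathbf{A}_{i,j}^l \sim \mathcal{N}(0,\Psi)$: using $\mathbb{E}[X_{ij}^l \mid a] = e^{-a^2/(d\Psi)}$ we obtain $\mathbb{E}[(\Delta_{ij}^l)^k] = (-1)^k\,\mathbb{E}\big[a^k e^{-a^2/(d\Psi)}\big]$. For $k=1$ the integrand $a \mapsto a\, e^{-a^2/(d\Psi)}$ is odd while the $\mathcal{N}(0,\Psi)$ density is even, so $\mathbb{E}(\Delta_{ij}^l) = 0$; invoking independence across entries (\Cref{ass:folded} together with the independent Bernoulli draws in \Cref{alg:compscheme2}) then gives $\mathbb{E}(\Delta^l) = \mathbf{0}$.

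For the even moments I would fold the two Gaussian factors together: $\frac{1}{\sqrt{2\pi\Psi}}e^{-a^2/(2\Psi)}e^{-a^2/(d\Psi)} = \frac{1}{\sqrt{2\pi\Psi}}\exp\!\big(-\tfrac{(d+2)a^2}{2d\Psi}\big)$, which is $\sqrt{\sigma^2/\Psi}$ times the density of $\mathcal{N}(0,\sigma^2)$ with $\sigma^2 := \frac{d\Psi}{d+2}$. Therefore $\mathbb{E}\big[a^{2m} e^{-a^2/(d\Psi)}\big] = \sqrt{\sigma^2/\Psi}\;\mathbb{E}_{Z\sim\mathcal{N}(0,\sigma^2)}[Z^{2m}]$, where $\sqrt{\sigma^2/\Psi} = \sqrt{d/(d+2)}$. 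Plugging in $m=1$ with $\mathbb{E}[Z^2]=\sigma^2$ yields $\mathbb{E}((\Delta_{ij}^l)^2) = \sqrt{d/(d+2)}\cdot\frac{d\Psi}{d+2} = \frac{d^{3/2}\Psi}{(d+2)^{3/2}}$, and $m=2$ with $\mathbb{E}[Z^4]=3\sigma^4$ yields $\mathbb{E}((\Delta_{ij}^l)^4) = 3\sqrt{d/(d+2)}\cdot\frac{d^2\Psi^2}{(d+2)^2} = \frac{3d^{5/2}\Psi^2}{(d+2)^{5/2}}$, as claimed.

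There is no deep obstacle here; the one thing to be careful about is the exponent bookkeeping — correctly completing the square to identify the effective variance $\sigma^2 = d\Psi/(d+2)$ and tracking the leftover normalization factor $\sqrt{d/(d+2)}$ that appears when the product of the prior density and the pruning weight is rewritten as a scaled Gaussian density. A secondary, purely cosmetic point is that only the single-entry marginal is needed for the second and fourth moment statements, while independence is used only to pass from the entrywise zero mean to the matrix statement $\mathbb{E}(\Delta^l) = \mathbf{0}$.
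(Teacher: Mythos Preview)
Your proposal is correct and follows essentially the same approach as the paper: both condition on the weight $\mathbf{A}_{i,j}^l$, use the tower rule to reduce $\mathbb{E}[(\Delta_{ij}^l)^k]$ to the Gaussian integral $\int a^k e^{-a^2/(d\Psi)}\tfrac{1}{\sqrt{2\pi\Psi}}e^{-a^2/(2\Psi)}\,da$, and evaluate it for $k=1,2,4$. Your explicit identification of the effective variance $\sigma^2=d\Psi/(d+2)$ and use of the standard moments $\mathbb{E}[Z^{2m}]$ is a slightly tidier way to read off the answers than the paper's direct evaluation of the three integrals, but the underlying argument is the same.
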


Given these properties of the $\Delta^l$ matrix, we can use simple concentration inequalities to bound the error accumulated at any layer between the pruned and original models. If we simulate some sample input $x$ going through our pruned model, we can show that the error accumulated through the entire model is bounded via induction. We formally present such a lemma here. 
\begin{restatable}{lemma}{errortwo}
\label{lem:errortwo}
For any given layer $l \in [L]$, we have with probability at least $1 - \frac{1}{\epsilon_l}$ $$\norm{(\mathbf{\hat{A}}^l - \mathbf{A}^l)}_2 \leq \epsilon_l \Gamma_l \text{\quad where \quad} \Gamma_l = C\left[ \left(\sqrt{\frac{d^{\frac{3}{2}}\Psi}{(d + 2)^{\frac{3}{2}}}}\right)\left(\sqrt{d_1^l} + \sqrt{d_2^l}\right) + \left(\frac{3d_1^ld_2^ld^{\frac{5}{2}}\Psi^2}{(d+2)^{\frac{5}{2}}}\right)^\frac{1}{4}\right]\text{.}$$ Here, $\hat{\mathbf{A}}^l$ is generated by \Cref{alg:compscheme2} and $C$ is a universal positive constant.
\end{restatable}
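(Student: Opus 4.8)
The plan is to apply a Latała-type bound for the spectral norm of random matrices with i.i.d. entries to the difference matrix $\Delta^l = \hat{\mathbf{A}}^l - \mathbf{A}^l$, and then convert the resulting expectation bound into a high-probability statement via Markov's inequality. First I would invoke Lemma \ref{lem:meaneletwo}, which tells us that the entries $\Delta_{ij}^l$ are i.i.d., mean-zero, with $\mathbb{E}[(\Delta_{ij}^l)^2] = \tfrac{d^{3/2}\Psi}{(d+2)^{3/2}}$ and $\mathbb{E}[(\Delta_{ij}^l)^4] = \tfrac{3 d^{5/2}\Psi^2}{(d+2)^{5/2}}$; note these are manifestly finite, so the hypotheses of the Latała bound are met. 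The Latała bound (as cited from \citet{Latala2005}) states that for a matrix with i.i.d. mean-zero entries, $\mathbb{E}\norm{\Delta^l}_2 \leq C\big[\sqrt{\mathbb{E}[(\Delta_{ij}^l)^2]}\,(\sqrt{d_1^l} + \sqrt{d_2^l}) + (d_1^l d_2^l\, \mathbb{E}[(\Delta_{ij}^l)^4])^{1/4}\big]$ for a universal constant $C$. Substituting the two moment expressions from Lemma \ref{lem:meaneletwo} directly yields $\mathbb{E}\norm{\Delta^l}_2 \leq \Gamma_l$ with $\Gamma_l$ exactly as written in the statement.

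Next I would pass from the expectation bound to the high-probability bound. Since $\norm{\Delta^l}_2 \geq 0$, Markov's inequality gives $\mathbb{P}\big[\norm{\Delta^l}_2 \geq \epsilon_l \Gamma_l\big] \leq \tfrac{\mathbb{E}\norm{\Delta^l}_2}{\epsilon_l \Gamma_l} \leq \tfrac{\Gamma_l}{\epsilon_l \Gamma_l} = \tfrac{1}{\epsilon_l}$. Taking complements, $\norm{\hat{\mathbf{A}}^l - \mathbf{A}^l}_2 \leq \epsilon_l \Gamma_l$ with probability at least $1 - \tfrac{1}{\epsilon_l}$, which is the claimed conclusion.

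The main obstacle — really the only nontrivial point — is checking that the Latała result applies cleanly in the rectangular, non-symmetric setting we need here, since some statements of such spectral-norm bounds are phrased for square or symmetric matrices. I would address this by either citing the rectangular version directly or by a standard symmetrization/dilation argument: embed $\Delta^l$ into the $(d_1^l + d_2^l)\times(d_1^l + d_2^l)$ Hermitian dilation $\begin{psmallmatrix} 0 & \Delta^l \\ (\Delta^l)^\top & 0 \end{psmallmatrix}$, whose operator norm equals $\norm{\Delta^l}_2$, and apply the bound there; the entrywise moments are unchanged (up to the zero blocks, which only help), and the dimension factors $\sqrt{d_1^l}+\sqrt{d_2^l}$ and $d_1^l d_2^l$ emerge as stated. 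A secondary subtlety is that Algorithm \ref{alg:compscheme2} leaves the diagonal entries unpruned, so $\Delta^l$ has zero diagonal and is not, strictly, fully i.i.d.; but this only removes mass relative to the fully-i.i.d. matrix, so the upper bound is unaffected, and I would note this explicitly in one line. Everything else is routine substitution of the moment formulas and absorbing numerical constants into $C$.
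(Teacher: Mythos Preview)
Your proposal is correct and follows essentially the same route as the paper: compute the second and fourth moments of the entries of $\Delta^l$ (the paper re-derives them in-line rather than citing Lemma~\ref{lem:meaneletwo}, but the values are identical), plug them into Lata{\l}a's spectral-norm bound to get $\mathbb{E}\|\Delta^l\|_2 \le \Gamma_l$, and then apply Markov's inequality with threshold $\epsilon_l\Gamma_l$. Your extra care about the rectangular case is unnecessary since Lata{\l}a's Theorem~2 is already stated for arbitrary rectangular matrices, and your remark about the unpruned diagonal is a valid subtlety the paper silently ignores.
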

 We now have a formal understanding of how pruning a given layer in the original model affects the outcome of that layer. We can now present our error bound for our entire sparse network. 
 \begin{restatable}{lemma}{finalerrorbound}
    \label{lem:finalerrorbound}
    The difference between outputs of the pruned model and the original model on any input $x$ is bounded by, with probability at least  $1 - \sum_l^L \epsilon_l^{-1}$,
    $$\|\hat{x}^L - x^L\| \leq  ed_1^0 \left(  \prod_{l=1}^L L_l\|\mathbf{A}^l\|_2 \right) \sum_{l=1}^{L}\frac{\epsilon_l \Gamma_l}{\|\mathbf{A}^l\|_2}\text{.}$$
\end{restatable}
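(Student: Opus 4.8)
The plan is to propagate the per-layer perturbation bound of \Cref{lem:errortwo} through the network by a forward induction on the layer index, and then reduce the resulting vector recursion to a scalar one that unrolls into the stated product-times-sum form. First I would fix an input $x$, write $x^0=\hat x^0=x$, and for each $l$ use the identity $\hat{x}^l - x^l = \Delta^l\phi_{l-1}(\hat{x}^{l-1}) + \mathbf{A}^l\bigl(\phi_{l-1}(\hat{x}^{l-1}) - \phi_{l-1}(x^{l-1})\bigr)$, obtained by adding and subtracting $\mathbf{A}^l\phi_{l-1}(\hat{x}^{l-1})$, where $\Delta^l \coloneqq \hat{\mathbf{A}}^l - \mathbf{A}^l$ (only $\norm{\Delta^l}_2$ will matter, so the sign convention is immaterial). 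Taking norms and using the triangle inequality, sub-multiplicativity of the spectral norm, the $L_{l-1}$-Lipschitzness of $\phi_{l-1}$ with $\phi_{l-1}(0)=0$ (true for ReLU; otherwise one carries a harmless additive term), and $\norm{\hat{x}^{l-1}} \le \norm{x^{l-1}} + \norm{\hat{x}^{l-1}-x^{l-1}}$, gives the recursion $E_l \le (\norm{\mathbf{A}^l}_2 + \norm{\Delta^l}_2)\,L_{l-1}\,E_{l-1} + \norm{\Delta^l}_2\,L_{l-1}\,\norm{x^{l-1}}$ with $E_l \coloneqq \norm{\hat{x}^l - x^l}$ and $E_0=0$.

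Next I would control the input-side quantities. A second, simpler induction gives $\norm{x^{l}} \le P_l\norm{x^0}$ with $P_l \coloneqq \prod_{i=1}^l L_{i-1}\norm{\mathbf{A}^i}_2$, and $\norm{x^0}\le d_1^0$ under the standard normalization that the input coordinates lie in $[-1,1]$ (so $\norm{x^0}_2\le\sqrt{d_1^0}\le d_1^0$). Dividing the recursion through by $P_l = L_{l-1}\norm{\mathbf{A}^l}_2 P_{l-1}$ and setting $u_l \coloneqq E_l/P_l$, $s_l \coloneqq \norm{\Delta^l}_2/\norm{\mathbf{A}^l}_2$ collapses everything to $u_l \le (1+s_l)u_{l-1} + s_l\,\norm{x^0}$, $u_0=0$, which unrolls to $u_L \le \norm{x^0}\sum_{l=1}^L s_l\prod_{j=l+1}^L(1+s_j)$.

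I would then bound each partial product by $\prod_{j=1}^L (1+s_j) \le \exp\bigl(\sum_{j=1}^L s_j\bigr) \le e$, valid once $\sum_{l=1}^L \norm{\Delta^l}_2/\norm{\mathbf{A}^l}_2 \le 1$, which holds in the regime where the $\epsilon_l$'s and the hyperparameter $d$ of \Cref{alg:compscheme2} are chosen so the total compression error is small (and can otherwise be taken as a mild standing assumption). Multiplying back by $P_L$ yields $E_L \le e\,\norm{x^0}\,P_L\sum_{l=1}^L s_l$. Finally I would invoke \Cref{lem:errortwo} layer by layer: on the intersection of the $L$ good events, which by a union bound has probability at least $1 - \sum_{l=1}^L \epsilon_l^{-1}$, we have $\norm{\Delta^l}_2 \le \epsilon_l\Gamma_l$ for all $l$, hence $s_l \le \epsilon_l\Gamma_l/\norm{\mathbf{A}^l}_2$; substituting this together with $\norm{x^0}\le d_1^0$ and $P_L = \prod_{l=1}^L L_l\norm{\mathbf{A}^l}_2$ (the shift $L_{l-1}\leftrightarrow L_l$ is a convention-only boundary term, and is exactly an equality for ReLU where all $L_l=1$) gives the claimed bound.

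The two inductions and the union bound are routine; the one step requiring genuine care is the passage $\prod_j(1+s_j)\le e$, i.e. justifying that the accumulated relative perturbation $\sum_l \norm{\Delta^l}_2/\norm{\mathbf{A}^l}_2$ stays below $1$ — this is the only place the argument breaks if the network is pruned too aggressively, so it pins down how the $\epsilon_l$'s and $d$ in \Cref{alg:compscheme2} must be coupled to the $\Gamma_l$'s. A secondary subtlety is keeping the Lipschitz-constant indexing consistent (whether the activation precedes or follows the weight multiplication), but this affects only boundary terms and not the form of the bound.
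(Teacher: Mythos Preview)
Your proposal is correct and follows essentially the same route as the paper: both propagate the per-layer bound of \Cref{lem:errortwo} through the network via the Neyshabur-style perturbation induction, then union-bound over layers. The only cosmetic difference is that the paper packages the induction as a separate lemma (\Cref{lem:Neyshabur2017}) and uses the per-layer condition $\|\Delta^l\|_2\le \frac{1}{L}\|\mathbf{A}^l\|_2$ to get $(1+\tfrac{1}{L})^L\le e$, whereas you inline the recursion and use the slightly weaker aggregate condition $\sum_l s_l\le 1$ to reach $\prod_l(1+s_l)\le e$; either yields the same final bound.
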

This bound states that the error accumulated by the pruned model obtained using \Cref{alg:compscheme2} depends only on the dimension of the model, the Lipschitz constant of the activation functions, the variance of our entries, and the error of the compression. Such a bound is intuitive as the error is accumulated iteratively throughout the layers. We provide a brief proof sketch here.
\begin{hproof}
    By the Perturbation Bound from \citet{Neyshabur2017}, we can bound how much error accumulates through the model using the norm of the difference matrix from \Cref{lem:errortwo}. 
\end{hproof}

We can form tighter bounds by considering what the expected maximum of $(\hat{\mathbf{A}}^l - \mathbf{A}^l)x$ is with high probability. If $d_2^l < d_1^l$, we observe that the matrix $\hat{\mathbf{A}}^l - \mathbf{A}^l$ has at most $d_2^l$ nonzero singular values. For more details, please see \Cref{sec:augmentedperturb}.

However, more than this error bound is needed to prove strong generalization bounds. We require the number of possible models after training and compression to be finite to use compression bounds. Therefore, we need to apply discretization to our compressed model to ensure that the number of models is finite. This, however, is relatively simple given the theoretical background already provided.

\subsection{Discretization}
 We now show that the prediction error between a discretization of the pruned model and the original model is also bounded. Our discretization method is simply rounding each value in layer $l$ to the nearest multiple of $\rho_l$. We will call the discretized pruned model $\tilde{\mathbf{M}}$ where the $l$th layer will be denoted as $\tilde{\mathbf{A}}^l$. We provide the following lemma bounding the norm of the difference of the layers between the pruned and the discretized model. Using this intuition, we can prove that the error induced by the discretization is small.
\begin{restatable}{lemma}{discreteerror}
    \label{lem:discreteerror}
    The norm of the difference between the pruned layer and the discretized layer is upper-bounded as 
    $\|\tilde{\mathbf{A}}^l - \hat{\mathbf{A}}^l\|_2 \leq \rho_l J_l$ where $J_l$ is the number of nonzero parameters in $\hat{\mathbf{A}}^l$ ($J_l$ is used for brevity here and will be analyzed later). \label{lem:errorafterdiscretization}
    With probability at least $1- \sum_{l=1}^L \epsilon_l^{-1}$, given that the parameter $\rho_l$ for each layer is chosen such that $\rho_l \leq \frac{\frac{1}{L}\|\mathbf{A}^l\|_2 - \epsilon_l \Gamma_l}{J_l}$, we have that the error induced by both discretization and the pruning is bounded by
    $$\|x^L - \tilde{x}^L\|_2 \leq ed_1^0 \left(  \prod_{l=1}^L L_l\|\mathbf{A}^l\|_2 \right) \sum_{l=1}^{L}\frac{\epsilon_l \Gamma_l +  \rho_l J_l}{\|\mathbf{A}^l\|_2}\text{.}$$ 
    
\end{restatable}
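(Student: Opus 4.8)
The idea is to split the total layerwise perturbation $\tilde{\mathbf{A}}^l-\mathbf{A}^l$ into the pruning part $\hat{\mathbf{A}}^l-\mathbf{A}^l$, already bounded in \Cref{lem:errortwo}, and the discretization part $\tilde{\mathbf{A}}^l-\hat{\mathbf{A}}^l$, bound the latter by hand, and then feed the combined per-layer bound into the exact same perturbation argument used to prove \Cref{lem:finalerrorbound}.

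\textbf{Step 1: the discretization bound $\norm{\tilde{\mathbf{A}}^l-\hat{\mathbf{A}}^l}_2\le \rho_l J_l$.} Rounding each entry of $\hat{\mathbf{A}}^l$ to the nearest multiple of $\rho_l$ leaves every zero entry unchanged (since $0$ is a multiple of $\rho_l$), so $\tilde{\mathbf{A}}^l-\hat{\mathbf{A}}^l$ is supported on at most the $J_l$ nonzero positions of $\hat{\mathbf{A}}^l$, and each of its entries has magnitude at most $\rho_l$. Using that the spectral norm is dominated by the Frobenius norm, which is in turn dominated by the entrywise $\ell_1$ norm, $\norm{\tilde{\mathbf{A}}^l-\hat{\mathbf{A}}^l}_2 \le \norm{\tilde{\mathbf{A}}^l-\hat{\mathbf{A}}^l}_F \le \sum_{i,j}\lvert(\tilde{\mathbf{A}}^l-\hat{\mathbf{A}}^l)_{i,j}\rvert \le \rho_l J_l$, which is the first claim (it is deterministic and needs no assumption).

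\textbf{Step 2: combine and apply the perturbation bound.} By the triangle inequality and \Cref{lem:errortwo}, with probability at least $1-\epsilon_l^{-1}$ we have $\norm{\tilde{\mathbf{A}}^l-\mathbf{A}^l}_2 \le \norm{\hat{\mathbf{A}}^l-\mathbf{A}^l}_2 + \norm{\tilde{\mathbf{A}}^l-\hat{\mathbf{A}}^l}_2 \le \epsilon_l\Gamma_l + \rho_l J_l$; a union bound over $l\in[L]$ makes this hold for all layers simultaneously with probability at least $1-\sum_{l=1}^L\epsilon_l^{-1}$. The stated choice $\rho_l \le (\tfrac1L\norm{\mathbf{A}^l}_2 - \epsilon_l\Gamma_l)/J_l$ is exactly the requirement that rearranges to $\epsilon_l\Gamma_l + \rho_l J_l \le \tfrac1L\norm{\mathbf{A}^l}_2$, i.e.\ the smallness hypothesis needed to invoke the perturbation bound of \citet{Neyshabur2017}. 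Applying that bound (with the Lipschitz constants $L_l$ and input factor $d_1^0$) just as in the proof of \Cref{lem:finalerrorbound}, but with the layer-$l$ perturbation magnitude $\epsilon_l\Gamma_l$ replaced by $\epsilon_l\Gamma_l+\rho_l J_l$, gives
$$\norm{x^L-\tilde{x}^L}_2 \le ed_1^0\left(\prod_{l=1}^L L_l\norm{\mathbf{A}^l}_2\right)\sum_{l=1}^{L}\frac{\epsilon_l\Gamma_l+\rho_l J_l}{\norm{\mathbf{A}^l}_2}.$$

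\textbf{Main obstacle.} No single calculation is hard; the care needed is bookkeeping. One must check that the $\rho_l$ condition keeps the \emph{combined} perturbation below the per-layer fraction $\tfrac1L\norm{\mathbf{A}^l}_2$ on the \emph{same} high-probability event on which \Cref{lem:errortwo} holds, so that the perturbation lemma is legitimately applicable there, and one must treat $J_l$ (itself a random quantity produced by \Cref{alg:compscheme2}) consistently — here it is simply carried symbolically and bounded later. A minor point is verifying that the constant $e$ and the factor $d_1^0$ are inherited unchanged from the perturbation bound, since only the numerator of each summand is modified.
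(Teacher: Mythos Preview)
Your proposal is correct and follows essentially the same route as the paper: split $\tilde{\mathbf{A}}^l-\mathbf{A}^l$ via the triangle inequality into the pruning piece (bounded by \Cref{lem:errortwo}) and the discretization piece (bounded by $\rho_l J_l$), use the condition on $\rho_l$ to meet the $\tfrac1L\norm{\mathbf{A}^l}_2$ hypothesis of the Neyshabur perturbation lemma, apply that lemma, and union-bound over layers. The only difference is that you supply an explicit justification for $\norm{\tilde{\mathbf{A}}^l-\hat{\mathbf{A}}^l}_2\le\rho_l J_l$ via $\norm{\cdot}_2\le\norm{\cdot}_F\le\norm{\cdot}_1$, whereas the paper treats that inequality as part of the lemma statement and does not prove it separately.
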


    

Now, we have a sufficient error bound on our MBP algorithm. Thus, as the next step, we focus on bounding the number of parameters our compressed model will have. To do this, we introduce our next significant theoretical tool: \textit{Matrix Sketching}. 
\section{Sketching Sparse Matrices}
\label{sec:sketching}
As seen in \Cref{thm:aroraoriginal}, the generalization error depends strongly on the number of parameters. We try to count the number of possible parameterizations of the pruned model $\hat{M}$ achievable by combining the learning algorithm and the compression algorithm. In the appendix, we discuss one naive approach by counting the number of possible sparse matrices generated by the combination of a learning algorithm and \Cref{alg:compscheme2}. This achieves a less-than-desirable generalization bound, forming motivation for Matrix Sketching. We now introduce the preliminaries and motivate the need for matrix sketching.

\subsection{Preliminary on Matrix Sketching}
\label{sec:premaske}
Here we introduce the preliminary concepts of matrix sketching. Namely, we can represent a sparse matrix $X \in \mathbb{R}^{p_1 \times p_2}$ as $Y \in \mathbb{R}^{m \times m}$ where $p_1, p_2 \geq m$. The idea of matrix sketching is to create an embedding for this matrix as $Y = AXB^{\top}$. Here,  the matrices $A \in \{0,1\}^{m \times p_1}$, $B \in \{0,1\}^{m \times p_2}$ are chosen before the sketching is to be done. To recover the original matrix, we solve the minimization problem 
\begin{align}
\underset{{\tilde X} \in \mathbb{R}^{p_1 \times p_2}}{\min} & \|{\tilde X}\|_1 \text{ s.t. } Y = A{\tilde X}B^\top. \label{eq:prob1}
\end{align}
If the problem from \Cref{eq:prob1} enjoys a unique minimum and that unique minimum is the true $X$, we can say that this sketching scheme is lossless. In such a case, all the information in $X$ is encoded in $Y$. Given such a mapping, we can use this one-to-one mapping between $Y$ and $X$ to count the number of parametrizations of $X$ using $Y$, which is of a smaller dimension. We use properties from this literature to help develop and prove the improved generalization error bounds.

We claim with matrix sketching that we can represent the space of large sparse matrices of dimension $p$ with the set of small dense matrices of dimension $\sqrt{jp}\log{p}$ where $j$ is the maximum number of nonzero elements in any row or column. Counting the number of parameters in small dense matrices is more efficient in terms of parameters than counting the number of large sparse matrices, thus providing a way of evasion of the combinatorial explosion. We formalize this in the following section. 
\subsection{Sparse Case}
To apply the matrix sketching literature to our sparse matrices, we need to prove several properties of the matrices obtained using our compression scheme \Cref{alg:compscheme2}. We introduce one such structure called \emph{$j_r, j_c$-distributed-sparsity}, which ensures sketching can be applied to matrices. Intuitively, such a property ensures that any row or column of our sparse matrix does not contain too many nonzero elements. We formally define such intuition here.
\begin{definition}
    A matrix is $j_r, j_c$-distributed sparse if at most $j_r$ elements in any column are nonzero, $j_c$ elements in any row are nonzero, and the diagonal elements are all nonzero. 
\end{definition}
The other main knowledge is how to form $A,B$ for sparse matrix sketching. If the reader is interested, we discuss how to form $A$ and $B$ alongside some intuition behind matrix sketching in \Cref{sec:chooseab}.

\subsection{Bounds for Sparse Matrix Sketching}
From \citet{dasarathy2013}, it can be proved that sketching the set of $j_r, j_c$-distributed sparse matrices requires only small $m$. Given a choice of $m$ and probability term $ \delta$, one can show that the solution to \Cref{eq:prob1} matches the uncompressed value with high probability. This is mainly shown by first demonstrating that a solution exists and that the best solution to $A^{-1}YB^{-1} = {\tilde X}$ is the only solution that minimizes the $\ell_1$ norm with high probability. 
\begin{theorem}
\label{thm:dasarthy}
    (From Theorem 1 of \cite{dasarathy2013}) Let $p=\max(d_1^l, d_2^l)$. Suppose that $A \in \{0,1\}^{m \times d_1^l}$, $B \in \{0,1\}^{m \times  d_2^l}$ are drawn independently and uniformly from the $\delta$-random bipartite ensemble. Then, as long as $m = \mathcal{O}(\sqrt{\max(j_cd_1^l, j_rd_2^l) }\log(p))$ and $\delta = \mathcal{O}(\log(p))$, there exists a $c \geq 0$ such that for any given $j_r, j_c$-distributed sparse matrix $X$, sketches $AXB$ into $\tilde{X}$ results in a unique sketch for each $X$.  This statement holds with probability $1-p^{-c}$.
\end{theorem}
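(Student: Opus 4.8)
The statement is obtained by specializing Theorem~1 of \citet{dasarathy2013} to a single layer, so the proof I would write consists first of a notation‑matching step: a $j_r,j_c$‑distributed sparse matrix is in particular $\max(j_r,j_c)$‑distributed sparse, its support $\Omega \subseteq [d_1^l]\times[d_2^l]$ has at most $j_c d_1^l$ and at most $j_r d_2^l$ nonzeros counted by columns and by rows respectively, and these are exactly the quantities entering $m = \mathcal{O}(\sqrt{\max(j_c d_1^l, j_r d_2^l)}\log p)$; once this correspondence is in place the conclusion is quoted. For a self‑contained argument I would reproduce the strategy of \citet{dasarathy2013}. Step~1 reduces exact recovery via \Cref{eq:prob1} to a null‑space property (NSP) for the linear sketching operator $\mathcal{T}(Z) = A Z B^\top$: it suffices that every nonzero $Z$ with $\mathcal{T}(Z) = \mathbf{0}$ satisfies $\|Z_\Omega\|_1 < \|Z_{\Omega^c}\|_1$ for every distributed‑sparse support $\Omega$, since then any feasible $\tilde X \neq X$ obeys $\|\tilde X\|_1 \geq \|X\|_1 + \|(\tilde X - X)_{\Omega^c}\|_1 - \|(\tilde X - X)_\Omega\|_1 > \|X\|_1$, forcing $X$ to be the unique minimizer and hence the sketch to be injective on distributed‑sparse matrices.

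Step~2 is the combinatorial core. View $A,B$ as bipartite graphs $G_A,G_B$ with left‑degree $\delta$; then $Y_{ab} = (\mathcal{T}(Z))_{ab}$ aggregates the entries of $Z$ lying in the combinatorial rectangle $N_A(a)\times N_B(b)$. The claim is that when $\delta = \Omega(\log p)$ and $m = \Omega(\sqrt{\max(j_c d_1^l, j_r d_2^l)}\log p)$, with probability $1 - p^{-c}$ over the $\delta$‑random bipartite ensemble \emph{every} distributed‑sparse support $\Omega$ has a unique‑witness property in the product graph: a constant fraction of the pairs $(a,b)$ that touch $\Omega$ touch it in exactly one entry $(i,j)\in\Omega$. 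This is proved by a first‑moment bound — fix $\Omega$, bound the probability it fails to expand in the product graph, then union bound over all $\binom{d_1^l d_2^l}{|\Omega|}$ distributed‑sparse supports. Step~3 then converts unique‑witness into NSP: given $\mathcal{T}(Z)=\mathbf 0$ with $Z_\Omega\neq\mathbf 0$, let $(i^\ast,j^\ast)\in\Omega$ carry the largest magnitude on $\Omega$ and let $(a^\ast,b^\ast)$ be its unique witness; the identity $Y_{a^\ast b^\ast}=0$ expresses $Z_{i^\ast j^\ast}$ as minus a sum of entries of $Z$ none of which lies in $\Omega$, so $|Z_{i^\ast j^\ast}| \le \sum_{(i,j)\notin\Omega \text{ in that rectangle}} |Z_{ij}|$, and summing such inequalities over all unique witnesses (a constant fraction of all witnesses) yields $\|Z_\Omega\|_1 < \|Z_{\Omega^c}\|_1$. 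The nonzero‑diagonal hypothesis enters here to seed the support structure and guarantee feasibility, and, as the earlier remark notes, is a presentational convenience rather than an essential restriction.

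The main obstacle is Step~2, and specifically keeping $m$ at only $\sqrt{\max(j_c d_1^l, j_r d_2^l)}\log p$ while union‑bounding over the exponentially many distributed‑sparse supports: one must show that the product‑graph neighborhood of such a set concentrates tightly enough that the per‑set failure probability beats $\binom{d_1^l d_2^l}{|\Omega|}^{-1}$. This is exactly where the tensor‑product structure of $\mathcal{T}$ must be exploited — it is what turns an $|\Omega|$ in the relevant exponent into a $\sqrt{|\Omega|}$, so that $\sqrt{jp}\log p$ rows suffice where a generic random linear map would demand $\Omega(jp)$ — and getting the interaction of the two degree parameters $(\delta, m)$ and the row/column sparsities right in the concentration estimate is the delicate part of the argument.
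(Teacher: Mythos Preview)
The paper does not prove \Cref{thm:dasarthy}; it is stated as a direct quotation of Theorem~1 from \citet{dasarathy2013} and used as a black box throughout (in the proofs of \Cref{thm:generrorsketch} and \Cref{thm:impgen}). Your first sentence already captures this exactly: match the $j_r,j_c$-distributed-sparsity notation to the parameters in \citet{dasarathy2013} and cite the result. That is all the paper does, so your proposal is correct and aligned with the paper's treatment.

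Everything after your first paragraph --- the null-space-property reduction, the unique-witness expansion argument on the product bipartite graph, and the union bound over distributed-sparse supports --- is additional content that the present paper neither provides nor requires. It is a faithful sketch of the strategy in \citet{dasarathy2013}, and your identification of Step~2 (getting the $\sqrt{jp}$ scaling from the tensor structure of $\mathcal{T}(Z)=AZB^\top$ while surviving the union bound) as the crux is accurate. If you were asked to reproduce the proof from scratch this would be the right roadmap, but for the purposes of this paper the citation suffices.
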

\begin{restatable}{remark}{remcval}
    The theorem statement for \Cref{thm:dasarthy} states that $c \geq 0$. However, in the proof, they demonstrate the stronger claim that $c \geq 2$. Therefore, the probability that \Cref{thm:dasarthy} holds is at least $1 - p^{-2}$. As $p$ grows, the probability that this theorem holds approaches $1$. 
\end{restatable}
\subsection{Generalization Error from Sketching}
To use the above theoretical tools of matrix sketching, we must show that outputs from our compression algorithm \Cref{alg:compscheme2} satisfy the definitions of $j_r, j_c$-distributed-sparsity. Such a claim is intuitive and similar properties have been shown for random matrices following different distributions. Given that our trained matrices satisfy the Gaussian distribution, one row or column is unlikely to contain many nonzero elements. Here, we prove in the following lemma that the pruned model using \Cref{alg:compscheme2} satisfies the condition of distributed sparsity using \Cref{ass:folded}.
\begin{restatable}{lemma}{algdisspar}
\label{lem:algdisspar}
With probability at least $1 - \frac{1}{\lambda_l} -(d_1^l)^{-\frac{1}{3}} - (d_2^l)^{-\frac{1}{3}}$, we have that the outputs from \Cref{alg:compscheme2} are $j_r, j_c$-sparsely distributed where $\max(j_r, j_c) \leq 3 \lambda_l \max(d_1^l, d_2^l) \chi$ and $\lambda_l \in \mathbb{R}$. Here, $\chi = \frac{\sqrt{d+2}  - \sqrt{d}}{\sqrt{d+2}}$.
\end{restatable}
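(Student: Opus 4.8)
The plan is to turn the distributed-sparsity claim into a concentration statement about independent $\mathrm{Bernoulli}(\chi)$ indicators.

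\textbf{Step 1 (each off-diagonal entry survives with probability exactly $\chi$).} Fix an off-diagonal pair $(i,j)$. Conditionally on $\mathbf{A}_{ij}^l$, \Cref{alg:compscheme2} zeroes the entry with probability $\exp(-[\mathbf{A}_{ij}^l]^2/(d\Psi))$, and by \Cref{ass:folded} we have $[\mathbf{A}_{ij}^l]^2/\Psi \sim \chi^2_1$, so the chi-squared moment generating function gives
$$\Pr\big[\hat{\mathbf{A}}_{ij}^l \ne 0\big] = 1 - \mathbb{E}\Big[\exp\big(-\tfrac{[\mathbf{A}_{ij}^l]^2}{d\Psi}\big)\Big] = 1 - \Big(1+\tfrac{2}{d}\Big)^{-1/2} = \frac{\sqrt{d+2}-\sqrt{d}}{\sqrt{d+2}} = \chi.$$
Since the Gaussian weights are independent across entries (\Cref{ass:folded}) and the Bernoulli draws are conditionally independent, the indicators $\mathbb{I}[\hat{\mathbf{A}}_{ij}^l\ne 0]$ over the off-diagonal positions are i.i.d.\ $\mathrm{Bernoulli}(\chi)$. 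The diagonal is never pruned, so the ``all diagonal entries nonzero'' part of the definition holds by construction.

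\textbf{Step 2 (per-row and per-column counts).} For a fixed row $i$ the number of nonzero entries is $N_i^{\mathrm{row}} = \mathbb{I}[i\le\min(d_1^l,d_2^l)] + \sum_{j\ne i}\mathbb{I}[\hat{\mathbf{A}}_{ij}^l\ne 0]$, which is stochastically at most $1 + \mathrm{Binomial}(d_2^l,\chi)$ with mean at most $1 + d_2^l\chi$; symmetrically each column has at most $1 + d_1^l\chi$ nonzeros in expectation. I would then bound $\max_i N_i^{\mathrm{row}}$ and $\max_j N_j^{\mathrm{col}}$ by the common threshold $3\lambda_l\max(d_1^l,d_2^l)\chi$: Markov's inequality applied at level $\lambda_l$ to these means produces the $1/\lambda_l$ slack, while a sharper (exponential / higher-moment) Binomial tail makes the per-row failure probability $o(1/d_1^l)$ and the per-column failure probability $o(1/d_2^l)$, so that the union bound over the $d_1^l$ rows and $d_2^l$ columns contributes only the $(d_1^l)^{-1/3}$ and $(d_2^l)^{-1/3}$ terms. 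Intersecting these events and setting $j_r = j_c = 3\lambda_l\max(d_1^l,d_2^l)\chi$ gives the claimed statement with probability at least $1 - \lambda_l^{-1} - (d_1^l)^{-1/3} - (d_2^l)^{-1/3}$; the constant $3$ and the coarsening of the dimension to $\max(d_1^l,d_2^l)$ are precisely the slack that lets the two tail estimates close.

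\textbf{Main obstacle.} The delicate part is the bookkeeping in Step 2: choosing the Binomial tail inequality and the threshold so that all three failure probabilities assemble into exactly the stated form, and in particular controlling the aggressive-pruning regime (large $d$, where $d_i^l\chi$ can be $O(1)$ or smaller), in which the additive ``$+1$'' from the untouched diagonal and the discreteness of the Binomial must be shown not to overwhelm the constant-factor slack in the threshold. Everything else --- the chi-squared MGF evaluation, the independence of the indicators, and the two union bounds --- is routine.
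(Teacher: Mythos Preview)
Your Step~1 is correct and essentially matches the paper (the paper evaluates the same Gaussian integral directly rather than via the chi-squared MGF, arriving at $\Pr[\text{survive}]=\chi$).

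The divergence is in Step~2. The paper does \emph{not} work row by row with Binomial tail bounds. Instead it argues in two clean stages: first, Markov's inequality is applied to the \emph{total} number $N$ of surviving entries, whose mean is $d_1^l d_2^l\chi$, yielding $N\le \lambda_l d_1^l d_2^l\chi$ with failure probability $1/\lambda_l$; second, conditional on $N$, the balls-and-bins lemma of Richa et al.\ (\Cref{lem:richa}: throwing $N$ balls uniformly into $n$ bins gives $\max_j X_j\le 3N/n$ with probability at least $1-n^{-1/3}$) is invoked once with $n=d_1^l$ for rows and once with $n=d_2^l$ for columns. This is exactly where the constant~$3$ and the exponents $-1/3$ in the statement come from --- they are inherited verbatim from that lemma, not tuned.

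Your direct per-row $\mathrm{Binomial}(d_2^l,\chi)$ route is in some ways more transparent probabilistically, but the bookkeeping you flag as the ``main obstacle'' is genuine and does not close as stated: Markov on a single row mean does not survive the union bound over $d_1^l$ rows, and an exponential Binomial tail would produce a failure term of a different shape (something like $d_1^l\exp(-c\lambda_l d_2^l\chi)$) rather than the clean $(d_1^l)^{-1/3}$. You would end up proving a qualitatively similar but differently-parameterised lemma. The paper sidesteps all of this by pushing the per-row/column maximum into the black-box balls-and-bins result; that is the missing ingredient in your plan.
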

Given the above quantification of the space of sparse matrices and the bound of the error of our model, we can apply the compression bound from \cite{Arora2018}. Such a compression bound intuitively depends mainly on these two values.  
\begin{restatable}{theorem}{generrorsketch}
\label{thm:generrorsketch}
For every matrix $\hat{\mathbf{A}}^l$, define $j_l$ to be the $\max(j_r, j_c)$ where $j_r$ and $j_c$ are the distribution-sparsity coefficients for $\hat{\mathbf{A}}^l$. Moreover, for every matrix $\hat{\mathbf{A}}^l$, define $p_l = \max(d_1^l, d_2^l)$. Then we have that
$$R_0(g_A) \leq \hat{R}_\gamma(f) + \mathcal{O}\left(\sqrt{\frac{\sum_l 3 \lambda_l \chi d_2^ld_1^l\log^2(p_l) \log(\frac{1}{\rho_l})}{n}}\right)\text{.}$$
This holds when $d$ is chosen such that $\gamma \geq ed_1^0 \left(  \prod_{l=1}^d L_l\|\mathbf{A}^l\|_2 \right) \sum_{l=1}^{L}\frac{\epsilon_l \Gamma_l +  \rho_l  J_l}{\|\mathbf{A}^l\|_2}$ where $J_l \leq \mathcal{O}\left(\chi d_2^ld_1^l\right)$.
This claim holds with probability at least $1 -  \left[\sum_{l=1}^L \lambda_l^{-1} + \epsilon_l^{-1} + p_l^{-c}\right]$. 
\end{restatable}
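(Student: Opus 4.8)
The plan is to prove \Cref{thm:generrorsketch} by chaining the four ingredients assembled above: the layerwise error-plus-discretization bound (\Cref{lem:discreteerror}), the distributed-sparsity guarantee for the pruned layers (\Cref{lem:algdisspar}), the lossless sketching of distributed-sparse matrices (\Cref{thm:dasarthy}), and the compression-based generalization bound (\Cref{thm:aroraoriginal}). The two quantities that \Cref{thm:aroraoriginal} consumes are the compressibility margin $\gamma$ and the effective parameter count $q\log r$; the first is supplied by the error analysis, the second by the sketch. So the proof is essentially a verification that, after pruning, discretizing, and sketching, the trained model $f=\mathbf{M}$ falls within the hypotheses of \Cref{thm:aroraoriginal} with the advertised values of $q$ and $r$.

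First I would fix the compression hyperparameter $d$ and the per-layer rounding scales $\rho_l$ (subject to the constraint $\rho_l \le (\tfrac{1}{L}\norm{\mathbf{A}^l}_2 - \epsilon_l\Gamma_l)/J_l$ from \Cref{lem:discreteerror}) so that the right-hand side of the error bound in \Cref{lem:discreteerror} is at most $\gamma$; this is exactly the displayed hypothesis $\gamma \ge ed_1^0\left(\prod_{l=1}^L L_l\norm{\mathbf{A}^l}_2\right)\sum_l(\epsilon_l\Gamma_l+\rho_l J_l)/\norm{\mathbf{A}^l}_2$. With this choice the discretized pruned model $\tilde{\mathbf{M}}$ satisfies $\norm{x^L-\tilde x^L}\le\gamma$ for every $x\in\mathcal S$, so $f$ is $(\gamma,\mathcal S)$-compressible, the compressed classifier being $\tilde{\mathbf{M}}$ (realized through its sketch below). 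Next I would count parameters: by \Cref{lem:algdisspar}, with probability at least $1-\lambda_l^{-1}-(d_1^l)^{-1/3}-(d_2^l)^{-1/3}$ the pruned layer $\hat{\mathbf{A}}^l$ is $j_r,j_c$-distributed sparse with $j_l:=\max(j_r,j_c)\le 3\lambda_l\chi\max(d_1^l,d_2^l)$, and rounding to the nearest multiple of $\rho_l$ (which only annihilates small nonzeros, leaving the unpruned diagonal untouched as long as its magnitudes exceed $\rho_l/2$) preserves these sparsity counts for $\tilde{\mathbf{A}}^l$; in particular $J_l\le\min(j_cd_1^l,j_rd_2^l)\le 3\lambda_l\chi d_1^ld_2^l=\mathcal O(\chi d_1^ld_2^l)$, matching the side condition in the statement. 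Then \Cref{thm:dasarthy} (with the sharpened $c\ge2$ from the remark) guarantees that a fixed pair of ensemble matrices $A,B$ with $m_l=\mathcal O(\sqrt{\max(j_cd_1^l,j_rd_2^l)}\log p_l)=\mathcal O(\sqrt{3\lambda_l\chi d_1^ld_2^l}\log p_l)$ sketches every such $\tilde{\mathbf{A}}^l$ into a \emph{distinct} $m_l\times m_l$ dense matrix $\tilde X^l$ with probability at least $1-p_l^{-c}$. Folding $A,B$ and the ensemble randomness into the helper string $s$, injectivity of the sketch makes $\{\tilde X^l\}_l$ a valid finite parameter configuration $A\in\mathcal A$ whose decoded classifier is $\tilde{\mathbf{M}}$; since each entry of $\tilde X^l=A\tilde{\mathbf{A}}^lB^\top$ is a bounded multiple of $\rho_l$, the number of discrete values per entry is $r_l=\mathcal O(1/\rho_l)$ up to dimension-polynomial factors absorbed into the logarithm, so $q\log r=\sum_l m_l^2\log r_l=\sum_l\mathcal O\!\left(3\lambda_l\chi d_1^ld_2^l\log^2 p_l\log(1/\rho_l)\right)$.

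Finally I would invoke \Cref{thm:aroraoriginal} with these $q$, $r$, helper string $s$, and the $(\gamma,\mathcal S)$-compressibility established above, obtaining $R_0(g_A)\le\hat R_\gamma(f)+\mathcal O(\sqrt{q\log r/n})$, which is precisely the claimed bound; a union bound over the failure events of \Cref{lem:discreteerror} (total $\sum_l\epsilon_l^{-1}$), \Cref{lem:algdisspar} (total $\sum_l\lambda_l^{-1}$, up to the lower-order $(d_i^l)^{-1/3}$ terms omitted from the statement), and \Cref{thm:dasarthy} (total $\sum_l p_l^{-c}$) yields the stated success probability $1-\sum_l(\lambda_l^{-1}+\epsilon_l^{-1}+p_l^{-c})$. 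The main obstacle I anticipate is the bookkeeping at the discretization--sketching interface: I must ensure that rounding does not destroy the nonzero diagonal that \Cref{thm:dasarthy} requires, that the objects counted by \Cref{thm:aroraoriginal} are genuinely the $m_l^2$ sketch entries rather than the sparse-matrix entries (this injectivity is the sole source of the improvement over the naive sparse count), and that bounding the number of values per sketch entry does not smuggle a dimension factor back outside the logarithm — all while exhibiting a single choice of $d$ that makes $\chi$ small enough to shrink $q$ yet keeps the accumulated error below $\gamma$.
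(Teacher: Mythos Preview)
Your proposal is correct and follows essentially the same approach as the paper: bound $j_l$ via \Cref{lem:algdisspar}, sketch each layer into an $m_l\times m_l$ matrix with $m_l^2=\mathcal O(3\lambda_l\chi d_1^ld_2^l\log^2 p_l)$ via \Cref{thm:dasarthy}, then feed the resulting parameter count into \Cref{thm:aroraoriginal} with the margin supplied by \Cref{lem:discreteerror}. If anything, your treatment is more careful than the paper's own terse proof---you explicitly handle the helper string, the injectivity needed to identify $\mathcal A$ with sketch values, the preservation of the nonzero diagonal under rounding, and the union bound over the three failure modes, all of which the paper leaves implicit.
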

Here, $\chi$ depends on our hyperparameter $d$. Indeed, this generalization bound vastly improves on a trivial application of compression bounds as in \Cref{lem:badgeneralization}. Quite notably, this removes the combinatorial nature of the naive bound in \Cref{lem:badgeneralization} for a small price of $\sqrt{\frac{\log^2(p_l)}{n}}$.  
\section{Generalization of Lottery Tickets}
\label{sec:subweightbounds}
Such a generalization error proof framework for pruning applies to more than just Magnitude-based pruning. An exciting pruning approach is simply creating a very large model $G$ such that some smaller target model $\mathbf{M}$ is hidden inside $G$ and can be found by pruning. This lottery ticket formulation for pruning has seen many empirical benefits. Formally, we will call our lottery ticket within $G$ a weight-subnetwork $\widetilde{G}$ of $G$. This $\widetilde{G}$ is a pruned version of the original $G$. In fact, \citet{Malach2020} shows that for a sufficiently large $G$, there exists with high probability a pruning $\tilde{G}$ such that $\tilde{G}$ and the target function $\mathbf{M}$ differ by at most $\epsilon$. Moreover, this $\tilde{G}$ will have approximately the same number of nonzero parameters as the original target network $\mathbf{M}$. This is formally presented in \Cref{thm:wtsubnetwork}.

\begin{restatable}{theorem}{wtsubnetwork}
\label{thm:wtsubnetwork} Fix some $\epsilon, \delta \in(0,1)$. Let $\mathbf{M}$ be some target network of depth $L$ such that for every $i \in[L]$ we have $\|\mathbf{A}^i\|_2 \leq 1,\left\|\mathbf{A}^i\right\|_{\max } \leq \frac{1}{\sqrt{d_{1, i}}}$. Furthermore, let $n_\mathbf{M}$ be the maximum hidden dimension of $\mathbf{M}$. Let $G$ be a network where each of the hidden dimensions is upper bounded by $\operatorname{poly}\left(d_{1,0}, n_\mathbf{M}, L, \frac{1}{\epsilon}, \log \frac{1}{\delta}\right) \coloneqq D_G$ and depth $2 L$, where we initialize $\mathbf{A}^i$ from the uniform distribution $U([-1,1])$. Then, w.p at least $1-\delta$ there exists a weight-subnetwork $\widetilde{G}$ of $G$ such that:
$$
\sup _{x \in \mathcal{S}}|\widetilde{G}(x)-\mathbf{M}(x)| \leq \epsilon.
$$
Furthermore, the number of active (nonzero) weights in $\widetilde{G}$ is $O\left(d_{0,1} D_G+D_G^2 L\right)$.
\end{restatable}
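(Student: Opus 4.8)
The plan is to adapt the ``pruning is all you need'' construction of \citet{Malach2020}: rather than bounding the error of a pruned copy of $\mathbf{M}$ directly, we show that the wide random network $G$ already contains, as a weight-subnetwork, an $\epsilon$-accurate simulation of $\mathbf{M}$, built by replacing each scalar weight of $\mathbf{M}$ with a small random ``gadget'' carved out of $G$ by pruning. The whole argument reduces to a single-weight approximation lemma plus careful bookkeeping of error and failure probability.

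The core step is the gadget lemma: a scalar linear map $z \mapsto w z$ with $|w| \le \|\mathbf{A}^i\|_{\max} \le \tfrac{1}{\sqrt{d_{1,i}}} \le 1$ can be realized up to additive error $\epsilon'$ on a bounded domain $\{\|z\| \le B\}$ by pruning a two-layer ReLU block whose first-layer weights are i.i.d.\ $U([-1,1])$. Using the identity $\sigma(t) - \sigma(-t) = t$ for the ReLU $\sigma$, it suffices to approximate $z \mapsto \sigma(w z)$ and $z \mapsto \sigma(-w z)$ separately; if the block has $k = O\!\left(\tfrac{1}{\epsilon'}\log\tfrac{1}{\delta'}\right)$ random first-layer coefficients $X_1,\dots,X_k$, then with probability $1-\delta'$ some $X_j$ lands within $\epsilon'$ of $w$, and pruning every first-layer edge except the $j$-th leaves $z \mapsto \sigma(X_j z) \approx \sigma(w z)$. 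A second selection stage is used to pin the outgoing edge weights to the values needed to recombine the two ReLU branches, and it is precisely this extra stage per layer that forces $G$ to have depth $2L$.

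To assemble a full network, stack $d_{1,i}d_{2,i}$ disjoint gadgets to realize layer $\mathbf{A}^i$ and chain the $L$ resulting blocks; the total number of scalar weights that must be simulated is at most $d_{1,0} n_\mathbf{M} + n_\mathbf{M}^2 L$, so taking $\delta' = \delta/(d_{1,0} n_\mathbf{M} + n_\mathbf{M}^2 L)$ and union bounding gives overall failure probability at most $\delta$ at the cost of only a logarithmic factor in $k$. Next, error propagation: the inputs in $\mathcal{S}$ are bounded and, because $\|\mathbf{A}^i\|_2 \le 1$ and ReLU is $1$-Lipschitz, both $x^i$ and its gadget counterpart stay in a ball of radius $B = \operatorname{poly}(d_{1,0})$; a telescoping/perturbation argument of the same flavor used for \Cref{lem:finalerrorbound} then shows the per-weight error $\epsilon'$ accumulates to $O(L B \epsilon')$ at the output, so choosing $\epsilon' = \epsilon / O(L B)$ and feeding this back into the gadget lemma fixes $k$, and hence the polynomial width bound $D_G = \operatorname{poly}(d_{1,0}, n_\mathbf{M}, L, 1/\epsilon, \log\tfrac1\delta)$. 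Finally, after pruning each gadget keeps only $O(1)$ edges per selection stage while living inside a block of width $\le D_G$, so the number of surviving nonzero weights is $O(d_{0,1} D_G + D_G^2 L)$ (indeed no larger than the total edge count of $G$).

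The main obstacle is the combination of the gadget construction and the error analysis: one must set up the ReLU sign-bookkeeping in each gadget so that an operation that can only \emph{delete} edges actually implements the required affine map, and one must control the growth of activation magnitudes through $2L$ layers so that the required per-weight accuracy $\epsilon'$ scales only polynomially (not exponentially) in $L$. The spectral-norm bound $\|\mathbf{A}^i\|_2 \le 1$ and the max-norm bound $\|\mathbf{A}^i\|_{\max} \le \tfrac{1}{\sqrt{d_{1,i}}}$ on $\mathbf{M}$ are exactly the hypotheses that keep, respectively, the activation norms and the single-weight approximation feasible, which is why the theorem is stated under them.
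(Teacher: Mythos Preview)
Your proposal is correct, and it is essentially the construction of \citet{Malach2020}; but note that the paper does not supply its own proof of \Cref{thm:wtsubnetwork} at all --- the theorem is simply quoted from \citet{Malach2020} as an external input (see the discussion preceding the theorem and the proof of \Cref{thm:impgen}, which repeatedly invokes ``\citet{Malach2020} fortunately proves both''). So there is no paper-proof to compare against: you have reconstructed the argument that the paper cites but does not reproduce, and your sketch of the gadget lemma, the depth-$2L$ selection stage, the union bound over the $O(d_{1,0}n_\mathbf{M}+n_\mathbf{M}^2L)$ scalar weights, and the Lipschitz error propagation matches that source.
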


\begin{restatable}{theorem}{impgen}
    \label{thm:impgen}
    With probability at least $1-\delta - LD_G^{-c}$ have the generalization error of 
    $$R_0(\tilde{G}) \leq \hat{R}_{\epsilon + \epsilon_{\rho}}(\tilde{G}) + \mathcal{O}\left(\sqrt{\frac{\left[n_\mathbf{M}d_{0,1}\log(D_G)^2 + Ln_\mathbf{M}^2\log(D_G)^2\right]\log\left(\frac{1}{\rho}\right)}{n} }\right)\text{.}$$ Here, $\epsilon_{\rho}$ is the small error introduced by discretization. 
\end{restatable}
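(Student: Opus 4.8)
The plan is to instantiate the compression-bounds machinery of \Cref{thm:aroraoriginal} for the lottery-ticket setting, exactly mirroring the pipeline already used to prove \Cref{thm:generrorsketch}, but with the roles of the pruned MBP model replaced by the weight-subnetwork $\widetilde{G}$ of \Cref{thm:wtsubnetwork}. First I would fix $\epsilon, \delta$ and invoke \Cref{thm:wtsubnetwork}: with probability at least $1-\delta$ there is a weight-subnetwork $\widetilde{G}$ with $\sup_{x\in\mathcal{S}}|\widetilde{G}(x)-\mathbf{M}(x)| \le \epsilon$ and $O(d_{0,1}D_G + D_G^2 L)$ nonzero weights. This gives the $(\epsilon,\mathcal{S})$-compressibility of the target $\mathbf{M}$ via the class of prunings of $G$, which handles the ``error'' half of the Arora-style bound; the margin in $\hat{R}$ will be $\epsilon$ before discretization.

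Next I would account for the discretization step, just as in \Cref{lem:discreteerror}: rounding each surviving weight of $\widetilde{G}$ to the nearest multiple of $\rho$ perturbs each layer by at most $\rho$ times its number of nonzero entries, and by the perturbation bound of \citet{Neyshabur2017} this propagates to an additional output error $\epsilon_\rho$. So the effective compressibility margin becomes $\epsilon + \epsilon_\rho$, explaining the $\hat{R}_{\epsilon+\epsilon_\rho}(\widetilde{G})$ term in the statement. Then comes the parameter count: I would observe that $\widetilde{G}$, being a pruning of $G$ whose layers are $D_G \times D_G$ (except the first, which is $d_{0,1}\times D_G$-ish), has few nonzero entries per row/column because the total nonzero count is $O(d_{0,1}D_G + D_G^2 L)$ spread over $2L$ layers; more precisely I would argue each layer of $\widetilde{G}$ is $j_r,j_c$-distributed sparse with $\max(j_r,j_c)$ of order $n_\mathbf{M}$ (since the hidden network being embedded has hidden dimension $n_\mathbf{M}$, only about $n_\mathbf{M}$ weights per neuron are active). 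Applying \Cref{thm:dasarthy} with $p = D_G$, each such sparse layer sketches losslessly into a dense matrix of side $m = O(\sqrt{n_\mathbf{M} D_G}\log D_G)$, hence $O(n_\mathbf{M} D_G \log^2 D_G)$ real parameters for the $O(1)$ wide-middle layers and $O(n_\mathbf{M} d_{0,1}\log^2 D_G)$ for the input-facing layer; summed over $L$ depth this yields $q = O\big(n_\mathbf{M} d_{0,1}\log^2 D_G + L n_\mathbf{M}^2 \log^2 D_G\big)$ discrete parameters, each taking $O(1/\rho)$ values so $\log r = O(\log(1/\rho))$. Plugging $q$ and $r$ into \Cref{thm:aroraoriginal} produces the stated $\mathcal{O}(\sqrt{q\log(1/\rho)/n})$ term.

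Finally I would collect the failure probabilities by a union bound: $\delta$ from \Cref{thm:wtsubnetwork}, plus $D_G^{-c}$ per layer from \Cref{thm:dasarthy} (there are $2L = O(L)$ layers, giving $L D_G^{-c}$ up to constants), which matches the claimed $1-\delta-LD_G^{-c}$. The discretization is deterministic once $\widetilde{G}$ is fixed, so it contributes nothing to the probability. I expect the main obstacle to be the parameter-counting step: one must argue carefully that a weight-subnetwork achieving the Malach et al. approximation is genuinely distributed-sparse with per-row/column sparsity scaling like $n_\mathbf{M}$ rather than like $D_G$ — this requires looking inside the construction of \Cref{thm:wtsubnetwork} (each target neuron is simulated by a small gadget of width independent of $D_G$) to see that activating only $O(n_\mathbf{M})$ weights per neuron suffices, and that the surviving support can be arranged to satisfy the distributed-sparsity definition (including the non-pruned-diagonal convention, or the relaxation noted in the remark after \Cref{alg:compscheme2}). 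Everything else is a routine re-run of the \Cref{thm:generrorsketch} argument with substituted dimensions.
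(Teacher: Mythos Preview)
Your proposal follows essentially the same pipeline as the paper: invoke \Cref{thm:wtsubnetwork} for both the approximation error and the sparsity of $\widetilde{G}$, discretize to pick up the extra $\epsilon_\rho$ in the margin, apply \Cref{thm:dasarthy} layer by layer to count parameters, and finish with \Cref{thm:aroraoriginal} plus a union bound over the $L$ sketching events and the Malach et al.\ event. That high-level plan is exactly what the paper does.

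There is, however, a genuine gap in your parameter-counting step. You take each middle layer of $\widetilde{G}$ as a $D_G \times D_G$ matrix with per-row/column sparsity $j_r,j_c = O(n_\mathbf{M})$ and apply \Cref{thm:dasarthy} with $p = D_G$, obtaining sketch side $m = O(\sqrt{n_\mathbf{M} D_G}\log D_G)$ and hence $O(n_\mathbf{M} D_G \log^2 D_G)$ parameters per layer. You then write the depth-$L$ total as $O(L\, n_\mathbf{M}^2 \log^2 D_G)$. These two expressions do not match: since $D_G = \operatorname{poly}(n_\mathbf{M}, L, 1/\epsilon, \dots)$ is polynomially larger than $n_\mathbf{M}$, your intermediate computation only yields $O(L\, n_\mathbf{M} D_G \log^2 D_G)$, which is strictly weaker than the theorem claims.

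The paper avoids this by using a sharper sparsity statement drawn from inside the Malach et al.\ construction: each middle layer of $\widetilde{G}$ carries only $O(n_\mathbf{M}^2)$ nonzero weights \emph{in total}, and (by the proof of their Theorem~2.1) these nonzeros are evenly spread across rows and columns. Feeding this into \Cref{thm:dasarthy} gives $\max(j_c d_1, j_r d_2) = O(n_\mathbf{M}^2)$ and hence $m = O(n_\mathbf{M} \log D_G)$, i.e.\ $O(n_\mathbf{M}^2 \log^2 D_G)$ parameters per middle layer; similarly the input-facing layer contributes $O(n_\mathbf{M} d_{0,1} \log^2 D_G)$. So the missing ingredient in your argument is precisely the total-nonzero count $O(n_\mathbf{M}^2)$ per middle layer (rather than the cruder $O(n_\mathbf{M})$-per-row bound you use), together with its even distribution --- both of which the paper extracts directly from \citet{Malach2020}.
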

Here, we have a generalization bound for our pruned model. One interesting thing to note is that this bound is only a small factor of $\log(D_G)$ worse than if we had applied a compression bound to a model of the size of the target function $\mathbf{M}$. To our knowledge, this is one of the first generalization analyses for lottery tickets. 

\section{Empirical Analysis}
\label{sec:experiments}


\paragraph{Code} We have provided our code \href{https://github.com/EtashGuha/compression_bounds_arxiv}{\textbf{here}} for reproducibility. This was based on a fork of \citet{labonte23}.

We study the generalization bound obtained using our pruning Algorithm \ref{alg:compscheme2}  with some standard well-known norm-based generalization bounds of \citet{neyshabur2015norm},\citet{Bartlett2017}, and \citet{Neyshabur2017} used as a baseline.  Our experiments compare the generalization error obtained by these bounds, the generalization bound of our algorithm (as provided in \ref{thm:generrorsketch}), and the true generalization error of the compressed, trained model. We also provide an experiment demonstrating how our generalization bound scales when increasing the hidden dimension of our models.

Our models are Multi-Layer Perceptron Models (MLPs) with ReLU activation layers with $5$ layers. We train our algorithm with a learning rate of 
$0.02$ with the Adam optimizer \citep{kingma2014adam} for $300$ epochs. We conduct our experiments on two different image classification datasets: MNIST \citep{mnist} and CIFAR10 \citep{cifar10}. We use an MLP with a hidden dimension of $784$ to compare our bounds to other generalization bounds. For our experiments on scaling with model size, we test on hidden dimensions $500, 1000, 1500, 2000, \text{ and } 2500$ where the depth is kept constant.

\begin{figure}
     \centering
     \begin{subfigure}[b]{0.32\textwidth}
         \centering
         \includegraphics[width=\textwidth]{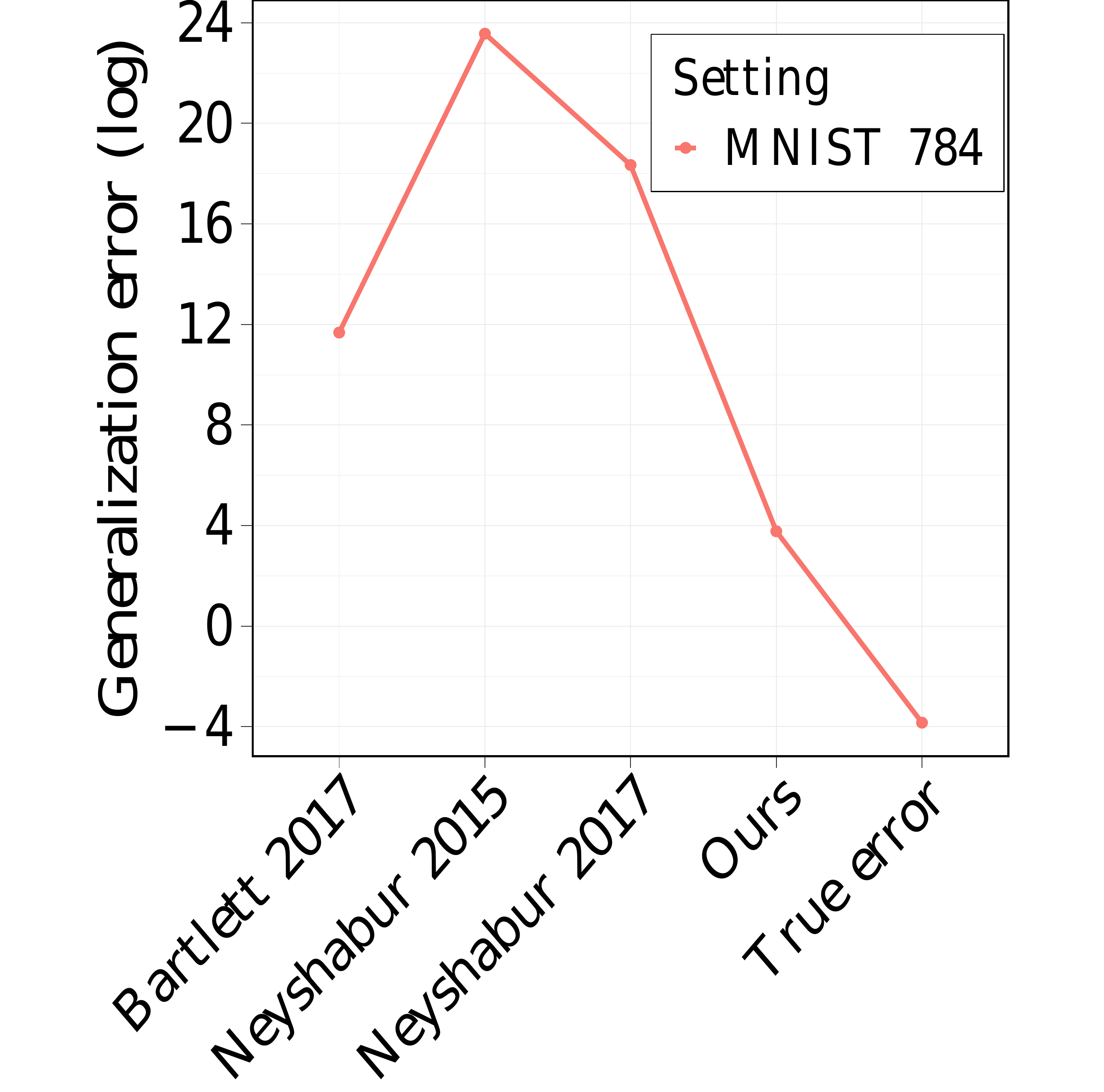}
         \caption{Comparing bounds on MNIST.}
         \label{fig:mnist}
     \end{subfigure}
     \hfill
     \begin{subfigure}[b]{0.32\textwidth}
         \centering
         \includegraphics[width=\textwidth]{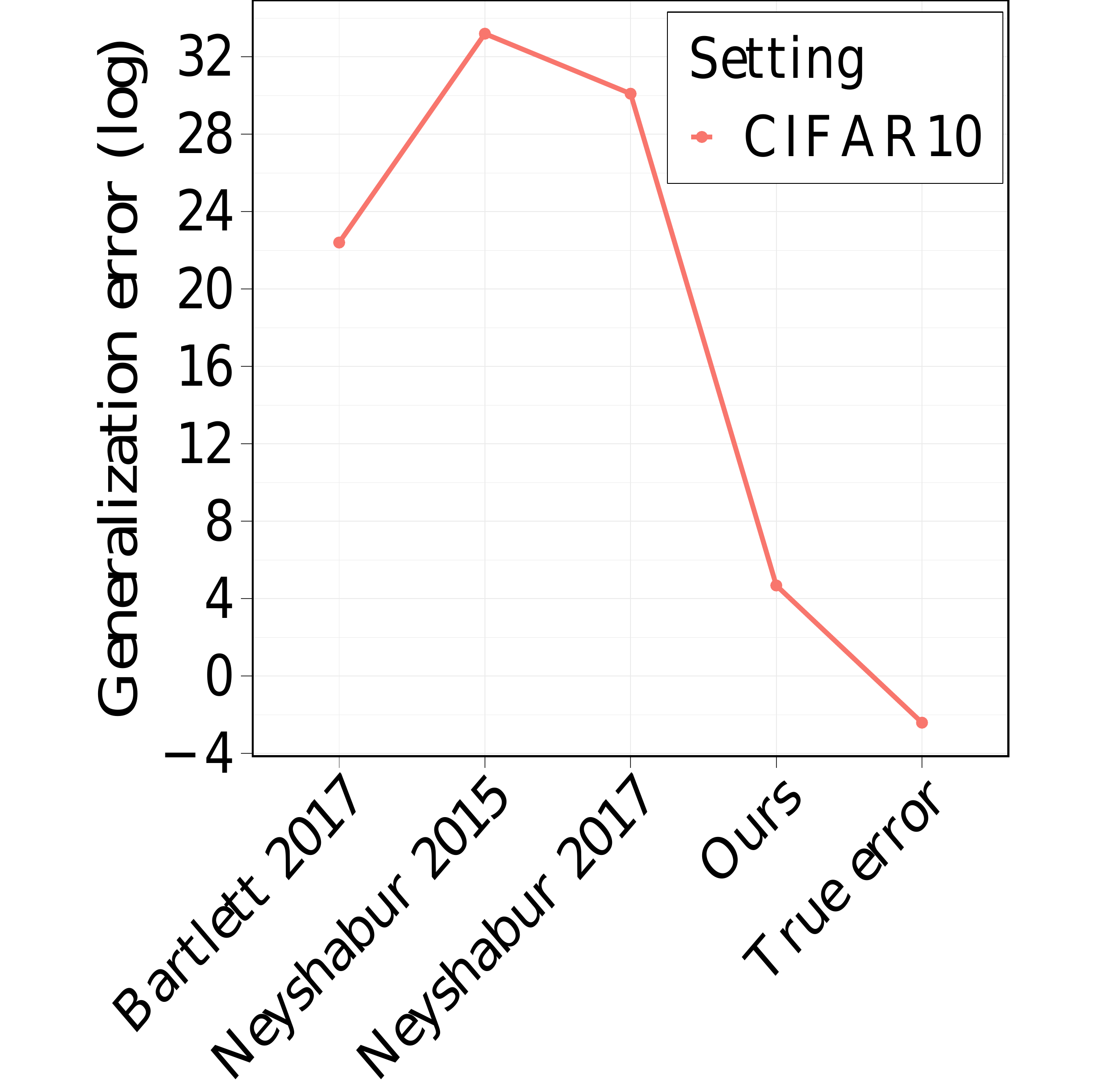}
         \caption{Comparing bounds on CIFAR10.}
         \label{fig:cifar}
     \end{subfigure}
    \hfill
     \begin{subfigure}[b]{0.32\textwidth}
         \centering
         \includegraphics[width=\textwidth]{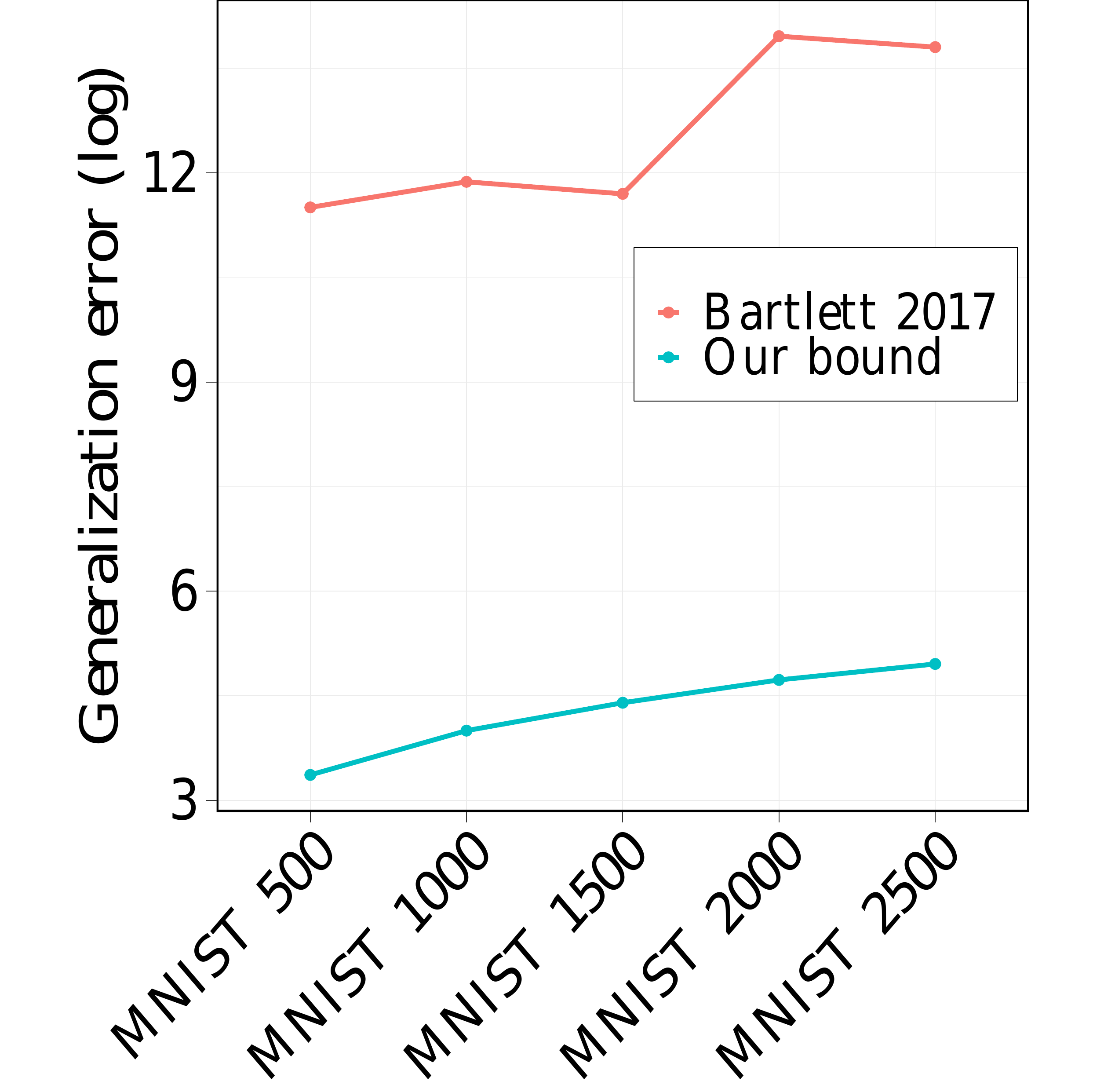}
         \caption{Dependence on model size.}
         \label{fig:modelsize}
     \end{subfigure}
        \caption{Comparison of the generalization bounds on logarithmic scale w.r.t. (a) MNIST, and (b) CIFAR10 datasets. In (c), we see how our bounds depend on the size of the model.}
        \label{fig:experiments}
\end{figure}

\paragraph{Results}Our bounds are several orders of magnitude better than the baseline state-of-the-art generalization bounds, as can be inferred from \Cref{fig:mnist} and \Cref{fig:cifar} above. In both experiments, the closest bound to ours is that of \citet{Bartlett2017}, which is still at least $10^3$ and $10^7$ times greater than our bounds on the MNIST and the CIFAR10 datasets respectively. Moreover, our generalization bound is consistently better than \citet{Bartlett2017} as the hidden dimension grows \Cref{fig:modelsize}. This demonstrates that across several datasets, our bounds are tighter than traditional norm-based generalization bounds and scale better with hidden dimensions. We provide some additional experiments in the appendix as well. The results, although remarkable, are not surprising mainly due to the use of our pruning algorithm, which ensures the error due to compression is low, and making use of Sparse Matrix Sketching, which significantly reduces the dimension of the pruned model, a key factor while computing generalization bounds. 

\section{Conclusion}
This paper has made progress on the problem of deriving generalization bounds of overparametrized neural networks. With our efficient pruning algorithm and Sparse Matrix Sketching, we have obtained bounds for pruned models which are significantly better than well-known norm-based generalization bounds and empirically verified the effectiveness of this approach on actual data. We hope these results will fuel further research in deep learning to understand better how and why models generalize. It would be interesting to see if matrix sketching can be used to prove the generalization for different types of pruning, such as coresets. Moreover, it could also be fruitful to see whether matrix sketching can be used alongside PAC-Bayes bounds to yield generalization bounds as well. In this regard, we have extended the general framework of this paper to derive generalization bounds for lottery tickets in Section \ref{sec:subweightbounds}, a result which, to our knowledge, is the first of its kind. Another possibility would be to explore how different generalization bounds can be formed for different data distributions from different training algorithms. 

\paragraph{Limitations} Our magnitude-based pruning algorithm does not prune the diagonal of the matrix, which is not standard. Moreover, after training, we assume that each atom belongs to an i.i.d Gaussian distribution, which may not always hold. Also, similar to the standard bounds, our generalization bounds are still vacuous, not fully explaining the generalization of the models.


\bibliographystyle{plainnat}
\bibliography{cite}
\appendix

\section{Computation Details}
Here, we provide some information about the hardware and setup used for our work. We include a copy of the code in the Supplementary Material for reproducibility. Our experiments were run with an Intel(R) Xeon(R) Silver 4116 CPU @ 2.10GHz and an NVIDIA GeForce RTX 2080 Ti. Moreover, our experiments are run using Python version $3.10$. Also, our experiments were done using a batch size of $256$ for our experiments on MNIST and a batch size of $4$ for CIFAR10. The Deep Learning software library used was PyTorch Lightning. We have additionally included our code in the supplementary material alongside with a README on how to reproduce our experiments.

\section{Proof of Pruning and Discretization Error} 
We state that much of this error analysis is inspired by the proofs in \citet{Qian2021}. We, however, use the more reasonable Gaussian distribution assumption over the weights and have a slightly different Magnitude-based Pruning algorithm. 

\subsection{Proof of \Cref{lem:meaneletwo}}
\meaneletwo*
\begin{proof}
By the definition of our \Cref{alg:compscheme2}, the random variable $\Delta_{ij}^l$ depends on the random variable $\mathbf{A}_{i,j}^l$. 

\begin{equation}
\Delta_{ij}^l =
    \begin{cases}
        \mathbf{A}_{i,j}^l & \text{w.p. } \operatorname{exp}\left(\frac{-\left[\mathbf{A}_{i, j}^l\right]^2}{d\Psi}\right)\nonumber\\
        0 & \text{w.p. } 1 - \operatorname{exp}\left(\frac{-\left[\mathbf{A}_{i, j}^l\right]^2}{d\Psi}\right)
    \end{cases}.
\end{equation}

To calculate $\mathbb{E}\left(\Delta_{ij}^l\right)$, we will use the Law of Total Expectation. That is, $\mathbb{E}\left(\Delta_{ij}^l\right)= \mathbb{E}(\mathbb{E}(\Delta_{ij}^l | \mathbf{A}_{i,j}^l))$. We have that, $\mathbb{E}(\Delta_{ij}^l | \mathbf{A}_{i,j}^l) = \mathbf{A}_{i,j}^l\operatorname{exp}\left(\frac{-\left[\mathbf{A}_{i, j}^l\right]^2}{d\Psi}\right)$. Therefore, to calculate $\mathbb{E}(\mathbb{E}(\Delta_{ij}^l | \mathbf{A}_{i,j}^l))$, we use the definition of expectation for continuous variables as 
\begin{align}
\mathbb{E}(\Delta_{ij}^l) &= \mathbb{E}(\mathbb{E}(\Delta_{ij}^l | \mathbf{A}_{i,j}^l))\nonumber\\
&= \int_{-\infty}^{\infty} \mathbf{A}_{i,j}^l\operatorname{exp}\left(\frac{-\left[\mathbf{A}_{i, j}^l\right]^2}{d\Psi}\right)\frac{1}{\sqrt{2\pi\Psi}}\operatorname{exp}\left(-\frac{1}{2}\left(\frac{\mathbf{A}_{i,j}^l}{\sqrt{\Psi}}\right)^2\right)d\mathbf{A}_{i,j}^l\nonumber\\
&= 0\nonumber.
\end{align}
We now focus on the squared component of our lemma. Similarly, we have 
\begin{align}
\mathbb{E}((\Delta_{ij}^l)^2) &= \mathbb{E}(\mathbb{E}((\Delta_{ij}^l)^2 | \mathbf{A}_{i,j}^l))\nonumber\\
&= \int_{-\infty}^{\infty} (\mathbf{A}_{i,j}^l)^2\operatorname{exp}\left(\frac{-\left[\mathbf{A}_{i, j}^l\right]^2}{d\Psi}\right)\frac{1}{\sqrt{2\pi\Psi}}\operatorname{exp}\left(-\frac{1}{2}\left(\frac{\mathbf{A}_{i,j}^l}{\sqrt{\Psi}}\right)^2\right)d\mathbf{A}_{i,j}^l\nonumber\\
&= \frac{d^{\frac{3}{2}}\Psi}{(d+2)^{\frac{3}{2}}}\nonumber.
\end{align}
We similarly compute the fourth moment as 
\begin{align}
\mathbb{E}((\Delta_{ij}^l)^4) &= \mathbb{E}(\mathbb{E}((\Delta_{ij}^l)^4| \mathbf{A}_{i,j}^l))\nonumber\\
&= \int_{-\infty}^{\infty} (\mathbf{A}_{i,j}^l)^4\operatorname{exp}\left(\frac{-\left[\mathbf{A}_{i, j}^l\right]^2}{d\Psi}\right)\frac{1}{\sqrt{2\pi\Psi}}\operatorname{exp}\left(-\frac{1}{2}\left(\frac{\mathbf{A}_{i,j}^l}{\sqrt{\Psi}}\right)^2\right)d\mathbf{A}_{i,j}^l\nonumber\\
&= \frac{3d^{\frac{5}{2}}\Psi^2}{(d+2)^{\frac{5}{2}}} \nonumber.
\end{align}
\end{proof}

\subsection{Proof of \Cref{lem:errortwo}}
\errortwo*
We will prove that the error from the compression is bounded. To do so will first use the result that the expected norm of any zero-mean matrix can be bounded using the second and fourth moments. We restate this useful technical lemma from Theorem 2 of \citet{Latala2005}.
\begin{restatable}{lemma}{latala}
    \label{lem:latala}
    Let $A$ be a random matrix whose entries $A_{i,j}$ are independent mean zero random variables with finite fourth moment. Then $$\mathbb{E}\|A\|_2 \leq C\left[\underset{i}{\max}\left( \sum_j \mathbb{E} A_{i,j}^2\right)^{\frac{1}{2}} + \underset{j}{\max}\left( \sum_i \mathbb{E} A_{i,j}^2\right)^{\frac{1}{2}} + \left( \sum_j \mathbb{E} A_{i,j}^4\right)^{\frac{1}{4}}\right]\text{.}$$ Here, $C$ , is a universal positive constant.
\end{restatable}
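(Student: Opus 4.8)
The goal is to bound the single scalar $\mathbb{E}\norm{A}_2$ for a matrix $A=(A_{ij})$ with independent, mean-zero entries, using \emph{only} second and fourth moments. The plan is a truncation argument that separates the ``bulk'' of the matrix, whose operator norm is governed by the row and column variance sums, from the ``spikes,'' governed by the total fourth moment. Write $\theta = (\sum_{i,j}\mathbb{E}A_{ij}^4)^{1/4}$ for the fourth-moment term. First I would symmetrize: since each $A_{ij}$ has mean zero, a standard symmetrization inequality gives $\mathbb{E}\norm{A}_2 \leq 2\,\mathbb{E}\norm{(\varepsilon_{ij}A_{ij})_{ij}}_2$ with independent Rademacher signs $\varepsilon_{ij}$, so it suffices to treat symmetric entries.

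Second, I would truncate at level $\theta$: decompose $A = A' + A''$, where $A'_{ij} = A_{ij}\mathbf{1}[|A_{ij}|\le\theta]$ (recentered to preserve zero mean) and $A''$ collects the large entries. For the spike matrix $A''$, I would bound the operator norm by the Hilbert--Schmidt norm and control the tail by Chebyshev: using $A_{ij}^2 \le \theta^{-2}A_{ij}^4$ on the event $|A_{ij}|>\theta$, one gets $\mathbb{E}\sum_{i,j}(A''_{ij})^2 \le \theta^{-2}\sum_{i,j}\mathbb{E}A_{ij}^4 = \theta^2$, so that $\mathbb{E}\norm{A''}_2 \le (\mathbb{E}\norm{A''}_{\mathrm{HS}}^2)^{1/2} \le \theta$ by Jensen. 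This already produces the third term of the bound.

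Third---and this is the heart of the matter---I would bound $\mathbb{E}\norm{A'}_2$ by a universal constant times $\max_i(\sum_j\mathbb{E}A_{ij}^2)^{1/2} + \max_j(\sum_i\mathbb{E}A_{ij}^2)^{1/2}$. Writing $\norm{A'}_2 = \sup_{\norm{x}=\norm{y}=1}\sum_{ij}A'_{ij}x_iy_j$, I would decompose the supremum dyadically according to the magnitudes of the coordinates of $x$ and $y$: partition the index sets into blocks on which $|x_i|$ and $|y_j|$ lie in fixed dyadic ranges, bound the contribution of each block using the uniform bound $|A'_{ij}|\le\theta$ together with the per-block variance sums, and sum the resulting geometric series. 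Because the entries of $A'$ are uniformly bounded, the relevant fluctuation within each block is sub-Gaussian with the correct variance proxy, and the dyadic peeling lets the row and column $L^2$ sums emerge as the controlling quantities. Combining the three parts and absorbing all universal factors into $C$ yields the claimed inequality.

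The main obstacle is precisely this third step: extracting the two $L^2$ row/column terms with \emph{no spurious logarithmic factor}. The tempting shortcut---replacing $A'$ by a Gaussian matrix with matching variances and invoking an inhomogeneous Gaussian operator-norm estimate---is not sharp, since the best such estimates carry an extra $\sigma_*\sqrt{\log n}$ term (with $\sigma_*$ the largest entrywise standard deviation), which under truncation is of order $\theta\sqrt{\log n}$ and reintroduces a logarithm absent from the target. Avoiding it requires the careful dyadic decomposition of the supremum over the unit ball described above, which is the technical core of Latał{}a's argument; by contrast the symmetrization and truncation steps are routine.
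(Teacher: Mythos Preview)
The paper does not prove this lemma at all: it is stated explicitly as a restatement of Theorem~2 of Lata{\l}a (2005) and used as a black box. Your proposal, by contrast, sketches an actual proof---symmetrization, truncation at the fourth-moment scale $\theta$, Hilbert--Schmidt control of the spikes, and a dyadic peeling of the unit ball to handle the bounded bulk without logarithmic loss. This is essentially the architecture of Lata{\l}a's own argument, and you correctly isolate the real difficulty: step three is where all the work lives, and shortcuts via Gaussian comparison or Seginer-type row/column bounds reintroduce a $\sqrt{\log n}$ factor. One minor wrinkle: after symmetrization the entries are already symmetric, so truncation preserves mean zero automatically and no recentering is needed; mentioning recentering there is harmless but redundant. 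Also, your claim that $\mathbb{E}\norm{A'}_2$ is controlled by the two $L^2$ row/column maxima \emph{alone} is slightly optimistic---in Lata{\l}a's argument the truncation level and the dyadic decomposition interact, and the fourth-moment term does reappear in the bulk analysis before being absorbed. But as a high-level plan your outline is sound; it simply goes far beyond what the paper itself attempts, which is nothing more than a citation.
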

We now use this lemma to bound the error due to compression using \Cref{alg:compscheme2}. 
\begin{proof}
    Let $Z$ be our mask such that $\hat{\mathbf{A}}_{i,j}^l = Z \circ \mathbf{A}_{i,j}^l$ where $\circ$ is the elementwise-matrix product. We will analyze the difference matrix $\Delta =  Z \circ \mathbf{A}_{i,j}^l -  \mathbf{A}_{i,j}^l$. Note that 
    $$\mathbb{E}((\Delta_{ij}^l)^2) = \mathbb{E}((\Delta_{ij}^l)^2|Z_{i, j} = 0)\cdot \mathbb{P}(Z_{i,j} = 0) + \mathbb{E}((\Delta_{ij}^l)^2|Z_{i, j} = 1)\cdot \mathbb{P}(Z_{i,j} = 1)\text{.}$$
    Trivially, if the mask for an atom is set to $1$, the squared error for that atom is  $0$. Therefore, we have that
    $$\mathbb{E}(\Delta_{ij}^l|Z_{i, j} = 1) \mathbb{P}(Z_{i,j} = 1) = 0\text{.}$$
    Thus, we only need to analyze the second term. We have
    \begin{align}
        \mathbb{E}((\Delta_{ij}^l)^2|Z_{i, j} = 0)\cdot \mathbb{P}(Z_{i,j} = 0) &= \mathbb{P}(Z_{i,j} = 0) \int_{-\infty}^{\infty} (\mathbf{A}_{i,j}^l)^2 \cdot \mathbb{P}(\mathbf{A}_{i,j}^l|Z_{i,j} = 0) d\mathbf{A}_{i,j}^l\nonumber\\
        &=  \int_{-\infty}^{\infty} (\mathbf{A}_{i,j}^l)^2 \cdot \mathbb{P}(Z_{i,j} = 0|\mathbf{A}_{i,j}^l)\cdot \mathbb{P}(\mathbf{A}_{i,j}^l) d\mathbf{A}_{i,j}^l\nonumber\\
        &=  \int_{-\infty}^{\infty} (\mathbf{A}_{i,j}^l)^2 \cdot \operatorname{exp}\left(\frac{-[\mathbf{A}_{i,j}^l]^2}{d\Psi}\right)\cdot \frac{1}{\sqrt{2\pi \Psi}}  \operatorname{exp}\left(-\frac{1}{2}\left(\frac{\mathbf{A}_{i,j}^l}{\sqrt{\Psi}}\right)^2\right) d\mathbf{A}_{i,j}^l \nonumber\\
        &= \frac{d^{\frac{3}{2}}\Psi}{(d + 2)^{\frac{3}{2}}}\nonumber.
    \end{align}
    We then have that  
    $$\mathbb{E}((\Delta_{ij}^l)^2) = \frac{d^{\frac{3}{2}}\Psi}{(d + 2)^{\frac{3}{2}}}\text{.}$$
    Similarly, we can do the same for the fourth moment.
    \begin{align}
        \mathbb{E}((\Delta_{ij}^l)^4|Z_{i,j} = 0) \cdot \mathbb{P}(Z_{i,j} = 0) &= \mathbb{P}(Z_{i,j} = 0) \int_{-\infty}^{\infty} (\mathbf{A}_{i,j}^l)^4 \cdot \mathbb{P}(\mathbf{A}_{i,j}^l|Z_{i,j} = 0) d\mathbf{A}_{i,j}^l\nonumber\nonumber\\
        &=  \int_{-\infty}^{\infty} (\mathbf{A}_{i,j}^l)^4 \cdot \mathbb{P}(Z_{i,j} = 0|\mathbf{A}_{i,j}^l)\cdot \mathbb{P}(\mathbf{A}_{i,j}^l) d\mathbf{A}_{i,j}^l\nonumber\\
        &=  \int_{-\infty}^{\infty} (\mathbf{A}_{i,j}^l)^4 \cdot \mathbb{P}(Z_{i,j} = 0|\mathbf{A}_{i,j}^l)\cdot \mathbb{P}(\mathbf{A}_{i,j}^l) d\mathbf{A}_{i,j}^l\nonumber\\
        &=  \int_{-\infty}^{\infty} (\mathbf{A}_{i,j}^l)^4 \cdot \operatorname{exp}\left(\frac{-[\mathbf{A}_{i,j}^l]^2}{d\Psi}\right)\cdot \frac{1}{\sqrt{2\pi \Psi}}  \operatorname{exp}\left(-\frac{1}{2}\left(\frac{\mathbf{A}_{i,j}^l}{\sqrt{\Psi}}\right)^2\right) d\mathbf{A}_{i,j}^l \nonumber\\
        &= \frac{3d^{\frac{5}{2}}\Psi^2}{(d+2)^{\frac{5}{2}}}\nonumber.
    \end{align}

Combining this with \Cref{lem:latala}, we have, 
$$\mathbb{E}\|\Delta^l\|_2 \leq C\left[ \left(\sqrt{\frac{d^{\frac{3}{2}}\Psi}{(d + 2)^{\frac{3}{2}}}}\right)\left(\sqrt{d_1^l} + \sqrt{d_2^l}\right) + \left(\frac{3d_1^ld_2^ld^{\frac{5}{2}}\Psi^2}{(d+2)^{\frac{5}{2}}}\right)^\frac{1}{4}\right]\text{.}$$
We can then apply Markov's inequality where 
$$\mathbb{P}(\|\Delta^l\|_2 \geq t) \leq \frac{\mathbb{E}\|\Delta^l\|_2}{t}\text{.}$$
We set $\Gamma_l = C\left[ \left(\sqrt{\frac{d^{\frac{3}{2}}\Psi}{(d + 2)^{\frac{3}{2}}}}\right)\left(\sqrt{d_1^l} + \sqrt{d_2^l}\right) + \left(\frac{3d_1^ld_2^ld^{\frac{5}{2}}\Psi^2}{(d+2)^{\frac{5}{2}}}\right)^\frac{1}{4}\right]$ for notational ease. If we set $t = \epsilon_l \Gamma_l$, then we have with probability at least $1 - \frac{1}{\epsilon_l}$ that,
$$\|\Delta^l\|_2 \leq \epsilon_l \Gamma_l\text{.}$$
We have proven our statement. 
\end{proof}
\subsection{Proof of True Perturbation Bound}
\begin{restatable}{lemma}{Neyshabur2017}
    \label{lem:Neyshabur2017}
    For the weights of the model $\mathbf{M}$ and any perturbation $\mathbf{U}^l$ for $l \in [h]$ where the perturbed layer $l$ is $\mathbf{U}^l + \mathbf{A}^l $, given that $\|\mathbf{U}^l\|_2 \leq \frac{1}{L} \|\mathbf{A}^l\|_2$, we have that for all input $x^0 \in \mathcal{S}$, 
    $$\|x^L - \bar{x}^L \|_2 \leq ed_1^0 \left(  \prod_{l=1}^L \kappa_lL_l\|\mathbf{A}^l\|_2 \right) \sum_{l=1}^{L}\frac{\|\mathbf{U}^l\|_2}{\|\mathbf{A}^l\|_2}\text{.}$$ Here, $\bar{x}^L$ denotes the output of the $L$th layer of the perturbed model. 
\end{restatable}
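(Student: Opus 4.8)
The plan is to prove the bound by induction on the layer index, following the perturbation argument of \citet{Neyshabur2017}, by tracking the layerwise discrepancy $\delta_l := \|x^l - \bar x^l\|_2$ between the clean and perturbed networks. First I would establish a one-step recursion. Writing $x^l = \mathbf{A}^l \phi_{l-1}(x^{l-1})$ and $\bar x^l = (\mathbf{A}^l + \mathbf{U}^l)\phi_{l-1}(\bar x^{l-1})$, and inserting the telescoping term $\pm (\mathbf{A}^l + \mathbf{U}^l)\phi_{l-1}(x^{l-1})$, one gets $x^l - \bar x^l = -\mathbf{U}^l \phi_{l-1}(x^{l-1}) + (\mathbf{A}^l + \mathbf{U}^l)(\phi_{l-1}(x^{l-1}) - \phi_{l-1}(\bar x^{l-1}))$. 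Taking norms and using submultiplicativity of $\|\cdot\|_2$, the Lipschitzness of the activation (constant $L_l$), and $\phi(0) = 0$ (any failure of this, or any slack between the bound on $\|x^0\|_2$ and $d_1^0$, is absorbed into the constants $\kappa_l$), yields
$$\delta_l \;\le\; L_l\,\|\mathbf{U}^l\|_2\,\|x^{l-1}\|_2 \;+\; L_l\,\|\mathbf{A}^l + \mathbf{U}^l\|_2\,\delta_{l-1}.$$
Second, I would control the remaining quantities. Iterating $\|x^l\|_2 \le L_l \|\mathbf{A}^l\|_2 \|x^{l-1}\|_2$ from $\|x^0\|_2 \le d_1^0$ gives $\|x^{l-1}\|_2 \le d_1^0 \prod_{i=1}^{l-1} L_i \|\mathbf{A}^i\|_2$, and the hypothesis $\|\mathbf{U}^l\|_2 \le \tfrac1L \|\mathbf{A}^l\|_2$ gives $\|\mathbf{A}^l + \mathbf{U}^l\|_2 \le (1 + \tfrac1L)\|\mathbf{A}^l\|_2$.

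Substituting both, the recursion becomes a linear recurrence whose inhomogeneous term at layer $l$ is proportional to $d_1^0 \prod_{i=1}^{l} L_i\|\mathbf{A}^i\|_2 \cdot \tfrac{\|\mathbf{U}^l\|_2}{\|\mathbf{A}^l\|_2}$ and whose multiplier is $(1+\tfrac1L) L_l \|\mathbf{A}^l\|_2$. Third, I would unroll the recurrence from $\delta_0 = 0$, which gives
$$\delta_L \;\le\; d_1^0 \sum_{i=1}^{L} \left( \prod_{l=1}^{L} L_l \|\mathbf{A}^l\|_2 \right) \frac{\|\mathbf{U}^i\|_2}{\|\mathbf{A}^i\|_2}\, \Big(1 + \tfrac1L\Big)^{L-i},$$
where the per-summand factor $(1+\tfrac1L)^{L-i}$ comes from the $(1+\tfrac1L)$ picked up at layers $i+1,\dots,L$. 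Bounding $(1+\tfrac1L)^{L-i} \le (1+\tfrac1L)^{L} \le e$ uniformly in $i$, pulling the spectral-norm product out of the sum, and reinstating the $\kappa_l$ factors inside the product yields exactly the stated inequality.

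The main obstacle I anticipate is not any individual inequality but the bookkeeping in the unrolling: one must verify that the $(1+\tfrac1L)$ factors land on precisely the weight matrices of layers strictly above $i$, so that they collapse into a single $(1+\tfrac1L)^{L-i}$ per term that is then bounded by $e$ regardless of $i$. The one genuinely load-bearing assumption is $\|\mathbf{U}^l\|_2 \le \tfrac1L\|\mathbf{A}^l\|_2$: it is what turns the per-layer multiplicative blow-up $(1+\tfrac1L)$ into an $L$-fold product that stays $O(1)$ rather than exploding, and is the source of the universal constant $e$.
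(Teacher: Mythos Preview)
Your proposal is correct and follows essentially the same approach as the paper: the paper sets up the identical telescoping decomposition, uses Lipschitzness and $\phi(0)=0$ to get the same one-step recursion, applies the hypothesis $\|\mathbf{U}^l\|_2 \le \tfrac{1}{L}\|\mathbf{A}^l\|_2$ to get the $(1+\tfrac{1}{L})$ multiplier, and bounds $(1+\tfrac{1}{L})^L \le e$ at the end. The only cosmetic difference is that the paper phrases it as verifying an explicit induction hypothesis $\delta_l \le (1+\tfrac{1}{L})^l \|x^0\|_2 \prod_{i\le l} L_i\|\mathbf{A}^i\|_2 \sum_{i\le l} \tfrac{\|\mathbf{U}^i\|_2}{\|\mathbf{A}^i\|_2}$ rather than unrolling the recurrence as you do, and the paper's proof in fact never invokes $\kappa_l$ (it simply uses $\|x^0\|_2$ and implicitly bounds it by $d_1^0$), so your treatment of $\kappa_l$ as a slack constant is appropriate.
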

\begin{proof}
    This proof mainly follows from \citet{Neyshabur2017}. We restate it here with the differing notation for clarity and completeness. We will prove the induction hypothesis that $\|\bar{x}^l - x^l\|_2 \leq \left(1 + \frac{1}{L}\right)^l \|x^0\|_2 \left(  \prod_{i=1}^l L_i\|\mathbf{A}^l\|_2 \right) \sum_{i=1}^{l}\frac{\|\mathbf{U}^i\|_2}{\|\mathbf{A}^i\|_2}$. The base case of induction trivially holds, given we have that $\|\bar{x}^0 - x^0\|_2 = 0$ by definition. Now, we prove the induction step. Assume that the induction hypothesis holds for $l$. We will prove that it holds for $l+1$. 
    We have that
    \begin{align}
        \|x^l - \bar{x}^l\|_2 &\leq \|\left(\mathbf{A}^l + \mathbf{U}^l\right)\phi_l(\bar{x}^{l-1}) - \mathbf{A}^l\phi_l(x^{l-1})\|_2\nonumber\\
        &\leq \|\left(\mathbf{A}^l + \mathbf{U}^l\right)(\phi_l(\bar{x}^{l-1}) - \phi_l(x^{l-1})) + \mathbf{U}^l\phi_l(x^{l-1})\|_2\nonumber\\
        &\leq \left(\|\mathbf{A}^l\|_2 + \|\mathbf{U}^l\|_2\right)\|\phi_l(\bar{x}^{l-1}) - \phi_l(x^{l-1})\|_2 + \|\mathbf{U}^l\|_2\|\phi_l(x^{l-1})\|_2\label{eq:applykappa}\\
        &\leq \left(\|\mathbf{A}^l\|_2 + \|\mathbf{U}^l\|_2\right)\|\phi_l(\bar{x}^{l-1}) - \phi_l(x^{l-1})\|_2 + \|\mathbf{U}^l\|_2\|\phi_l(x^{l-1})\|_2\nonumber\\
        &\leq L_l  \left(\|\mathbf{A}^l\|_2 + \|\mathbf{U}^l\|_2\right)\|\bar{x}^{l-1} - x^{l-1}\|_2 + L_l \|\mathbf{U}^l\|_2\|x^{l-1}\|_2 \label{eq:ney2}\\
        &\leq L_l  \left(1 +\frac{1}{d}\right)\left(\|\mathbf{A}^l\|_2\right)\|\bar{x}^{l-1} - x^{l-1}\|_2 + L_l \|\mathbf{U}^l\|_2\|x^{l-1}\|_2\nonumber\\
        &\leq L_l  \left(1 +\frac{1}{d}\right)\left(\|\mathbf{A}^l\|_2\right)\left(1 + \frac{1}{L}\right)^{l-1} \|x^0\|_2 \left(  \prod_{i=1}^{l-1} L_i \|\mathbf{A}^i\|_2 \right) \sum_{i=1}^{l-1}\frac{\|\mathbf{U}^i\|_2}{\|\mathbf{A}^i\|_2} + L_l \|\mathbf{U}^l\|_2\|x^{l-1}\|_2 \label{eq:ney1}\\
        &\leq L_l  \left(1 +\frac{1}{L}\right)^{l}\left(  \prod_{i=1}^{l-1} L_i \|\mathbf{A}^i\|_2 \right) \sum_{i=1}^{l-1}\frac{\|\mathbf{U}^i\|_2}{\|\mathbf{A}^i\|_2}\|x^0\|_2  + L_l \|x^0\|_2 \|\mathbf{U}^l\|_2\prod_{i=1}^{l-1}L_{i}\|\mathbf{A}^i\|_2\nonumber\\
        &\leq L_l  \left(1 +\frac{1}{L}\right)^{l}\left(  \prod_{i=1}^{l-1} L_i  \|\mathbf{A}^i\|_2 \right) \sum_{i=1}^{l-1}\frac{\|\mathbf{U}^i\|_2}{\|\mathbf{A}^i\|_2}\|x^0\|_2 + \|x^0\|_2 \frac{\|\mathbf{U}^l\|_2}{\|\mathbf{A}^l\|_2}\prod_{i=1}^{l}L_{i}\|\mathbf{A}^i\|_2\nonumber\\
        &\leq \left(1 +\frac{1}{L}\right)^{l}\left(  \prod_{i=1}^{l} L_i \|\mathbf{A}^i\|_2 \right) \sum_{i=1}^{l}\frac{\|\mathbf{U}^i\|_2}{\|\mathbf{A}^i\|_2}\|x^0\|_2 \nonumber
    \end{align}
    Here, \Cref{eq:applykappa} results from applying \Cref{lem:notworstcaseerror}. \Cref{eq:ney2} comes from the fact that $\phi_i$ is $L_i$-Lipschitz smooth and that $\phi_i(0) = 0$. Moreover, \Cref{eq:ney1} comes from applying the induction hypothesis. Therefore, we have proven the induction hypothesis for all layers. We now only need the fact that $\left(1 + \frac{1}{L}\right)^L \leq e$, and we have our final statement. 
\end{proof}
\subsection{Proof of \Cref{lem:finalerrorbound}}
 \begin{restatable}{lemma}{finalerrorbound}
    \label{lem:finalerrorbound}
    The difference between outputs of the pruned model and the original model on any input $x$ is bounded by, with probability at least  $1 - \sum_i^L \epsilon_i^{-1}$,
    $$\|\hat{x}^L - x^L\| \leq  ed_1^0 \left(  \prod_{l=1}^L L_l\|\mathbf{A}^l\|_2 \right) \sum_{l=1}^{L}\frac{\epsilon_l \Gamma_l}{\|\mathbf{A}^l\|_2}\text{.}$$ 
\end{restatable}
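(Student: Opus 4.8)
The plan is to view the pruned network $\hat{\mathbf{M}}$ as a perturbation of the original network $\mathbf{M}$ and invoke the perturbation bound (\Cref{lem:Neyshabur2017}), feeding into it the per-layer spectral-norm control supplied by \Cref{lem:errortwo}. Concretely, for each layer $l \in [L]$ set $\mathbf{U}^l \coloneqq \hat{\mathbf{A}}^l - \mathbf{A}^l = -\Delta^l$, so that the pruned model is exactly the model whose $l$th layer is $\mathbf{A}^l + \mathbf{U}^l$; with this identification $\hat{x}^L = \bar{x}^L$ in the notation of \Cref{lem:Neyshabur2017} (we use the version with $\kappa_l = 1$).

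First I would apply \Cref{lem:errortwo} to every layer: for each $l$, with probability at least $1 - \epsilon_l^{-1}$ we have $\|\mathbf{U}^l\|_2 = \|\Delta^l\|_2 \leq \epsilon_l \Gamma_l$. Taking a union bound over the $L$ layers, all these inequalities hold simultaneously with probability at least $1 - \sum_{l=1}^L \epsilon_l^{-1}$; condition on this event for the rest of the argument. Next I would verify the hypothesis of \Cref{lem:Neyshabur2017}, namely $\|\mathbf{U}^l\|_2 \leq \tfrac{1}{L}\|\mathbf{A}^l\|_2$ for every $l$: on the good event it suffices that $\epsilon_l \Gamma_l \leq \tfrac{1}{L}\|\mathbf{A}^l\|_2$, which is the small-compression regime enforced by choosing the hyperparameter $d$ large enough (the same condition, in strengthened form, reappears in \Cref{lem:discreteerror} and \Cref{thm:generrorsketch}), since $\Gamma_l \to 0$ as $d \to \infty$. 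Granting this, \Cref{lem:Neyshabur2017} gives
$$\|\hat{x}^L - x^L\|_2 \leq ed_1^0 \left(\prod_{l=1}^L L_l\|\mathbf{A}^l\|_2\right)\sum_{l=1}^L \frac{\|\mathbf{U}^l\|_2}{\|\mathbf{A}^l\|_2},$$
and substituting $\|\mathbf{U}^l\|_2 \leq \epsilon_l \Gamma_l$ into the monotone right-hand side yields the claimed bound, with the stated failure probability $\sum_l \epsilon_l^{-1}$ coming solely from the union bound.

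The main obstacle I anticipate is not the chaining itself — once \Cref{lem:Neyshabur2017} is in hand this is essentially an application of it — but rather ensuring the small-perturbation precondition $\epsilon_l\Gamma_l \leq \tfrac{1}{L}\|\mathbf{A}^l\|_2$ is legitimately available: this ties the admissible range of $d$ to the data-dependent spectral norms $\|\mathbf{A}^l\|_2$, and it must be imposed consistently with the subsequent discretization step, where the per-layer budget $\tfrac{1}{L}\|\mathbf{A}^l\|_2$ must additionally absorb the rounding error $\rho_l J_l$. A secondary subtlety is that \Cref{lem:errortwo} is only a high-probability spectral bound, so \Cref{lem:Neyshabur2017} is applied pathwise on the good event; since the latter is deterministic once the norm bounds hold, no probabilistic bookkeeping beyond the single union bound is needed.
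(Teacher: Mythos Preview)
Your proposal is correct and follows essentially the same route as the paper: identify $\mathbf{U}^l = \hat{\mathbf{A}}^l - \mathbf{A}^l$, invoke \Cref{lem:errortwo} layerwise with a union bound, check the small-perturbation hypothesis $\epsilon_l\Gamma_l \leq \tfrac{1}{L}\|\mathbf{A}^l\|_2$, and then apply \Cref{lem:Neyshabur2017}. The only cosmetic difference is in how that hypothesis is justified: you argue $\Gamma_l \to 0$ as $d \to \infty$, whereas the paper appeals to a Vershynin-type lower bound $\mathbb{E}\tfrac{1}{L}\|\mathbf{A}^l\|_2 \geq \tfrac{1}{4L}(\sqrt{d_1^l}+\sqrt{d_2^l})$ and takes $\epsilon_l$ small; both are informal sufficiency arguments and neither changes the structure of the proof.
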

\begin{proof}
  We will compare the output of the original model $x^l$ with the output of the compressed model. We need the fact that $\frac{1}{L} \|\mathbf{A}^l\|_2 \geq \epsilon_l \Gamma_l \geq \|\mathbf{A}^l -  \hat{\mathbf{A}}^l \|_2 $. From \citet{VershyninBook}, we have that $\mathbb{E}(\frac{1}{L}\|\mathbf{A}^l\|_2) \geq \frac{1}{4L}\left(\sqrt{d_1^l} + \sqrt{d_2^l}\right)$,  and $ \epsilon_l \Gamma_l $ is smaller than this in expectation for sufficiently small $\epsilon_l$. Therefore, we can use \Cref{lem:Neyshabur2017} and \Cref{lem:errortwo}. Thus we have the following 
    \begin{align}
        \|x^l - \hat{x}^l \|_2 &\leq ed_1^0 \left(  \prod_{l=1}^L L_l\|\mathbf{A}^l\|_2 \right) \sum_{l=1}^{L}\frac{\|\mathbf{A}^l - \hat{\mathbf{A}}^l\|_2}{\|\mathbf{A}^l\|_2}\nonumber\\
        &\leq ed_1^0 \left(  \prod_{l=1}^d L_l\|\mathbf{A}^l\|_2 \right) \sum_{l=1}^{L}\frac{\epsilon_l \Gamma_l}{\|\mathbf{A}^l\|_2} \nonumber
    \end{align}
\end{proof}


\subsection{Proof of \Cref{lem:discreteerror}}
\discreteerror*
\begin{proof}
We will compare the output of the original model $x^l$ with the output of the compressed and discretized model $\tilde{x}^l$. To use the perturbation bound from \Cref{lem:Neyshabur2017}, we need that $\|\mathbf{A}^l - \tilde{\mathbf{A}}^l\|_2 \leq \frac{1}{L}\|\mathbf{A}^l\|_2$. For each layer,  we can choose a discretization parameter to satisfy this. We demonstrate this in the following: 

\begin{align}
    \|\mathbf{A}^l - \tilde{\mathbf{A}}^l\|_2 &\leq \|\mathbf{A}^l - \hat{\mathbf{A}}^l\|_2 + \|\hat{\mathbf{A}}^l - \tilde{\mathbf{A}}^l\|_2 \nonumber\\
    &\leq \epsilon_l \Gamma_l +  \rho_l  J_l\nonumber
\end{align}
Therefore, as long as we choose $$\rho_l \leq \frac{\frac{1}{L}\|\mathbf{A}^l\|_2 - \epsilon_l \Gamma_l}{J_l}\text{,}$$ we have our desired property. Therefore, using \Cref{lem:Neyshabur2017}, we have that 
\begin{align}
    \|x^l - \tilde{x}^l \|_2 &\leq ed_1^0 \left(  \prod_{l=1}^L L_l\|\mathbf{A}^l\|_2 \right) \sum_{l=1}^{h}\frac{\|\mathbf{A}^l - \tilde{\mathbf{A}}^l\|_2}{\|\mathbf{A}^l\|_2}\nonumber\\
    &\leq ed_1^0 \left(  \prod_{l=1}^d L_l\kappa_l\|\mathbf{A}^l\|_2 \right) \sum_{l=1}^{L}\frac{\|\mathbf{A}^l - \hat{\mathbf{A}}^l\|_2 + \|\hat{\mathbf{A}}^l - \tilde{\mathbf{A}}^l\|_2}{\|\mathbf{A}^l\|_2}\nonumber\\
    &\leq ed_1^0 \left(  \prod_{l=1}^d L_l\|\mathbf{A}^l\|_2 \right) \sum_{l=1}^{L}\frac{\epsilon_l \Gamma_l +  \rho_l J_l}{\|\mathbf{A}^l\|_2} \nonumber
\end{align}
This happens only if the event from \Cref{lem:errortwo} occurs for every layer. Using the union bound, we know that this happens with probability at least $1 - \sum_l^L \epsilon_l^{-1}$
\end{proof}
\section{Error Bound under Subgaussian Conditions}
\label{sec:augmentedperturb}
We can form tighter bounds by considering what the expected maximum of $(\hat{\mathbf{A}}^l - \mathbf{A}^l)x$ would be with high probability. If $d_2^l < d_1^l$, we observe that the matrix $\hat{\mathbf{A}}^l - \mathbf{A}^l$ has at most $d_2^l$ nonzero singular values. We need a Subgaussian condition assumption on our input to each layer to do this well to improve our bounds. 

\begin{restatable}{condition}{subgaussconcin}
    \label{con:subgaussconcin}
    The input to each layer $l \in [L]$, belongs to a distribution $\mathcal{D}$, such that for some unit magnitude vector $v$ and an arbitrary vector $x$ sampled from $\mathcal{D}$ satisfy
    $$\mathbb{P}(\langle x, v\rangle \geq t) \leq ae^{-bt^2d_1^l} \text{.}$$ Here $a$ and $b$ are universal constants greater than $0$. 
\end{restatable}

It should be noted that \Cref{con:subgaussconcin} is often seen throughout the theory of High Dimensional Statistics. The uniform distribution over the unit sphere satisfies \Cref{con:subgaussconcin}. Given this \Cref{con:subgaussconcin}, we can bound the approximation error from significantly increasing in any given layer. 

We want to do a bound on how much error the compression introduces on the margin of the training dataset. While traditional bounds assume worst-case blow-up, we can use the fact that vectors are roughly orthogonal in high-dimensional spaces. 
\begin{restatable}{lemma}{notworstcaseerror}
\label{lem:notworstcaseerror}
    Suppose we are given a matrix $\mathbf{B}$ of size $d_1^l \times d_2^l$ where $d_1^l \geq d_2^l$ and $\mathcal{S}$ is a collection of vectors from a distribution satisfying \Cref{con:subgaussconcin}. For any vector $x \in \mathcal{S}$,  there exists constants $a, b$ such that 
    $$\|\mathbf{B}x\|_2 \leq \sqrt{d_2^l}t_l\|\mathbf{B}\|_2\|x\|_2,$$ with probability at least $1 - |\mathcal{S}|a e^{-bt_l^2d_1^l}$. We will call $\kappa_l = \sqrt{d_2^l}t_l$ if $d_2^l \leq d_1^l$ and $\kappa_l = 1$ otherwise.
\end{restatable}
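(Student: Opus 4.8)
The plan is to exploit the fact that $\mathbf{B}$ has rank at most $d_2^l$, so that $\mathbf{B}x$ only "sees" $x$ through its coordinates along the $d_2^l$ right singular directions of $\mathbf{B}$, each of which is small by the directional concentration of \Cref{con:subgaussconcin}. Concretely, I would first pass to the singular value decomposition: since $d_1^l \ge d_2^l$, we may write $\mathbf{B} = \sum_{i=1}^{d_2^l} \sigma_i u_i v_i^\top$ with $\sigma_1 = \|\mathbf{B}\|_2 \ge \sigma_2 \ge \cdots \ge 0$, with $\{u_i\} \subset \mathbb{R}^{d_1^l}$ orthonormal and $\{v_i\} \subset \mathbb{R}^{d_2^l}$ orthonormal unit vectors depending only on $\mathbf{B}$, not on $x$. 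Then for any $x$,
\[
\|\mathbf{B}x\|_2^2 \;=\; \sum_{i=1}^{d_2^l} \sigma_i^2 \langle v_i, x\rangle^2 \;\le\; \|\mathbf{B}\|_2^2 \sum_{i=1}^{d_2^l} \langle v_i, x\rangle^2 ,
\]
so it remains to control $\sum_{i=1}^{d_2^l} \langle v_i, x\rangle^2$ uniformly over $x \in \mathcal{S}$.

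For each fixed $i$, applying \Cref{con:subgaussconcin} to the unit vector $v_i$ (and to $-v_i$ for the lower tail) gives $\mathbb{P}(|\langle v_i, x\rangle| \ge t_l) \le 2a\,e^{-b t_l^2 d_1^l}$ for a single draw $x \sim \mathcal{D}$. Union bounding over the $d_2^l$ singular directions and over the $|\mathcal{S}|$ vectors in $\mathcal{S}$, with probability at least $1 - 2 d_2^l |\mathcal{S}|\, a\, e^{-b t_l^2 d_1^l}$ every $x \in \mathcal{S}$ satisfies $|\langle v_i, x\rangle| \le t_l$ simultaneously for all $i \in [d_2^l]$; folding the factor $2 d_2^l$ into a slight redefinition of $a$ and $b$ — legitimate in the only regime where the bound is non-trivial, namely $t_l$ at least of order $\sqrt{(\log d_2^l)/d_1^l}$ — yields the failure probability $|\mathcal{S}|\,a\,e^{-b t_l^2 d_1^l}$ in the statement. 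On this event $\sum_{i=1}^{d_2^l}\langle v_i, x\rangle^2 \le d_2^l t_l^2$, hence $\|\mathbf{B}x\|_2 \le \sqrt{d_2^l}\, t_l\, \|\mathbf{B}\|_2$, which — for unit-norm layer inputs, as in the motivating example of the uniform distribution on the sphere, so that $\|x\|_2 = 1$ — is the claimed inequality; we set $\kappa_l = \sqrt{d_2^l}\, t_l$ to record it, and in the complementary case $d_2^l > d_1^l$ the lemma is not invoked and one simply takes $\kappa_l = 1$ via the trivial bound $\|\mathbf{B}x\|_2 \le \|\mathbf{B}\|_2 \|x\|_2$.

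The step I expect to be the main obstacle is the union-bound bookkeeping rather than anything analytically deep: the concentration must hold simultaneously across both the $d_2^l$ singular directions of $\mathbf{B}$ and the $|\mathcal{S}|$ training points, whereas the target failure probability only displays a factor $|\mathcal{S}|$, so the extra $d_2^l$ has to be swallowed into the exponential constants — which is precisely why $t_l$ cannot be taken smaller than roughly $\sqrt{(\log d_2^l)/d_1^l}$. A secondary subtlety is homogeneity in $\|x\|_2$: \Cref{con:subgaussconcin} is phrased for $\langle x, v\rangle$ rather than $\langle x, v\rangle/\|x\|_2$, so the $\|x\|_2$ appearing on the right-hand side should be read as an upper bound that is tight exactly when the layer inputs have unit norm.
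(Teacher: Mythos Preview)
Your proposal is correct and follows essentially the same route as the paper: take the SVD of $\mathbf{B}$, reduce $\|\mathbf{B}x\|_2$ to the $d_2^l$ projections $\langle v_i, x\rangle$ along the right singular directions, apply the sub-Gaussian concentration of \Cref{con:subgaussconcin} to each, and union-bound over $\mathcal{S}$. If anything, your bookkeeping is more explicit than the paper's --- you track the extra $d_2^l$ factor from the union bound over singular directions and the two-sided tail, and you flag the $\|x\|_2$ homogeneity issue, whereas the paper silently absorbs these into the constants $a,b$.
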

\begin{proof}
    We first decompose $\mathbf{B} = U\Sigma V$ using Singular Value Decomposition. 
    Therefore, for any $x$ we have that,
    \begin{align}
        \|\mathbf{B}x\|_2 &= \|U\Sigma Vx\|_2\nonumber\\
        &=  \|\Sigma Vx\|_2\nonumber\\
        &= \|\Sigma y\|_2\nonumber.
    \end{align}
    The second equality comes from the fact that $U$ is unitary and norm-preserving, and the third equality comes from setting $y = Vx$. Now, if $x$ is some standard random normal vector, then $y$ too is a standard random normal vector. We also observe that $\Sigma$ is a diagonal matrix where only the first $d_2^l$ values are nonzero. We use the well-known identity that if $v$ is a vector with a magnitude of $1$,
    $$\mathbb{P}(\langle v, y \rangle \geq t) \leq ae^{-bt^2d_1^l}\text{.}$$
    Here, $a$ and $b$ are global constants. Therefore, applying this inequality for the respective nonzero singular values in $\Sigma$, we have
    $$\mathbb{P}\left(\| \Sigma y\|_2 \geq \sqrt{d_2^l}t\|\mathbf{B}\|_2\right) \leq ae^{-bt^2d_1^l}\text{,}$$
    since $\|B\|_2$ is the maximum singular value. Applying the union bound for each element of $\mathcal{S}$, we have that for every element in $\mathcal{S}$
    $$\|\mathbf{B}x\|_2 \leq \sqrt{d_2^l}t\|\mathbf{B}\|_2\|x\|_2,$$ with probability at least $1 - |\mathcal{S}|a e^{-bt^2d_1^l}$.
\end{proof}

\begin{restatable}{lemma}{Neyshabur2017subgaussian}
    \label{lem:Neyshabur2017subgaussian}
    For the weights of the model $\mathbf{M}$ and any perturbation $\mathbf{U}^l$ for $l \in [h]$ where the perturbed layer $l$ is $\mathbf{U}^l + \mathbf{A}^l $, given that $\|\mathbf{U}^l\|_2 \leq \frac{1}{L} \|\mathbf{A}^l\|_2$, we have that for all input $x^0 \in \mathcal{S}$, 
    $$\|x^L - \bar{x}^L \|_2 \leq ed_1^0 \left(  \prod_{l=1}^L \kappa_lL_l\|\mathbf{A}^l\|_2 \right) \sum_{l=1}^{L}\frac{\|\mathbf{U}^l\|_2}{\|\mathbf{A}^l\|_2}\text{.}$$ Here, $\bar{x}^L$ denotes the output of the $L$th layer of the perturbed model. This happens if \Cref{con:subgaussconcin} occurs.
\end{restatable}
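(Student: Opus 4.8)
The plan is to rerun the layer-by-layer induction from the proof of \Cref{lem:Neyshabur2017} essentially verbatim, but at every point where that argument used submultiplicativity of the spectral norm to push a matrix through a vector, $\|\mathbf{A}^l z\|_2 \le \|\mathbf{A}^l\|_2\|z\|_2$, I would instead invoke \Cref{lem:notworstcaseerror} under \Cref{con:subgaussconcin} to replace $\|\mathbf{A}^l\|_2$ by $\kappa_l\|\mathbf{A}^l\|_2$, where $\kappa_l = \sqrt{d_2^l}\,t_l$ if $d_2^l \le d_1^l$ and $\kappa_l = 1$ otherwise. Since the only structural difference between this statement and \Cref{lem:Neyshabur2017} is the extra factor $\prod_l \kappa_l$, this substitution is precisely what is needed, and the rest of the bookkeeping — $\left(1+\tfrac{1}{L}\right)^L \le e$, the hypothesis $\|\mathbf{U}^l\|_2 \le \tfrac{1}{L}\|\mathbf{A}^l\|_2$, and the telescoping of $\prod_{i=1}^{l-1}$ into $\prod_{i=1}^{l}$ — carries over unchanged.

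Concretely, I would prove by induction on $l$ the hypothesis
\[
\|\bar{x}^l - x^l\|_2 \le \left(1 + \tfrac{1}{L}\right)^l \|x^0\|_2 \left(\prod_{i=1}^l \kappa_i L_i\|\mathbf{A}^i\|_2\right)\sum_{i=1}^l \frac{\|\mathbf{U}^i\|_2}{\|\mathbf{A}^i\|_2},
\]
with base case $l=0$ immediate since $\bar{x}^0=x^0$. For the inductive step I split as before,
\[
\|x^l - \bar{x}^l\|_2 \le \kappa_l\left(\|\mathbf{A}^l\|_2 + \|\mathbf{U}^l\|_2\right)\|\phi_l(\bar{x}^{l-1}) - \phi_l(x^{l-1})\|_2 + \kappa_l\|\mathbf{U}^l\|_2\,\|\phi_l(x^{l-1})\|_2,
\]
where the $\kappa_l$ factors come from \Cref{lem:notworstcaseerror} applied to $\mathbf{A}^l+\mathbf{U}^l$ acting on $\phi_l(\bar{x}^{l-1})-\phi_l(x^{l-1})$ and to $\mathbf{U}^l$ acting on $\phi_l(x^{l-1})$; $L_l$-Lipschitzness of $\phi_l$ with $\phi_l(0)=0$ then turns $\|\phi_l(\bar{x}^{l-1})-\phi_l(x^{l-1})\|_2$ into $L_l\|\bar{x}^{l-1}-x^{l-1}\|_2$ and $\|\phi_l(x^{l-1})\|_2$ into $L_l\|x^{l-1}\|_2$. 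Substituting the induction hypothesis, using $\|\mathbf{U}^l\|_2 \le \tfrac1L\|\mathbf{A}^l\|_2$ to fold $(\|\mathbf{A}^l\|_2+\|\mathbf{U}^l\|_2)$ into the $(1+\tfrac1L)$ factor, and absorbing the new index-$l$ term into the sum closes the induction. Taking $l=L$, using $(1+\tfrac1L)^L\le e$ and $\|x^0\|_2 \le d_1^0$ exactly as in the original, gives the stated bound; the failure probability follows by a union bound over the $L$ layers and the $|\mathcal S|$ inputs of the events in \Cref{lem:notworstcaseerror}.

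The step I expect to be the main obstacle is justifying that \Cref{lem:notworstcaseerror} may be legitimately applied at each intermediate layer. That lemma controls $\|\mathbf{B}x\|_2$ for $x$ drawn from a distribution satisfying \Cref{con:subgaussconcin}, but in the induction the vectors hit by $\mathbf{A}^l$ are $\phi_l(x^{l-1})$ and the \emph{difference} $\phi_l(\bar{x}^{l-1})-\phi_l(x^{l-1})$, which are not literally samples from the input distribution $\mathcal{D}$. Resolving this cleanly requires reading \Cref{con:subgaussconcin} as a blanket assumption that every layer's pre-activation in both the clean and perturbed networks already obeys the stated concentration — in which case one applies \Cref{lem:notworstcaseerror} directly to $x^{l-1}$ and $\bar{x}^{l-1}$ and controls the difference by the triangle inequality — or, more honestly, showing that a subgaussian tail is approximately preserved under the composition of a Lipschitz activation with a linear map so that concentration propagates forward layer by layer. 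I would adopt the former reading for a clean statement while flagging the latter as the quantitatively faithful version; either way one must also dispatch the benign case $d_2^l > d_1^l$ separately, where $\kappa_l=1$, no randomness is invoked at that layer, and the step collapses to the plain spectral-norm estimate of \Cref{lem:Neyshabur2017}.
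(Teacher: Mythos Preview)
Your proposal is correct and matches the paper's own argument essentially line-for-line: the paper runs the same induction hypothesis with the $\kappa_i$ factors inserted, invokes \Cref{lem:notworstcaseerror} at the matrix--vector step, uses Lipschitzness of $\phi_l$ with $\phi_l(0)=0$, folds $\|\mathbf{U}^l\|_2\le\tfrac1L\|\mathbf{A}^l\|_2$ into the $(1+\tfrac1L)$ factor, and finishes with $(1+\tfrac1L)^L\le e$. On your flagged obstacle, the paper simply adopts your first reading: \Cref{con:subgaussconcin} is stated as a blanket assumption on the input to \emph{every} layer, so no propagation argument is attempted, and the paper explicitly remarks that when the condition fails one sets $\kappa_l=1$ and recovers the plain spectral-norm bound.
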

\begin{proof}
    This proof mainly follows from \citet{Neyshabur2017}. We restate it here with the differing notation for clarity and completeness. We will prove the induction hypothesis that $\|\bar{x}^l - x^l\|_2 \leq \left(1 + \frac{1}{L}\right)^l \|x^0\|_2 \left(  \prod_{i=1}^l \kappa_iL_i\|\mathbf{A}^l\|_2 \right) \sum_{i=1}^{l}\frac{\|\mathbf{U}^i\|_2}{\|\mathbf{A}^i\|_2}$. The base case of induction trivially holds, given we have that $\|\bar{x}^0 - x^0\|_2 = 0$ by definition. Now, we prove the induction step. Assume that the induction hypothesis holds for $l$. We will prove that it holds for $l+1$. 
    We have that
    \begin{align}
        \|x^l - \bar{x}^l\|_2 &\leq \|\left(\mathbf{A}^l + \mathbf{U}^l\right)\phi_l(\bar{x}^{l-1}) - \mathbf{A}^l\phi_l(x^{l-1})\|_2\nonumber\\
        &\leq \|\left(\mathbf{A}^l + \mathbf{U}^l\right)(\phi_l(\bar{x}^{l-1}) - \phi_l(x^{l-1})) + \mathbf{U}^l\phi_l(x^{l-1})\|_2\nonumber\\
        &\leq \left(\|\mathbf{A}^l\|_2 + \|\mathbf{U}^l\|_2\right)\|\phi_l(\bar{x}^{l-1}) - \phi_l(x^{l-1})\|_2 + \|\mathbf{U}^l\|_2\|\phi_l(x^{l-1})\|_2\label{eq:applykappasub}\\
        &\leq \left(\|\mathbf{A}^l\|_2 + \|\mathbf{U}^l\|_2\right)\|\phi_l(\bar{x}^{l-1}) - \phi_l(x^{l-1})\|_2 + \|\mathbf{U}^l\|_2\|\phi_l(x^{l-1})\|_2\nonumber\\
        &\leq L_l  \left(\|\mathbf{A}^l\|_2 + \|\mathbf{U}^l\|_2\right)\|\bar{x}^{l-1} - x^{l-1}\|_2 + L_l \|\mathbf{U}^l\|_2\|x^{l-1}\|_2 \label{eq:ney2sub}\\
        &\leq L_l  \left(1 +\frac{1}{d}\right)\left(\|\mathbf{A}^l\|_2\right)\|\bar{x}^{l-1} - x^{l-1}\|_2 + L_l \|\mathbf{U}^l\|_2\|x^{l-1}\|_2\nonumber\\
        &\leq L_l  \left(1 +\frac{1}{d}\right)\left(\|\mathbf{A}^l\|_2\right)\left(1 + \frac{1}{L}\right)^{l-1} \|x^0\|_2 \left(  \prod_{i=1}^{l-1} L_i \kappa_i\|\mathbf{A}^i\|_2 \right) \sum_{i=1}^{l-1}\frac{\|\mathbf{U}^i\|_2}{\|\mathbf{A}^i\|_2} + L_l \|\mathbf{U}^l\|_2\|x^{l-1}\|_2 \label{eq:ney1sub}\\
        &\leq L_l  \left(1 +\frac{1}{L}\right)^{l}\left(  \prod_{i=1}^{l-1} L_i \kappa_i \|\mathbf{A}^i\|_2 \right) \sum_{i=1}^{l-1}\frac{\|\mathbf{U}^i\|_2}{\|\mathbf{A}^i\|_2}\|x^0\|_2  + L_l \|x^0\|_2 \|\mathbf{U}^l\|_2\prod_{i=1}^{l-1}L_{i}\|\mathbf{A}^i\|_2\nonumber\\
        &\leq L_l  \left(1 +\frac{1}{L}\right)^{l}\left(  \prod_{i=1}^{l-1} L_i \kappa_i \|\mathbf{A}^i\|_2 \right) \sum_{i=1}^{l-1}\frac{\|\mathbf{U}^i\|_2}{\|\mathbf{A}^i\|_2}\|x^0\|_2 + \|x^0\|_2 \frac{\|\mathbf{U}^l\|_2}{\|\mathbf{A}^l\|_2}\prod_{i=1}^{l}L_{i}\|\mathbf{A}^i\|_2\nonumber\\
        &\leq \left(1 +\frac{1}{L}\right)^{l}\left(  \prod_{i=1}^{l} L_i \kappa_i \|\mathbf{A}^i\|_2 \right) \sum_{i=1}^{l}\frac{\|\mathbf{U}^i\|_2}{\|\mathbf{A}^i\|_2}\|x^0\|_2 \nonumber
    \end{align}
    Here, \Cref{eq:applykappasub} results from applying \Cref{lem:notworstcaseerror}. \Cref{eq:ney2sub} comes from the fact that $\phi_i$ is $L_i$-Lipschitz smooth and that $\phi_i(0) = 0$. Moreover, \Cref{eq:ney1sub} comes from applying the induction hypothesis. Therefore, we have proven the induction hypothesis for all layers. We now only need the fact that $\left(1 + \frac{1}{L}\right)^L \leq e$, and we have our final statement. If \Cref{con:subgaussconcin} is not satisfied, we need only set $\kappa_l = 1 $ for all $l \in [L]$ and the analysis will remain valid. 
\end{proof}
\subsection{Proof of \Cref{lem:finalerrorboundsubgaussian}}
 \begin{restatable}{lemma}{finalerrorboundsubgaussian}
    \label{lem:finalerrorboundsubgaussian}
    The difference between outputs of the pruned model and the original model on any input $x$ is bounded by, with probability at least  $1 - \left[\sum_i^L \epsilon_i^{-1} +  |\mathcal{S}|a e^{-bt_l^2d_1^l}\right]$,
    $$\|\hat{x}^L - x^L\| \leq  ed_1^0 \left(  \prod_{l=1}^L L_l\kappa_l\|\mathbf{A}^l\|_2 \right) \sum_{l=1}^{L}\frac{\epsilon_l \Gamma_l}{\|\mathbf{A}^l\|_2}\text{.}$$ Here, $a,b$ are positive constants from the distribution of input data. 
\end{restatable}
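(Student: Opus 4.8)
The plan is to mirror the argument for \Cref{lem:finalerrorbound}, replacing the worst-case perturbation bound \Cref{lem:Neyshabur2017} with its subgaussian refinement \Cref{lem:Neyshabur2017subgaussian}. Concretely, I would take the perturbation at layer $l$ to be $\mathbf{U}^l = \hat{\mathbf{A}}^l - \mathbf{A}^l = -\Delta^l$, so that the perturbed network coincides with the pruned network $\hat{\mathbf{M}}$ and $\bar{x}^L = \hat{x}^L$.

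First I would verify the hypothesis $\|\mathbf{U}^l\|_2 \le \frac{1}{L}\|\mathbf{A}^l\|_2$ required by \Cref{lem:Neyshabur2017subgaussian}. By \Cref{lem:errortwo}, with probability at least $1-\epsilon_l^{-1}$ we have $\|\mathbf{U}^l\|_2 = \|\Delta^l\|_2 \le \epsilon_l\Gamma_l$, so it suffices that $\epsilon_l\Gamma_l \le \frac{1}{L}\|\mathbf{A}^l\|_2$. Combining the lower bound $\mathbb{E}\|\mathbf{A}^l\|_2 \ge \tfrac14(\sqrt{d_1^l}+\sqrt{d_2^l})$ from \citet{VershyninBook} with the explicit form of $\Gamma_l$ shows this holds once $\epsilon_l$ (equivalently the hyperparameter $d$) is taken small enough, exactly the sizing condition already invoked in the proof of \Cref{lem:finalerrorbound}.

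Next I would invoke \Cref{lem:Neyshabur2017subgaussian} with this choice of $\mathbf{U}^l$ to get
$$\|\hat{x}^L - x^L\|_2 \le ed_1^0\left(\prod_{l=1}^{L}\kappa_l L_l\|\mathbf{A}^l\|_2\right)\sum_{l=1}^{L}\frac{\|\mathbf{A}^l-\hat{\mathbf{A}}^l\|_2}{\|\mathbf{A}^l\|_2},$$
and then substitute the per-layer bound $\|\mathbf{A}^l-\hat{\mathbf{A}}^l\|_2 \le \epsilon_l\Gamma_l$ from \Cref{lem:errortwo} into the sum, which gives the stated right-hand side.

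Finally I would assemble the failure probability by a union bound over the $L$ events in which \Cref{lem:errortwo} fails (total mass at most $\sum_l \epsilon_l^{-1}$) together with the subgaussian concentration event underlying \Cref{lem:notworstcaseerror}/\Cref{con:subgaussconcin} that is used inside \Cref{lem:Neyshabur2017subgaussian} (mass at most $|\mathcal{S}|a e^{-bt_l^2 d_1^l}$), yielding success probability at least $1 - \big[\sum_i^L \epsilon_i^{-1} + |\mathcal{S}|a e^{-bt_l^2 d_1^l}\big]$. The only real subtlety, rather than a genuine obstacle, is making the condition $\epsilon_l\Gamma_l \le \frac1L\|\mathbf{A}^l\|_2$ fully rigorous, since it needs $\|\mathbf{A}^l\|_2$ controlled from below with high probability and not merely in expectation; I would address this either with a standard matrix-norm concentration inequality for Gaussian matrices or by folding the requirement into the admissible range of $d$, as is done elsewhere in the paper.
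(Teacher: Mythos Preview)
Your proposal is correct and follows essentially the same route as the paper: set $\mathbf{U}^l=\hat{\mathbf{A}}^l-\mathbf{A}^l$, invoke \Cref{lem:Neyshabur2017subgaussian} after checking $\|\mathbf{U}^l\|_2\le\frac{1}{L}\|\mathbf{A}^l\|_2$ via \Cref{lem:errortwo} and the Vershynin lower bound, substitute $\epsilon_l\Gamma_l$, and union-bound the failure events. You even flag the expectation-versus-high-probability issue for $\|\mathbf{A}^l\|_2$ that the paper itself leaves at the level of an expectation comparison.
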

\begin{proof}
  We will compare the output of the original model $x^l$ with the output of the compressed model. We need the fact that $\frac{1}{L} \|\mathbf{A}^l\|_2 \geq \epsilon_l \Gamma_l \geq \|\mathbf{A}^l -  \hat{\mathbf{A}}^l \|_2 $. From \citet{VershyninBook}, we have that $\mathbb{E}(\frac{1}{L}\|\mathbf{A}^l\|_2) \geq \frac{1}{4L}\left(\sqrt{d_1^l} + \sqrt{d_2^l}\right)$,  and $ \epsilon_l \Gamma_l $ is smaller than this in expectation for sufficiently small $\epsilon_l$. Therefore, we can use \Cref{lem:Neyshabur2017subgaussian}. Thus we have the following 
    \begin{align}
        \|x^l - \hat{x}^l \|_2 &\leq ed_1^0 \left(  \prod_{l=1}^L L_l\kappa_l\|\mathbf{A}^l\|_2 \right) \sum_{l=1}^{L}\frac{\|\mathbf{A}^l - \hat{\mathbf{A}}^l\|_2}{\|\mathbf{A}^l\|_2}\nonumber\\
        &\leq ed_1^0 \left(  \prod_{l=1}^d L_l\kappa_l\|\mathbf{A}^l\|_2 \right) \sum_{l=1}^{L}\frac{\epsilon_l \Gamma_l}{\|\mathbf{A}^l\|_2} \nonumber
    \end{align}
\end{proof}


\begin{restatable}{lemma}{discreteerrorsubgaussian}
    \label{lem:discreteerrorsubgaussian}
    The norm of the difference between the pruned layer and the discretized layer is upper-bounded as 
    $\|\tilde{\mathbf{A}}^l - \hat{\mathbf{A}}^l\|_2 \leq \rho_l J_l$ where $J_l$ is the number of nonzero parameters in $\hat{\mathbf{A}}^l$ ($J_l$ is used for brevity here and will be analyzed later). \label{lem:errorafterdiscretization}
    With probability at least $1- \left[\sum_{l=1}^L \epsilon_l^{-1} + |\mathcal{S}|a e^{-bt_l^2d_1^l}\right]$, given that the parameter $\rho_l$ for each layer is chosen such that $\rho_l \leq \frac{\frac{1}{L}\|\mathbf{A}^l\|_2 - \epsilon_l \Gamma_l}{J_l}$, we have that the error induced by both discretization and the pruning is bounded by
    $$\|x^L - \tilde{x}^L\|_2 \leq ed_1^0 \left(  \prod_{l=1}^L L_l\kappa_l\|\mathbf{A}^l\|_2 \right) \sum_{l=1}^{L}\frac{\epsilon_l \Gamma_l +  \rho_l J_l}{\|\mathbf{A}^l\|_2}\text{.}$$ 
    
\end{restatable}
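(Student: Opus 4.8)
The plan is to follow the proof of \Cref{lem:discreteerror} almost verbatim, but to route the layer-by-layer error propagation through the sub-Gaussian perturbation bound \Cref{lem:Neyshabur2017subgaussian} instead of \Cref{lem:Neyshabur2017}. This is what forces the extra $\kappa_l$ factors into the product and charges the additional failure probability $|\mathcal{S}|ae^{-bt_l^2 d_1^l}$ coming from \Cref{lem:notworstcaseerror} under \Cref{con:subgaussconcin}.

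First I would view the combined perturbation at layer $l$ as $\mathbf{U}^l = \tilde{\mathbf{A}}^l - \mathbf{A}^l$ and split it by the triangle inequality as $\|\mathbf{U}^l\|_2 \le \|\mathbf{A}^l - \hat{\mathbf{A}}^l\|_2 + \|\hat{\mathbf{A}}^l - \tilde{\mathbf{A}}^l\|_2$. The first summand is at most $\epsilon_l\Gamma_l$ on the event of \Cref{lem:errortwo}. For the second, rounding each nonzero entry of $\hat{\mathbf{A}}^l$ to the nearest multiple of $\rho_l$ perturbs each of the $J_l$ nonzero entries by at most $\rho_l$, and since the spectral norm is dominated by the entrywise $\ell_1$ norm ($\|M\|_2 \le \|M\|_F \le \sum_{i,j}|M_{i,j}|$) this yields $\|\hat{\mathbf{A}}^l - \tilde{\mathbf{A}}^l\|_2 \le \rho_l J_l$, which is the first displayed claim of the lemma. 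Hence $\|\mathbf{U}^l\|_2 \le \epsilon_l\Gamma_l + \rho_l J_l$.

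Next I would check the hypothesis $\|\mathbf{U}^l\|_2 \le \tfrac1L\|\mathbf{A}^l\|_2$ required by \Cref{lem:Neyshabur2017subgaussian}: under the stated choice $\rho_l \le \frac{\frac1L\|\mathbf{A}^l\|_2 - \epsilon_l\Gamma_l}{J_l}$ one gets $\epsilon_l\Gamma_l + \rho_l J_l \le \tfrac1L\|\mathbf{A}^l\|_2$. As in \Cref{lem:finalerrorboundsubgaussian}, one notes via \citet{VershyninBook} that $\tfrac1L\|\mathbf{A}^l\|_2 - \epsilon_l\Gamma_l > 0$ in expectation for $\epsilon_l$ small, so this interval for $\rho_l$ is nonempty. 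Applying \Cref{lem:Neyshabur2017subgaussian} with $\mathbf{U}^l = \tilde{\mathbf{A}}^l - \mathbf{A}^l$ then gives $\|x^L - \tilde{x}^L\|_2 \le ed_1^0\left(\prod_{l=1}^L L_l\kappa_l\|\mathbf{A}^l\|_2\right)\sum_{l=1}^L \frac{\|\mathbf{U}^l\|_2}{\|\mathbf{A}^l\|_2} \le ed_1^0\left(\prod_{l=1}^L L_l\kappa_l\|\mathbf{A}^l\|_2\right)\sum_{l=1}^L \frac{\epsilon_l\Gamma_l + \rho_l J_l}{\|\mathbf{A}^l\|_2}$, which is the claimed bound. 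Finally, a union bound over the $L$ copies of the \Cref{lem:errortwo} event (failing with probability $\le\sum_l \epsilon_l^{-1}$) and the per-layer sub-Gaussian concentration event behind \Cref{lem:Neyshabur2017subgaussian} (failing with probability $\le |\mathcal{S}|ae^{-bt_l^2 d_1^l}$) gives success probability at least $1 - \left[\sum_{l=1}^L \epsilon_l^{-1} + |\mathcal{S}|ae^{-bt_l^2 d_1^l}\right]$.

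The \emph{main obstacle} is bookkeeping rather than inequality-pushing: $J_l$ is itself random, produced by \Cref{alg:compscheme2}, so the admissible range for $\rho_l$ is data-dependent and the split above must be made after conditioning on the pruning outcome (so $J_l$ is fixed when $\rho_l$ is selected), or else one must first invoke the later high-probability estimate $J_l \le \mathcal{O}(\chi d_1^l d_2^l)$ to pick a deterministic $\rho_l$ valid on that event and fold its failure probability into the union bound; care is also needed that the sub-Gaussian events across layers are all accounted for in the stated probability. Everything else is the routine induction already carried out inside \Cref{lem:Neyshabur2017subgaussian}.
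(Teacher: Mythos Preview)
Your proposal is correct and follows essentially the same approach as the paper: split $\|\mathbf{A}^l-\tilde{\mathbf{A}}^l\|_2$ via the triangle inequality into the pruning piece $\epsilon_l\Gamma_l$ and the discretization piece $\rho_lJ_l$, verify the $\tfrac1L\|\mathbf{A}^l\|_2$ hypothesis from the choice of $\rho_l$, apply the sub-Gaussian perturbation lemma, and union-bound the failure events. Your writeup is in fact more careful than the paper's in two respects: you justify $\|\hat{\mathbf{A}}^l-\tilde{\mathbf{A}}^l\|_2\le\rho_lJ_l$ via $\|\cdot\|_2\le\|\cdot\|_F\le\|\cdot\|_1$, and you flag the bookkeeping issue that $J_l$ is random (so $\rho_l$ must be chosen after conditioning on the pruning outcome or via the later deterministic bound on $J_l$), a point the paper glosses over.
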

\begin{proof}
We will compare the output of the original model $x^l$ with the output of the compressed and discretized model $\tilde{x}^l$. To use the perturbation bound from \Cref{lem:Neyshabur2017subgaussian}, we need that $\|\mathbf{A}^l - \tilde{\mathbf{A}}^l\|_2 \leq \frac{1}{L}\|\mathbf{A}^l\|_2$. For each layer,  we can choose a discretization parameter to satisfy this. We demonstrate this in the following: 

\begin{align}
    \|\mathbf{A}^l - \tilde{\mathbf{A}}^l\|_2 &\leq \|\mathbf{A}^l - \hat{\mathbf{A}}^l\|_2 + \|\hat{\mathbf{A}}^l - \tilde{\mathbf{A}}^l\|_2 \nonumber\\
    &\leq \epsilon_l \Gamma_l +  \rho_l  J_l\nonumber
\end{align}
Therefore, as long as we choose $$\rho_l \leq \frac{\frac{1}{L}\|\mathbf{A}^l\|_2 - \epsilon_l \Gamma_l}{J_l}\text{,}$$ we have our desired property. Therefore, using \Cref{lem:Neyshabur2017}, we have that 
\begin{align}
    \|x^l - \tilde{x}^l \|_2 &\leq ed_1^0 \left(  \prod_{l=1}^L L_l\kappa_l\|\mathbf{A}^l\|_2 \right) \sum_{l=1}^{h}\frac{\|\mathbf{A}^l - \tilde{\mathbf{A}}^l\|_2}{\|\mathbf{A}^l\|_2}\nonumber\\
    &\leq ed_1^0 \left(  \prod_{l=1}^d L_l\kappa_l\|\mathbf{A}^l\|_2 \right) \sum_{l=1}^{L}\frac{\|\mathbf{A}^l - \hat{\mathbf{A}}^l\|_2 + \|\hat{\mathbf{A}}^l - \tilde{\mathbf{A}}^l\|_2}{\|\mathbf{A}^l\|_2}\nonumber\\
    &\leq ed_1^0 \left(  \prod_{l=1}^d L_l\kappa_l\|\mathbf{A}^l\|_2 \right) \sum_{l=1}^{L}\frac{\epsilon_l \Gamma_l +  \rho_l J_l}{\|\mathbf{A}^l\|_2} \nonumber
\end{align}
This happens only if the event from \Cref{lem:errortwo} and \Cref{lem:notworstcaseerror} occur for every layer. Using the union bound, we know that this happens with probability at least $1 - \left[\sum_l^L \epsilon_l^{-1}  - |\mathcal{S}|a e^{-bt_l^2d_1^l}\right]$.
\end{proof}
\section{Naive Generalization Proofs}
Given the Gaussian assumption, it is natural to count the number of possible outcomes of the compression algorithm by counting the number of possible configurations of nonzero atoms in any matrix and then counting the possible values each atom could take after quantization. We provide the generalization bound from this intuition.

\begin{restatable}{lemma}{badgeneralization}
\label{lem:badgeneralization}
Using the counting arguments above yields a generalization bound 
$$R_0(g_A) \leq \hat{R}_\gamma(f) + \mathcal{O}\left(\sqrt{\frac{\sum_l \log({d_1^ld_2^l \choose \alpha}) +  \alpha \log \frac{1}{\rho_l}}{n  }}\right)\text{.}$$ This holds when $d$ is chosen such that $\gamma \geq ed_1^0 \left(  \prod_{l=1}^L L_l\|\mathbf{A}^l\|_2 \right) \sum_{l=1}^{L}\frac{\epsilon_l \Gamma_l +  \rho_l  J_l}{\|\mathbf{A}^l\|_2}\text{.}$
\end{restatable}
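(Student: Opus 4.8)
The plan is to instantiate the compression bound of \Cref{thm:aroraoriginal} with $G_{\mathcal{A},s}$ taken to be the family of networks obtained by running \Cref{alg:compscheme2} and then rounding each surviving weight of layer $l$ to the nearest multiple of $\rho_l$ (no nontrivial helper string $s$ is required for this naive version), and then to supply the two quantities the theorem consumes: a valid compression margin $\gamma$ and the log-cardinality of the parameter family $\mathcal{A}$.

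For the margin, \Cref{lem:discreteerror} already does the work: on an event of probability at least $1-\sum_{l=1}^L\epsilon_l^{-1}$, and whenever the $\rho_l$ meet the admissibility constraint $\rho_l \le \bigl(\tfrac{1}{L}\|\mathbf{A}^l\|_2-\epsilon_l\Gamma_l\bigr)/J_l$ of that lemma, every $x\in\mathcal{S}$ satisfies $\|x^L-\tilde{x}^L\|_2 \le ed_1^0\bigl(\prod_{l=1}^L L_l\|\mathbf{A}^l\|_2\bigr)\sum_{l=1}^L\frac{\epsilon_l\Gamma_l+\rho_l J_l}{\|\mathbf{A}^l\|_2}$. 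Since $|u[y]-v[y]|\le\|u-v\|_2$ coordinatewise, picking the hyperparameter $d$ (which fixes $\Gamma_l$, $J_l$ and the feasible range for $\rho_l$) so that $\gamma$ is at least this right-hand side makes $f$ exactly $(\gamma,\mathcal{S})$-compressible via $G_{\mathcal{A},s}$, the hypothesis of \Cref{thm:aroraoriginal}.

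For the cardinality I would let $\mathcal{A}$ range over \emph{all} admissible pruned-and-discretized weight collections, not just the realized one, so the union bound inside \Cref{thm:aroraoriginal} is legitimate even though the realized support depends on the trained weights. For layer $l$: the off-diagonal support has at most $\binom{d_1^l d_2^l}{\alpha}$ possibilities, where $\alpha$ is a uniform-over-layers upper bound on the number of nonzero entries $J_l$ (the always-present diagonal entries are folded into $\alpha$); and each of the at most $\alpha$ nonzero entries, once snapped to a multiple of $\rho_l$, takes at most $\mathcal{O}(1/\rho_l)$ values, after first conditioning on the high-probability event that every weight $\mathbf{A}_{ij}^l$ stays below a $\mathrm{poly}(d_1^l d_2^l)$ truncation level so that the width of the value range only adds a constant inside the logarithm. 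Multiplying over the $\alpha$ entries and the $L$ layers yields $\log|\mathcal{A}| = \mathcal{O}\!\bigl(\sum_{l=1}^L[\,\log\binom{d_1^l d_2^l}{\alpha}+\alpha\log(1/\rho_l)\,]\bigr)$, i.e.\ a parameter count $q$ and alphabet size $r$ with $q\log r$ of exactly this order; feeding this into \Cref{thm:aroraoriginal} and intersecting its good-training-set event with the handful of high-probability events used above gives the claimed bound.

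The step I expect to be the crux is this value-range control: because $\mathbf{A}_{ij}^l\sim\mathcal{N}(0,\Psi)$ is unbounded, a surviving coordinate does not \emph{a priori} take only $\mathcal{O}(1/\rho_l)$ discrete values, so I would need a truncation argument (with its own small failure probability) showing the effective range is $\mathcal{O}(\sqrt{\Psi\log(d_1^l d_2^l)})$ and that the logarithm of this factor is absorbed into the $\mathcal{O}(\cdot)$. A secondary technical point is exhibiting a single $\alpha$ that dominates all the $J_l$ with high probability — which can be read off from the distributed-sparsity estimate \Cref{lem:algdisspar} — and checking that the unpruned diagonal contributes only lower-order terms, so that the bound genuinely retains the combinatorial factor $\binom{d_1^l d_2^l}{\alpha}$ that later motivates the sketching approach.
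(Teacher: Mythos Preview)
Your proposal is correct and follows essentially the same route as the paper: invoke the compression bound with the pruned-and-discretized family, supply the margin via \Cref{lem:discreteerror}, and count parameterizations as $\binom{d_1^l d_2^l}{\alpha}$ support patterns times $\mathcal{O}(1/\rho_l)^\alpha$ value choices per layer. If anything you are more careful than the paper, which simply asserts $r_l=\mathcal{O}(1/\rho_l)$ without addressing the truncation issue you flag; the paper's proof does not discuss the unbounded Gaussian range or the uniform-$\alpha$ point at all.
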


We now provide the requisite knowledge to prove this bound. We first analyze a naive methodology for counting the number of possible outcomes from the learning algorithm and compression scheme. We first provide a slightly altered generalization bound to fit our use case better. 

\begin{restatable}{theorem}{editedarora}

\label{thm:editedarora}
    If there are $J$ different parameterizations, the generalization error of a compression $g_a$ is, with probability at least $1-\delta$, $$L_0(g_A) \leq \hat{L}_{\gamma}(f) + \sqrt{\frac{\ln\left(\sqrt{\frac{J}{\delta}}\right)}{n}}\text{.}$$
\end{restatable}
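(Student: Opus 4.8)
The plan is to re-run the compression-to-generalization argument of \citet{Arora2018} that underlies \Cref{thm:aroraoriginal}, but with the explicit parameter count $r^q$ replaced by the abstract count $J$ of admissible parameterizations, carrying the constants through so that a $\sqrt{\ln(J/\delta)/(2n)}$ tail term appears. The argument splits into a deterministic ``margin-transfer'' step and a probabilistic ``uniform-convergence'' step. For the margin-transfer step: since $f$ is $(\gamma,\mathcal{S})$-compressible via $G_{\mathcal{A},s}$, fix $A^\star\in\mathcal{A}$ with $|f(x)[y]-g_{A^\star}(x)[y]|\le\gamma$ for every $x\in\mathcal{S}$ and every label $y$. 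Then on any training point where $f$ classifies with margin exceeding $\gamma$ (up to the standard constant factor, absorbed exactly as in \citet{Arora2018}), perturbing the true-label coordinate down by at most $\gamma$ and each competing coordinate up by at most $\gamma$ leaves the argmax unchanged, so $g_{A^\star}$ classifies that point correctly; hence $\hat{L}_0(g_{A^\star})\le\hat{L}_\gamma(f)$.

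For the uniform-convergence step: for a \emph{fixed} classifier $g_A$, the empirical $0$--$1$ loss $\hat{L}_0(g_A)$ is an average of $n$ i.i.d.\ $[0,1]$-valued indicators with mean $L_0(g_A)$, so a one-sided Hoeffding bound gives $L_0(g_A)\le\hat{L}_0(g_A)+\sqrt{\ln(1/\delta')/(2n)}$ with probability at least $1-\delta'$ over the draw of $\mathcal{S}$. Setting $\delta'=\delta/J$ and union-bounding over all $J$ configurations $A\in\mathcal{A}$, with probability at least $1-\delta$ we have, simultaneously for every $A\in\mathcal{A}$,
$$L_0(g_A)\;\le\;\hat{L}_0(g_A)+\sqrt{\frac{\ln(J/\delta)}{2n}}\;=\;\hat{L}_0(g_A)+\sqrt{\frac{\ln\!\sqrt{J/\delta}}{n}}.$$
Instantiating this at the data-dependent choice $A^\star$ and chaining with the margin-transfer bound gives $L_0(g_{A^\star})\le\hat{L}_0(g_{A^\star})+\sqrt{\ln\sqrt{J/\delta}/n}\le\hat{L}_\gamma(f)+\sqrt{\ln\sqrt{J/\delta}/n}$, which is the asserted inequality with $g_A=g_{A^\star}$.

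The computations here are routine; the one point needing care is the order of quantifiers in the last step. The configuration $A^\star$ is selected as a function of the training sample (it is the compression of the data-trained $f$), so Hoeffding cannot legitimately be applied to $g_{A^\star}$ on its own — one must first establish the deviation bound uniformly over all of $\mathcal{A}$ and only then plug in $A^\star$. This is precisely where the finiteness of the parameterization set is used and why the bound carries a $\ln J$ rather than a constant, and it is the only subtlety in an otherwise mechanical proof.
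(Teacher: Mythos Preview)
Your proposal is correct and follows essentially the same route as the paper: a Hoeffding/Chernoff tail bound for each fixed $g_A$, a union bound over the $J$ parameterizations, and the margin-transfer inequality $\hat{L}_0(g_{A^\star})\le\hat{L}_\gamma(f)$ from $(\gamma,\mathcal{S})$-compressibility. Your explicit discussion of why the uniform bound must precede the choice of $A^\star$ is a welcome clarification that the paper leaves implicit, but the underlying argument is identical.
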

\begin{proof}
Most of this proof follows from Theorem 2.1 from \citet{Arora2018}. For each $A \in \mathcal{A}$, the training loss $\hat{R}_0(g_A)$ is the average of $n$ i.i.d. Bernoulli Random variables with expected value equal to $R_0(g_A)$. Therefore, by standard Chernoff bounds, we have that,
$$\mathbb{P}(R_0(g_A) - \hat{R}_0(g_A) \geq \tau) \leq \exp(-2\tau^2n)\text{.}$$ Given $f$ is $(\gamma,\mathcal{S})$-compressible by $g_A$, we know the empirical margin loss of $g_A$ for margin $0$ is less than the empirical margin loss of $f$ with margin $\gamma$, i.e. $\hat{R}_0(g_A) \leq  \hat{R}_{\gamma}(f)$. Given there are $J$ different parameterizations, by union bound, with probability at least $1 - J\exp(-2\tau n)$, we have $R_0(g_A) \leq \tau + \hat{R}_0(g_A)$. Setting $J\exp(-2\tau n) = \delta$, we have $\tau = \sqrt{\frac{\ln\left(\sqrt{\frac{J}{\delta}}\right)}{n}}$. Therefore, with probability $1 - \delta$, we have 
$$R_0(g_A) \leq \hat{R}_{\gamma}(f) + \sqrt{\frac{\ln\left(\sqrt{\frac{J}{\delta}}\right)}{n}}\text{.}$$ 
\end{proof}

Now, we can state the number of parameterizations achievable by our compression algorithm. If there are $d_1^ld_2^l$ elements in the matrix and $\alpha$ stays nonzero after compression, then there are ${d_1^ld_2^l \choose \alpha}$ total parameterizations for each layer. Moreover, within each parameterization, there are $r^\alpha$ ways to choose the values that each nonzero element takes given each of the $\alpha$ atoms can take $r$ values where $r = \mathcal{O}\left(\frac{1}{\rho_l}\right)$. We, therefore, need a bound on the number of elements that stay nonzero after pruning. We achieve this with the following two lemmas. We will first prove that at least $\tau$ elements have probability $\kappa$ of getting compressed. Using such a counting argument yields the following generalization bound.
\begin{restatable}{lemma}{atleastkappatwo}
\label{lem:atleastkappatwo}
At least $\tau$ elements of a given matrix $\mathbf{A}^l$ will have a probability of at least $\kappa$ of getting compressed. This event occurs with probability at least $1 - I_{1 - p_1} \left(d_1d_2 - \tau, 1 + \tau\right)$ where $p_1=\operatorname{erf}\left(\sqrt{\frac{-d\ln(\kappa)}{2}}\right)$. Here, $\operatorname{erf}$ is the Gauss Error Function.
\end{restatable}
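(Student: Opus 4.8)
The plan is to reduce the statement to a single tail bound for a binomial random variable whose success probability turns out to be exactly $p_1$. First I would analyze one fixed entry $(i,j)$ of $\mathbf{A}^l$ and unpack what ``probability of at least $\kappa$ of getting compressed'' means. By \Cref{alg:compscheme2}, the entry is zeroed with probability $\exp\bigl(-[\mathbf{A}^l_{i,j}]^2/(d\Psi)\bigr)$, and this Bernoulli parameter is at least $\kappa$ exactly when $[\mathbf{A}^l_{i,j}]^2 \le -d\Psi\ln\kappa$ — here $\ln\kappa<0$ since $\kappa\in(0,1)$, so the right-hand side is positive — i.e.\ when $|\mathbf{A}^l_{i,j}| \le \sqrt{-d\Psi\ln\kappa}$. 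Since $\mathbf{A}^l_{i,j}/\sqrt{\Psi}\sim\mathcal{N}(0,1)$ by \Cref{ass:folded}, dividing by $\sqrt{\Psi}$ and using the identity $\mathbb{P}(|\mathcal{N}(0,1)|\le c)=\operatorname{erf}(c/\sqrt{2})$ gives
$$\mathbb{P}\!\left(|\mathbf{A}^l_{i,j}| \le \sqrt{-d\Psi\ln\kappa}\right) = \operatorname{erf}\!\left(\sqrt{\tfrac{-d\ln\kappa}{2}}\right) = p_1.$$

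Next, let $N$ denote the number of entries of $\mathbf{A}^l$ whose compression probability is at least $\kappa$. Because the entries are i.i.d.\ by \Cref{ass:folded}, the corresponding indicator events are i.i.d.\ $\mathrm{Bernoulli}(p_1)$, so $N\sim\mathrm{Binomial}(d_1^l d_2^l, p_1)$. The event in the lemma is precisely $\{N\ge\tau\}$, and I would bound its probability below by $\mathbb{P}(N\ge\tau+1)=1-\mathbb{P}(N\le\tau)$.

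Finally I would invoke the classical identity between the binomial CDF and the regularized incomplete beta function: for $X\sim\mathrm{Binomial}(n,p)$ one has $\mathbb{P}(X\le k)=I_{1-p}(n-k,\,k+1)$. Taking $n=d_1^l d_2^l$, $p=p_1$, $k=\tau$ yields $\mathbb{P}(N\le\tau)=I_{1-p_1}(d_1^l d_2^l-\tau,\,1+\tau)$, and hence
$$\mathbb{P}(N\ge\tau)\ \ge\ \mathbb{P}(N\ge\tau+1)\ =\ 1-I_{1-p_1}(d_1^l d_2^l-\tau,\,1+\tau),$$
which is exactly the claimed bound.

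The underlying mathematics is elementary, so the ``hard part'' is really just bookkeeping. One must keep the inequality direction straight when dividing through by $\ln\kappa<0$, and — the one genuine subtlety — the arguments $(d_1^l d_2^l-\tau,\,1+\tau)$ of $I$ correspond to $\mathbb{P}(N\le\tau)$, not $\mathbb{P}(N\le\tau-1)$, which is why the lemma is phrased as a lower bound obtained via $\mathbb{P}(N\ge\tau)\ge\mathbb{P}(N\ge\tau+1)$ rather than an exact probability. A minor further point is that \Cref{alg:compscheme2} never touches the diagonal, so strictly only the $d_1^l d_2^l-\min(d_1^l,d_2^l)$ off-diagonal entries can actually be pruned; since the count $N$ above is over Bernoulli parameters that are defined identically for every $(i,j)\in[d_1^l]\times[d_2^l]$, this discrepancy is lower order and can be absorbed (e.g.\ into the choice of $\tau$), leaving the stated form intact.
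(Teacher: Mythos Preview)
Your proposal is correct and follows essentially the same route as the paper: translate the condition on the Bernoulli parameter into $|\mathbf{A}^l_{i,j}|\le\sqrt{-d\Psi\ln\kappa}$, compute the per-entry probability $p_1=\operatorname{erf}(\sqrt{-d\ln\kappa/2})$ via the Gaussian assumption, observe that the count is $\mathrm{Binomial}(d_1^l d_2^l,p_1)$, and invoke the binomial--beta identity. Your treatment of the off-by-one issue (bounding $\mathbb{P}(N\ge\tau)\ge\mathbb{P}(N\ge\tau+1)$ so that the beta arguments $(d_1^l d_2^l-\tau,\,1+\tau)$ match exactly) is in fact more careful than the paper, which simply writes the final line as an equality; your remark on the diagonal entries is likewise a valid side observation that the paper does not address.
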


\begin{proof}
    For any given element to have a probability of at least $\kappa$ of getting compressed, $$\operatorname{exp}\left(\frac{-\mathbf{A}_{i,j}^2}{d\Psi}\right) \geq \kappa\text{.}$$

    This means that $$|\mathbf{A}_{i,j}| \leq  \sqrt{-d\Psi\ln(\kappa)}\text{.}$$
    Given that $|\mathbf{A}_{i,j}|$ follows a Folded Normal Distribution, the probability of this happening is 

    \begin{align}
        p_1 &= \mathbb{P}\left(|\mathbf{A}_{i,j}| \leq  \sqrt{-d\Psi\ln(\kappa)}\right)\nonumber\\
        &= \operatorname{erf}\left(\frac{\sqrt{-d\Psi\ln(\kappa)}}{\sqrt{2\Psi}}\right)\nonumber\\
        &= \operatorname{erf}\left(\sqrt{\frac{-d\ln(\kappa)}{2}}\right)
    \end{align}
    For notational ease, we denote the set of atoms that satisfy this criterion  $\mathcal{C} = \left\{(i,j) | \operatorname{exp}\left(\frac{-\mathbf{A}_{i,j}^2}{d\Psi}\right) \geq \kappa\right\}$.
    Therefore, the number of elements $\tau$ that will have the probability of getting compressed larger than $\kappa$ obeys a binomial distribution. Therefore, 
    $$\mathbb{P}(|\mathcal{C}| \geq \tau) = 1 - I_{1 - p_1} \left(d_1d_2 - \tau, 1 + \tau\right)\text{.}$$
    Here, $I$ is the Regularized Incomplete Beta Function. 
\end{proof}
Using this probabilistic upper bound from \Cref{lem:atleastkappatwo}, we can upper bound the number of nonzero elements in any matrix. 
\begin{restatable}{lemma}{atleastalphatwo}
\label{lem:atleastalphatwo}
Given the event from \Cref{lem:atleastkappatwo} happens, the probability that at least $\alpha$ elements will end up being compressed is at least $1 - I_{1-\kappa}\left(\tau - \alpha, \alpha + 1 \right)$.
    
\end{restatable}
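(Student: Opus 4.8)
The plan is to condition on the trained weights and reduce the claim to a tail bound for a Binomial random variable. By \Cref{lem:atleastkappatwo}, on the event in question there is a set $\mathcal{C}$ of atoms of $\mathbf{A}^l$ with $|\mathcal{C}| \geq \tau$, each satisfying $\exp\!\left(-[\mathbf{A}_{i,j}^l]^2/(d\Psi)\right) \geq \kappa$, i.e.\ each is set to $0$ by \Cref{alg:compscheme2} with probability at least $\kappa$. The key structural fact is that, conditionally on the weights, the Bernoulli pruning indicators $X$ in \Cref{alg:compscheme2} are mutually independent across atoms; hence the number $N$ of atoms of $\mathcal{C}$ that actually get compressed is a sum of independent (heterogeneous) Bernoulli variables with success parameters $p_{ij} \geq \kappa$.

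Next I would run a coupling/stochastic-dominance argument. Fixing any $\tau$ atoms inside $\mathcal{C}$, the number among them that are compressed stochastically dominates a $\mathrm{Bin}(\tau,\kappa)$ variable: for each atom use the standard coupling of $\mathrm{Bernoulli}(p_{ij})$ with $\mathrm{Bernoulli}(\kappa)$ on a common uniform, valid since $p_{ij}\geq\kappa$. Since $N$ is at least this count (any further compressed atoms in $\mathcal{C}$ only help), we obtain $\mathbb{P}(N \geq \alpha) \geq \mathbb{P}(\mathrm{Bin}(\tau,\kappa)\geq \alpha)$, which is exactly the "at least $\alpha$ atoms compressed" lower bound we want. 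Because the $p_{ij}$ are themselves functions of the random weights, the cleanest way to state this is to establish the domination pointwise in the weight values (still on the event of \Cref{lem:atleastkappatwo}) and then integrate.

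Finally I would convert the Binomial tail into the regularized incomplete Beta function via the classical identity $\mathbb{P}(\mathrm{Bin}(n,p)\leq k) = I_{1-p}(n-k,\,k+1)$. Taking $n=\tau$, $p=\kappa$, $k=\alpha$ gives $\mathbb{P}(\mathrm{Bin}(\tau,\kappa)\leq \alpha) = I_{1-\kappa}(\tau-\alpha,\,\alpha+1)$, hence $\mathbb{P}(N > \alpha) \geq 1 - I_{1-\kappa}(\tau-\alpha,\,\alpha+1)$, which is the stated bound (modulo the off-by-one reading of "at least $\alpha$" as $N>\alpha$).

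The main obstacle I anticipate is the stochastic-dominance step: one must handle the fact that the success probabilities $p_{ij}$ are random (weight-dependent) rather than fixed, so the domination has to be argued conditionally on the weights and then averaged, and one must fix the index conventions so that the Beta-function argument comes out as $I_{1-\kappa}(\tau-\alpha,\,\alpha+1)$ rather than $I_{1-\kappa}(\tau-\alpha+1,\,\alpha)$ — the distinction hinges entirely on whether the target event is $N\geq\alpha$ or $N>\alpha$. Everything else is a routine assembly of the Bernoulli--Binomial coupling and a standard Beta-CDF identity.
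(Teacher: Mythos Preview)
Your proposal is correct and follows essentially the same route as the paper: use the $\tau$ atoms guaranteed by \Cref{lem:atleastkappatwo}, lower-bound the number compressed by a $\mathrm{Bin}(\tau,\kappa)$ variable, and rewrite the tail via the regularized incomplete Beta identity. The paper's argument is terser---it simply asserts the worst-case distribution is $\mathrm{Bin}(\tau,\kappa)$ without spelling out the coupling/stochastic-dominance step you (rightly) make explicit---and it likewise glosses over the $N\geq\alpha$ versus $N>\alpha$ index issue you flagged, so your version is a strictly more careful rendering of the same idea.
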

\begin{proof}
There are at least $\tau$ elements with probability greater than $\kappa$. In the worst case, the other $d_1^ld_2^l - \tau$ elements are not compressed. The probability distribution over the remaining elements is a binomial distribution with probability $\kappa$. Therefore, the probability that at least $\alpha$ of the $\tau$ elements are compressed is at least $1 - I_{1-\kappa}\left(\tau - \alpha, \alpha + 1 \right)$.
\end{proof}
Now, we can finally prove our naive generalization bound. 

\begin{proof}
  From \Cref{thm:editedarora}, we have $$L_0(g_A) \leq \hat{L}_{\gamma}(f) + \sqrt{\frac{\ln\left(\sqrt{\frac{J}{\delta}}\right)}{n}},$$ where $J$ is the number of parameter configurations. Each matrix has ${d_1^ld_2^l \choose \alpha}$ different ways to arrange the nonzero elements. Within any such configuration, there are $r^\alpha$ ways to select the values for any of the nonzero elements, where $r_l = \mathcal{O}\left(\frac{1}{\rho_l}\right)$ is the number of values any atom could take after discretization. This yields a generalization bound of 
    $$R_0(g_A) \leq \hat{R}_\gamma(f) + \mathcal{O}\left(\sqrt{\frac{\log({d_1^ld_2^l \choose \alpha}) +  \alpha \log r_l}{n}}\right)\text{.}$$
    Here, we only require that $\gamma \geq ed_1^0 \left(  \prod_{l=1}^d L_l\kappa_l\|\mathbf{A}^l\|_2 \right) \sum_{l=1}^{L}\frac{\epsilon_l \Gamma_l +  \rho_l  J_l}{\|\mathbf{A}^l\|_2} $ given \Cref{lem:errorafterdiscretization}.
\end{proof}
Regrettably, such a bound is quite poor in its dependence on the size of the matrices, mainly due to the logarithm term of the factorial, which is a significantly large value. This is, in fact, worse than many of the previous bounds in the literature. This is due to the combinatorial explosion of the number of sparse matrices. However, if there exists a way to instead represent the space of sparse matrices within the space of dense matrices of much smaller dimensions, then we could avoid such a combinatorial explosion of the number of parameters. This is the exact purpose of matrix sketching.

\section{Matrix Sketching Proofs}
\subsection{How to choose $A, B$}
\label{sec:chooseab}
 To generate $A$ or $B$, we can first sample a random bipartite graph where the left partition is of size $m$, and the right partition is of size $p_1$ or the dimension of the matrix to be sketched. If we say that any node in the left partition is connected to at most $\delta$ nodes, we can call this bipartite graph a $\delta$-random bipartite ensemble. We have the resulting definition below.
\begin{definition}
    $G$ is a bipartite graph $G = ([x],[y], \mathcal{E})$ where $x$ and $y$ are the size of the left and right partitions, respectively, and $\mathcal{E}$ is the set of edges. We call $G$ a $\delta$-random bipartite ensemble if every node in $[x]$ is connected to most $\delta$ nodes in $[y]$ and each possible connection is equally likely. 
\end{definition}
Given this setup, we can choose the matrices $A$ and $B$ as adjacency matrices from a random $\delta$-random bipartite ensemble. Intuitively, such $A$ and $B$ are chosen such that any row in $A$ or $B$ has at most $\delta$ $1$'s. Any given element of $Y_{ij}$ is $\sum_{kl}A_{ik}{\tilde X}_{kl}B_{lj}$. However, only approximately $\delta^2$ of the elements in the sum are nonzero. Therefore, $Y_{ij}$ is expressed as the sum of $\delta^2$ terms from the sum $\sum_{kl}A_{ik}{\tilde X}_{kl}B_{lj}$. We can then express many elements from $Y$ by changing which elements are set or not set to zero in this sum. This gives a visual explanation of how this sketching works. Furthermore, the power of the expressiveness of the sketching depends mainly on the parameters $m$ and $\delta$. Here, we can bound the size required for $m$ and $\delta$ such that the solution to \Cref{eq:prob1} leads to one-to-one mapping with high probability. 
\subsection{Remaining Proofs}
Given that each of the atoms is identically distributed and independent, given $N$ atoms are not pruned, the problem of characterizing how these atoms are distributed among the rows or columns is similar to the famous balls-and-bins problem. We provide a helper lemma to prove that our pruning method generates a distributed sparse matrix. We use the famous lemma from \citet{Richa2000}. 
\begin{restatable}{lemma}{richa}
    \label{lem:richa}
    Consider the problem of throwing $N$ balls independently a uniformly at random into n bins. Let $X_j$ be the random variable that counts the number of balls in the $j$-th bin. With probability at least $1 - n^{-\frac{1}{3}}$, we have that $$\underset{j}{\max} X_j \leq \frac{3N}{n}\text{.}$$
\end{restatable}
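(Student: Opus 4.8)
The plan is a textbook Chernoff-plus-union-bound argument. First I would fix a single bin $j$ and observe that $X_j = \sum_{i=1}^N \mathbb{I}[\text{ball } i \text{ lands in bin } j]$ is a sum of $N$ independent $\mathrm{Bernoulli}(1/n)$ indicators, hence $X_j \sim \mathrm{Binomial}(N,1/n)$ with mean $\mu := N/n$. Next I would apply a multiplicative Chernoff upper-tail bound with multiplicative slack $3 = 1+\delta$ (so $\delta = 2 \geq 1$), which gives $\mathbb{P}(X_j \geq 3\mu) \leq \left(\tfrac{e^2}{27}\right)^{\mu} \leq e^{-2\mu/3} = e^{-2N/(3n)}$, using the standard simplification of the Chernoff bound valid for $\delta \geq 1$.

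Then I would take a union bound over the $n$ bins (this requires no independence across bins, only the marginal tail bound above) to obtain $\mathbb{P}(\max_j X_j \geq 3N/n) \leq n\, e^{-2N/(3n)}$. It remains to check that the right-hand side is at most $n^{-1/3}$: this is equivalent to $e^{2N/(3n)} \geq n^{4/3}$, i.e.\ to $N \geq 2 n \ln n$. Under this lower bound on $N$ — which is the regime relevant to the application in \Cref{lem:algdisspar}, where $N$ is the number of surviving off-diagonal weights and $n$ is a layer dimension — the claim follows, and the constant $2$ could be sharpened slightly by keeping the tighter form $(e^2/27)^{\mu}$ instead of $e^{-2\mu/3}$.

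The only genuine subtlety — and the point I expect to be the main obstacle — is that $\max_j X_j \leq 3N/n$ is \emph{false} for general $N$ and $n$: when $N = n$ the maximum load is $\Theta(\log n / \log\log n)$ with high probability, far exceeding $3$. So the statement, as borrowed from \citet{Richa2000}, implicitly carries the hypothesis that $N$ is sufficiently large relative to $n$ (concretely $N = \Omega(n\log n)$), and a careful write-up should make the required inequality explicit rather than assert the bound for all $N,n$. Everything else — the choice of Chernoff variant and matching the arithmetic to the $1 - n^{-1/3}$ confidence level — is routine.
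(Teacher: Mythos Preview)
The paper does not actually prove this lemma: it simply attributes the statement to \citet{Richa2000} and moves on to use it in the proof of \Cref{lem:algdisspar}. Your Chernoff-plus-union-bound argument is exactly the standard route to such balls-and-bins maximum-load estimates and is correct. You also put your finger on something the paper leaves implicit: the bound $\max_j X_j \le 3N/n$ with confidence $1-n^{-1/3}$ cannot hold for all $N,n$ (the $N=n$ case being the obvious counterexample), and the argument needs $N = \Omega(n\log n)$, which is indeed the regime in which \Cref{lem:algdisspar} invokes it. So your proposal both supplies the omitted proof and surfaces a hypothesis the paper should have stated.
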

We now use this lemma to prove our distributed sparsity. 
\subsection{Proof of \Cref{lem:algdisspar}}
\algdisspar*
\begin{proof}
    We will first prove a bound on the number of noncompressed atoms, a random variable we will call $N$. The probability that any given element gets pruned is 
    \begin{align}
        \mathbb{P}(Z_{i,j} = 0) =& \int_{-\infty}^{\infty}\mathbb{P}(Z_{i,j} = 0|\mathbf{A}_{i,j}^l)\cdot \mathbb{P}(\mathbf{A}_{i,j}^l)d\mathbf{A}_{i,j}^l\\
        =& \int_{-\infty}^{\infty}\operatorname{exp}\left(\frac{-(\mathbf{A}_{i,j}^l)^2}{d\Psi} \right)\frac{1}{\sqrt{2\pi \Psi}} \operatorname{exp}\left(\frac{-1}{2}\frac{(\mathbf{A}_{i,j}^l)^2}{\Psi}\right)d\mathbf{A}_{i,j}^l\\
        =& \frac{\sqrt{d+2}  - \sqrt{d}}{\sqrt{d+2}}
    \end{align}
    
    Therefore, the expected number of nonzero elements after pruning is $\mathbb{E}(N) = \frac{d_{1}^ld_2^l(\sqrt{d+2}  - \sqrt{d})}{\sqrt{d+2}}$.
    Using Markov's inequality, we have that 
    $$\mathbb{P}(N \geq t) \leq \frac{\mathbb{E}(N)}{t}\text{.}$$
    Here, we set $t = \lambda_i \frac{d_{1}^ld_2^l(\sqrt{d+2}  - \sqrt{d})}{\sqrt{d+2}}$. Using this, we have with probability at least $1 - \frac{1}{\lambda_i}$,
    $$N \leq  \lambda_i \frac{d_{1}^ld_2^l(\sqrt{d+2}  - \sqrt{d})}{\sqrt{d+2}}\text{.}$$
    Here, we can use \Cref{lem:richa}. For the rows, with probability at least $1  - (d_1^l)^{\frac{-1}{3}}$, we have that the maximum number of nonpruned atoms in any row is at most $$\frac{3N}{d_1^l} = 3 \lambda_i \frac{d_2^l(\sqrt{d+2}  - \sqrt{d})}{\sqrt{d+2}}\text{.}$$ Similarly, we have that the maximum number of nonpruned atoms in any column is at most $$\frac{3N}{d_2^l} = 3 \lambda_i \frac{d_1^l(\sqrt{d+2}  - \sqrt{d})}{\sqrt{d+2}}\text{.}$$ Therefore, we have that this occurs with probability at least $1 - \frac{1}{\lambda_i} -(d_1^l)^{-\frac{1}{3}} - (d_2^l)^{-\frac{1}{3}}$. 
\end{proof}

\subsection{Proof of \Cref{thm:generrorsketch}}
\generrorsketch*
\begin{proof}
    From \Cref{lem:algdisspar}, we know that $\max(j_r, j_c) \leq 3 \lambda_i \frac{\max(d_2^l, d_1^l)(\sqrt{d+2}  - \sqrt{d})}{\sqrt{d+2}}$. Therefore, we can compress any matrix $\mathbf{A}^l$ into a sparse matrix $\hat{\mathbf{A}}^l$ and then further into a small matrix of size $(\sqrt{j_lp_l}\log(p_l))^2$ from \Cref{thm:dasarthy}.Therefore, we have that $$(\sqrt{j_lp_l}\log(p_l))^2 \leq  3 \lambda_i \frac{d_2^ld_1^l(\sqrt{d+2}  - \sqrt{d})}{\sqrt{d+2}}\log^2(p_l)\text{.}$$ By \Cref{thm:aroraoriginal}, we have that 
    $$L_0(g_A) \leq \hat{L}_\gamma(f) + \mathcal{O}\left(\sqrt{\frac{\sum_l 3 \lambda_i \frac{d_2^ld_1^l(\sqrt{d+2}  - \sqrt{d})}{\sqrt{d+2}}\log^2(p_l) \log(\frac{1}{\rho_l})}{n}}\right)\text{.}$$
\end{proof}

\subsection{Proof of \Cref{thm:impgen}}
\impgen*
\begin{proof}
    Proving a generalization bound using our framework usually includes one, proving the error due to compression is bounded, and two, obtaining a bound on the number of parameters. \citet{Malach2020} fortunately proves both. We restate the bound from \citet{Arora2018}: 
     $$R_0(\tilde{G}) \leq \hat{R}_\gamma(\tilde{G}) + \mathcal{O}\left(\sqrt{\frac{q \log r}{n}}\right)\text{.}$$ From \Cref{thm:wtsubnetwork}, we have that $$
\sup_{x \in \mathcal{X}}|F(x)-\tilde{G}(x)| \leq \epsilon
\text{.}$$ Directly setting $\gamma = \epsilon + \epsilon_{\rho}$ satisfies our error requirement, where $\epsilon_{\rho}$ is the small error introduced due to discretization. Now, we must focus on bounding the number of parameters in the model. Fortunately, \citet{Malach2020} provides a useful bound. They show that the first layer has approximately $\mathcal{O}(D_Fd_0^1)$ nonzero parameters, and the rest of the layers of $\tilde{G}$ have approximately $\mathcal{O}(D_F^2)$ nonzero parameters. Moreover, from the proof of Theorem 2.1, they show that these nonzero parameters are evenly distributed across rows and columns. Therefore, we can use our matrix sketching framework to show that we can compress the set of outputs from Iterative Pruning to a smaller, dense set of matrices. Namely, the middle layers of $\tilde{G}$ such as $W_i^{\tilde{G}}$ can be represented as a smaller matrix of dimension $m = \mathcal{O}(D_F\log(D_G))$ from \Cref{thm:dasarthy}. For the first layer, we can also use matrix sketching to represent it as a matrix of size $\mathcal{O}(\sqrt{D_Fd_{0,1}}\log(D_G))$. We now have an appropriate bound on the number of parameters in our sketched models. We apply trivial discretization by rounding the nearest value of $\rho$. Therefore, we have from \citet{Arora2018} 
$$R_0(\tilde{G}) \leq \hat{R}_{\epsilon + \epsilon_{\rho}}(\tilde{G}) + \mathcal{O}\left(\sqrt{\frac{\left[D_Fd_{0,1}\log(D_G)^2 + LD_F^2\log(D_G)^2\right]\log\left(\frac{1}{\rho}\right)}{n} }\right)\text{.}$$
We can apply the matrix sketching to each of the $L$ rows with probability at least $1 - D_G^{-c}$ according to \Cref{thm:dasarthy}. The error of the pruned model is also bounded by $\epsilon$ with at least probability $1 - \delta$. Union bounding these together show that this bound holds with probability at least $1-\delta - LD_G^{-c}$.
\end{proof}

\section{Additional Empirical Results}
We show the detailed empirical results on the MNIST and CIFAR10 datasets in \Cref{tab:mnist} and \Cref{table:cifar} respectively, and are supplemental to the results obtained in Section \ref{sec:experiments}. All bounds are shown on a logarithmic scale. We compare our bounds with some standard norm-based generalization bounds of \citet{neyshabur2015norm},\citet{Bartlett2017}, and \citet{Neyshabur2017}. For comparing our bound on MNIST, we use an MLP with hidden dimensions 500, 784, 1000, 1500, 2000, and 2500 where the depth is kept constant. The model training details are detailed in \Cref{sec:experiments}. We see that across different hidden dimensions, our generalization bounds are consistently better than other generalization bounds. Over different hidden dimensions, the true generalization error seems to remain relatively stable. Relative to other bounds, our generalization bound seems more stable than other bounds, increasing at a lesser rate than other bounds as the hidden dimension increases. However, we unfortunately do not capture the seeming independence between the hidden dimension and true generalization error. For our bound, this is due to the fact that the margin of the trained model is not increasing enough with the increase in model size. Our bound predicts the generalization error of pruning in terms of the margin of the original model. If the margin of the original model does not increase while the model's size increases, our bound will increase. Therefore, this bound needs more information to capture generalization more accurately.

Additionally, we show the dependence of our bound on the number of training epochs in Figure \ref{fig:overtraining}, where we take the original MLP of depth $5$ and compare how our generalization bound and the true generalization error change over epochs. It is to be noted that our bound is scaled to be in the same range as the true generalization error. There are differences between the curves, indicating our bound needs to include additional information needed to explain generalization fully. Our bound does decrease over time as the margin increases, mimicking the true generalization error. The main interesting difference is that the downward curve for our bound occurs in two places. The first drop in our generalization bound happens only because of the drop of the generalization error, but the margin is still negative. Once the margin becomes positive and increases, our bound slowly begins to decrease. At this point, however, the true generalization error seems to have already reached its minimum. 

In \Cref{table:cifar}, all the insights noticed on the MNIST dataset seem to extend to CIFAR10. Our generalization bound is tighter than existing state–of–the–art norm-based generalization bounds. Indeed our error is orders of magnitude tighter than other generalization bounds. We note that while all generalization bounds here are far worse than the MNIST counterparts, our generalization bound most accurately reflects the true jump in generalization error between MNIST and CIFAR10. For both ours and the true generalilzatiion error, the bounds differ by one order of magnitude between MNIST and CIFAR10. However, the other bounds differ by at least $7$ orders of magnitude. Our bound seems to capture more of the behavior of the true generalization error than these other bounds in this regard. 
\begin{table}[h]
    \centering
        \begin{tabular}{lllllll}
    \toprule
     {\centering METHOD}    & \multicolumn{1}{p{1cm}}{\centering MNIST \\ 500}  & \multicolumn{1}{p{1cm}}{\centering MNIST \\ 784} & \multicolumn{1}{p{1cm}}{\centering MNIST \\ 1000} & \multicolumn{1}{p{1cm}}{\centering MNIST \\ 1500}  &\multicolumn{1}{p{1cm}}{\centering MNIST \\ 2000}  & \multicolumn{1}{p{1cm}}{\centering MNIST \\ 2500} \\
    \midrule
    Neyshabur 2015 & 22.29 & 23.56 & 24.42 & 25.12 & 27.03 & 27.72    \\
    Neyshabur 2017 & 17.91 & 18.34 & 18.70 & 18.81 & 21.50 &  21.57      \\
    Bartlett 2017  & 11.51  & 11.68 & 11.87 & 11.70 & 13.96 & 13.81 \\
    Ours     & 3.36  & 3.77 & 4.00 & 4.40 & 4.73 & 4.96  \\
    True error & -3.76  & -3.84  & -3.80 & -3.85 & -3.86 & -3.87   \\
    \bottomrule
  \end{tabular}
    \caption{Generalization bounds on logarithmic scale w.r.t. MNIST using MLP of varying dimensions.}
      \label{tab:mnist}
  \centering
    \label{tab:mnist}
    
\end{table}

\begin{table}[H]
\centering
\begin{tabular}{ll}
    \toprule
     METHOD    & CIFAR10 \\
    \midrule
    Neyshabur 2015 & 33.19   \\
    Neyshabur 2017 &  30.10     \\
    Bartlett 2017  &  22.40 \\
    Ours     & 4.68 \\
    True error & -2.41 \\
    \bottomrule
  \end{tabular}
    \caption{Comparison of different generalization bounds on the CIFAR10 dataset on a logarithmic scale}
    \label{table:cifar}
 \centering
    \label{tab:cifar}
\end{table}


\begin{figure}[H]
     \centering
         \includegraphics[width=.6\textwidth]{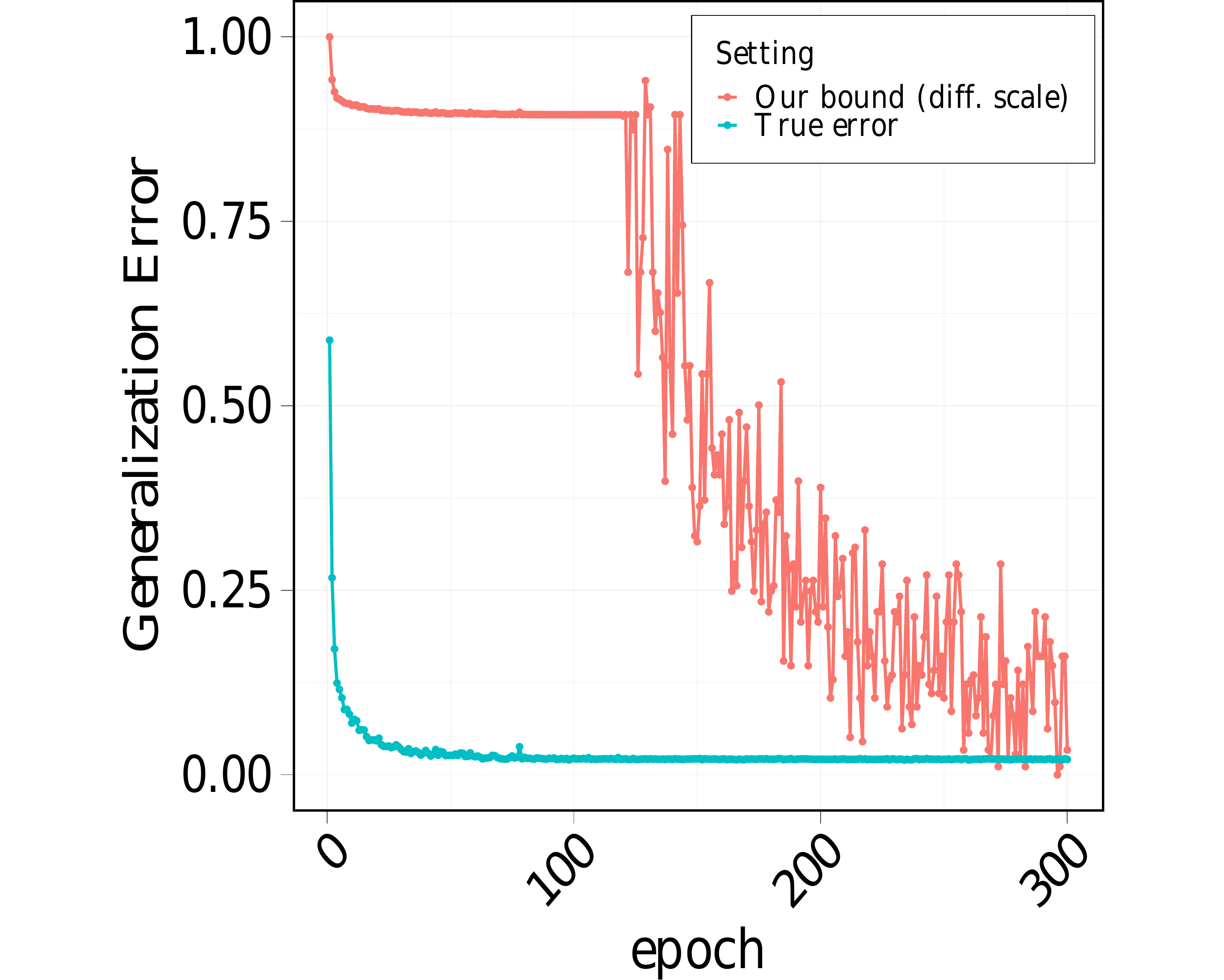}
         \caption{Comparing bounds on MNIST.}
         \label{fig:overtraining}
\end{figure}

\end{document}